\newtheorem{theorem}{Theorem}
\newtheorem{corollary}{Corollary}
\newtheorem{assumption}{Assumption}
\newtheorem{proposition}{Proposition}
\newenvironment{proof_sketch}{%
\proof}{\endproof}
\renewcommand*\env@matrix[1][*\c@MaxMatrixCols c]{%
  \hskip -\arraycolsep
  \let\@ifnextchar\new@ifnextchar
  \array{#1}}
\let\classAND\AND
\let\AND\relax
\let\AND\classAND
\title{Inverse Reinforcement Learning with Multiple Planning Horizons}
\author{
  Jiayu Yao \\
  \texttt{jiayu.yao@gladstone.ucsf.edu} \\
  Gladstone Institutes\\
  \And
  Weiwei Pan\\
  \texttt{weiweipan@g.harvard.edu} \\
  SEAS, Harvard University \\
  \And 
  Finale Doshi-Velez \\
  \texttt{finale@seas.harvard.edu} \\
  SEAS, Harvard University 
  \And
  Barbara E Engelhardt\\
  \texttt{bengelhardt@stanford.edu} \\
  Gladstone Institutes \& Stanford University \\
}
\begin{document}

\maketitle

\begin{abstract}
We study an inverse reinforcement learning (IRL) problem where the experts are planning \textit{under a shared reward function but with different, unknown planning horizons}. Without the knowledge of discount factors, the reward function has a larger feasible solution set, which makes it harder for existing IRL approaches to identify a reward function. 
To overcome this challenge, we develop two algorithms that can learn a global multi-agent reward function with agent-specific discount factors that reconstruct the expert policies. We characterize the feasible solution space of the reward function and discount factors for both algorithms and demonstrate the generalizability of the learned reward function across multiple domains. 
\end{abstract}

\section{Introduction}
\label{subsec:intro}
Designing reward functions in reinforcement learning (RL) that appropriately capture key aspects of a real-world task can be difficult in domains such as healthcare~\citep{riachi2021challenges} and finance~\citep{charpentier2021reinforcement}. Inverse reinforcement learning (IRL) addresses this challenge by learning a reward function from expert demonstrations for a given task. The learned reward serves as a succinct description of the task and then can be transferred to similar tasks.
Recent research in IRL has focused on the additional challenge of learning the reward function from heterogeneous expert behaviors,
for example, where 
expert demonstrations vary in quality~\citep{shiarlis2016inverse,brown2019extrapolating}, or where each expert is optimizing a different reward function~\citep{mendez2018lifelong,gleave2018multi,yu2019meta}.

In this work, we focus on the setting where the expert behaviors vary because each expert is optimizing for \textit{a different planning horizon, using the same reward function}. 
In RL, the planning horizon is encoded in the discount factor, which discounts future (expected) rewards attained by a given policy. 
A small discount factor corresponds to a short planning horizon, implying that the expert prioritizes short-term goals, whereas a large discount factor corresponds to a long planning horizon. 
For example, in an intensive care unit, the ventilation weaning practice is influenced by specific unit protocols and team or individual physician preferences~\citep{kapnadak2015clinical}. Here, a more aggressive weaning practice implies a small discount factor, since it prioritizes immediate changes in the patient's state. In a mobile health application that helps users manage their own wellness, users may select the planning horizon by choosing to set up short-term (< 1 year), long-term (1-2 years), or maintenance (> 2 years) health goals~\citep{dicianno2017design}.

Existing IRL work often first chooses discount factors based on domain knowledge, and then learns the reward function by fixing these discount factors~\citep{ng2000algorithms, ziebart2010modeling, ramachandran2007bayesian}. 
For entropy-regularized Markov decision processes (MDPs), when observing multiple experts and when the true discount factors of the experts are known, prior work shows that IRL can identify the true reward function up to a constant ~\citep{cao2021identifiability,rolland2022identifiability}. However, in most applications, we do not know the set of true discount factors a priori -- this set must be learned alongside the reward function.
Unfortunately, when the discount factors are misspecified or unknown, current IRL literature does not address the inference, nor the identifiability of the reward function.  
In this work, we fill this gap in the literature and study settings where both the discount factors (one per expert) and the reward function are unknown.

In this work, we assume that a global reward function is shared among the experts as the first step in understanding how unknown discount factors pose challenges to existing IRL work.  
This setting corresponds to experts having a shared goal, but different attention to the time needed to achieve them.  
We first provide analyses on the hardness of adapting existing IRL approaches to our problem setting, where both the set of discount factors and the reward must be inferred from the data. In particular, we consider two classes of popular IRL approaches: linear programming IRL (LP-IRL;~\cite{ng2000algorithms}) and max causal entropy IRL (MCE-IRL;~\cite{ziebart2010modeling}). We show that naive extensions of LP-IRL admit undesirable feasible solutions such as the degenerate solution wherein multiple experts are assigned the same discount factor. 

In the case of MCE-IRL, whose Lagrangian dual problem can be interpreted as maximum likelihood IRL (ML-IRL)~\citep{zeng2022maximum}, we show that when discount factors are unknown, strong duality does not hold for MCE-IRL and ML-IRL. Thus, solving ML-IRL does not guarantee convergence towards a feasible solution that reconstructs the expert policies. 
Furthermore, we show that for MCE-IRL, when the discount factors are misspecified, then either: (1) there does not exist a reward function that recovers the expert policy, or (2) there exists a unique reward function (up to a constant), which recovers the expert policy but may not be the true reward function.  
Fortunately, in Section \ref{sec:id_and_gen}, we observe that, in practice, when there are more than two experts, the set of reward functions that recovers the expert policy is non-empty only for a small set of discount factors. That is, in many applications, both the discount factors and the reward function are identifiable. 

Finally, to address the failure modes of naive adaptations of LP-IRL and MCE-IRL, we develop two novel algorithms to learn a global multi-agent reward function with agent-specific discount factors based on LP-IRL and MCE-IRL. We (1) characterize the feasible solution space of the reward function and discount factors for both algorithms, and (2) empirically demonstrate the generalizability of the learned reward function across multiple domains.

\section{Related Work}
\textbf{IRL with homogeneous demonstrations.}
Previous IRL work focuses on identifying a reward function that explains expert behavior when the demonstrations are generated by a single expert. For example, max-margin IRL methods~\citep{ng2000algorithms,abbeel2004apprenticeship} seek a reward function that maximally separates the optimal policy and the second-most optimal policy. Max entropy IRL methods~\citep{ziebart2008maximum,ziebart2010modeling} estimate a reward function that maximizes the likelihood of the expert demonstrations. Bayesian IRL methods~\citep{ramachandran2007bayesian,jin2010gaussian} use prior knowledge to infer a posterior distribution over all possible reward functions. 
However, when given trajectories from multiple experts, each of these IRL approaches learns a \textit{separate} reward function for each expert by default, which is data inefficient. In our work, we focus on a common scenario where the \textit{global} reward function is shared by all experts with discount factors specific to each expert; these different discount factors lead to different optimal policies.

\textbf{IRL with heterogeneous demonstrations.}
Recent IRL work explores heterogeneous expert demonstrations. 
Some methods study the scenario where expert demonstrations vary in quality.
For example,~\citet{shiarlis2016inverse} learn from
demonstrations of both optimal policies and policies with undesirable behaviors (e.g., violating safety constraints). 
Similarly,~\citet{brown2019extrapolating} assume that one has access to a set of demonstrations ranked by their expected return. 
Other work studies the setting where each expert optimizes for a different task. For example, ~\citet{babes2011apprenticeship} first identify the tasks by clustering the expert demonstrations and then identify a reward function for each task. 
In~\cite{yu2019meta}, the authors use deep latent generative models to capture the shared reward structure of expert demonstrations. Finally,~\citet{mendez2018lifelong} consider a lifelong learning setting where the agent faces a sequence of similar tasks and optimizes overall performance. 
In contrast, we consider the scenario where experts share the same reward function but have different planning horizons. To the best of our knowledge, we are the first to study this type of IRL problem. 

\textbf{Identifiability
in IRL with respect to the reward function and discount factors.} Recent work provides identifiability analysis on the reward function when the true discount factors are known for entropy-regularized MDPs~\citep{cao2021identifiability,rolland2022identifiability}. Specifically, the authors show that, given optimal policies from two distinct discount factors and a shared reward function, one can identify the true reward function up to a constant.
However, their analysis assumes that the true discount factors are provided, which is not realistic. In this work, we show that given misspecified discount factors, the feasible reward function set may not include the true reward function under the same rank conditions in~\cite{cao2021identifiability}. 
We develop algorithms to recover the set of discount factors and the reward function for settings where both are unknown.

\section{Problem Setting}\label{sec:prob_set}
\paragraph{Markov decision processes (MDPs).}
Consider an MDP, $\mathcal{M^*} = (\mathcal{S},\mathcal{A},r^*,T,\gamma^*)$,  where $\mathcal{S}$ is a finite state space, $\mathcal{A}$ is a set of discrete actions, $T: \mathcal{S}\times\mathcal{A}\times\mathcal{S}\rightarrow[0,1]$ is the transition dynamics describing the probability of reaching the next state $s'$ by taking action $a$ in the current state $s$, $r^*(s)$ is an action-independent reward function, and $\gamma^*\in[0,1]$ is the discount factor that controls the weight of the future reward. 
For standard MDPs, the optimal policy is defined as,
\begin{equation}\label{eqn:opt_pi_standard}
    \pi^*(a|s) = \arg\max_{a} Q^{r^*,\gamma^*}_\pi(s,a)  = \arg\max_{a}
    \left.\mathbb{E}_\pi\left[\sum_{t=0}^\infty {\gamma^*}^{t} r^*(S_{t+1})\right\vert S_t = s, A_t = a \right].
\end{equation}
For entropy-regularized MDPs, the optimal policy is defined as,
\begin{equation}
\label{eqn:opt_pi_soft}
    \tilde{\pi}^*(a|s) = \arg\max_{\pi} \tilde{Q}^{r^*,\gamma^*}_\pi(s,a)  = \arg\max_{\pi} 
    \left.\mathbb{E}_\pi\left[\sum_{t=0}^\infty {\gamma^*}^{t} \left[r^*(S_{t+1})+\lambda\mathcal{H}(\pi(\cdot|S_t))\right]\right\vert S_t = s, A_t = a\right],
\end{equation}
where $\lambda$ is a temperature parameter with a larger $\lambda$ leading to a more stochastic policy. When $\lambda\rightarrow 0$, we will recover the expert policy of standard MDPs ($\pi^* = \tilde{\pi}^*$). 

\paragraph{Multi-planning horizon IRL (MP-IRL).} We assume that we are given an MDP, $\mathcal{M}^*\backslash \{r^*,\gamma^*\}$, with an unknown reward function and discount factor. We observe demonstrations from a set of $K$ expert policies, each optimized under a shared global reward function $r^*$, and transition dynamics $T$, but using distinct discount factors ($\gamma^*_i\neq \gamma^*_j$ for $i\neq j$). We denote the set of distinct discount factors by $\Gamma^*=\{\gamma^*_k\}_{k=1}^K$.
We assume that the expert policies are solved using Eq.~\ref{eqn:opt_pi_standard} or ~\ref{eqn:opt_pi_soft} given appropriate contexts, and denote the set of expert policies for standard or entropy regularized MDPs by $\Pi^*=\{\pi^*_k\}_{k=1}^K$ or $\tilde\Pi^*=\{\tilde{\pi}^*_k\}_{k=1}^K$, respectively.

In MP-IRL,
we wish to find a reward function $r$ and a set of distinct discount factors $\Gamma=\{\gamma_k\}_{k=1}^K$ such that each expert policy 
remains optimal under the reconstructed MDP $\mathcal{M} = (\mathcal{S},\mathcal{A},r,T,\gamma_k)$. 

\section{Algorithms for MP-IRL: LP-IRL}\label{sec:lp_irl}
In this section, we introduce a popular class of IRL algorithms, linear programming IRL (LP-IRL), for the single known discount factor IRL setting. 
We explain how naive extensions of LP-IRL to the MP-IRL setting fail. Finally, we present a novel algorithm, multi-planning horizon LP-IRL (MPLP-IRL), that extends LP-IRL to 
jointly learn a set of distinct discount factors and a global reward function.

In the following, we assume that the expert policies are obtained by solving standard MDPs, and are denoted by, $\Pi^*=\{\pi^*_k\}_{k=1}^K$.
In~\citet{ng2000algorithms}, the authors assume that expert demonstrations are obtained from a single known planning horizon. They learn a reward function from expert demonstrations by solving the following optimization problem,
\begin{subequations}
 \label{eqn:lp_irl_single}
\begin{align}
    \max_{r} &\sum_{s\in S} \min_{a\in \mathcal{A}\setminus \pi^*(s)} \{Q^{r,\gamma}_{\pi^*}(s,\pi^*(s)) - Q^{r,\gamma}_{\pi^*}(s,a)\}-\lambda\|r\|_1 \label{eqn:lp_irl_obj}\\
    \text{subject to } & Q^{r,\gamma}_{\pi^*}(s,\pi^*(s)) - Q^{r,\gamma}_{\pi^*}(s,a) \geq 0 \quad \text{for } s\in\mathcal{S},a\in \mathcal{A}\setminus \pi^*(s)\label{eqn:lp_irl_constraints}\\
    & |r|\preceq r_{\max}
\end{align}
\end{subequations}
where the $l_1$ norm penalty, $\lambda\|r\|_1$, regularizes the sparsity of the reward function. 
In the above optimization problem, constraints (\ref{eqn:lp_irl_constraints}) ensure that $\pi^*$ is optimal under the inferred reward function $r$. The reward function solution to Eq.~\ref{eqn:lp_irl_obj} maximizes the sum of the differences of the $Q$-functions between the best and the next-best action over all states. 
The LP-IRL problem in Eq.~\ref{eqn:lp_irl_single} can be solved with linear programming (LP)~\citep{ng2000algorithms}.
We extend the LP-IRL approach to expert demonstrations with multiple planning horizons. As such, we make an additional assumption: 
\begin{assumption}\label{assump_1}
    For any two distinct expert policies $\pi^*_i,\pi^*_j\in \Pi^*,$ optimized under $\gamma^*_i,\gamma^*_j$, respectively, there exists a
    state $s\in\mathcal{S}$ such that $Q_{\pi_i^*}^{r^*,\gamma^*_i}(s,\pi_i^*(s)) > Q_{\pi_i^*}^{r^*,\gamma^*_i}(s,\pi_j^*(s))$. 
\end{assumption}

The above assumption ensures that each expert policy is uniquely optimal in at least one of the states, 
i.e., no two expert policies are equally optimal in all states. 
Since we assume a common reward function, this assumption implies that $\gamma_i\neq\gamma_j$ for $i\neq j$. In other words, our assumption states that the discount factors lead to distinct policies. 
This assumption is necessary because, otherwise, we cannot distinguish the expert policies from the observed data and the reward function is non-identifiable. 
In Appendix~\ref{apdx_sec:toy_domain}, we also show that in practice, Assumption~\ref{assump_1} is rarely violated.

\subsection{Naive Extension of LP-IRL Fails}
We first note that naive extensions of the formulation in Eq.~\ref{eqn:lp_irl_single} to the MP-IRL setting return optimal solutions that violate Assumption~\ref{assump_1}.  The naive solution would be to simply maximize the sum of the differences of the $Q$-functions over all experts and states, i.e.,
\begin{subequations}
\label{eqn:naive_lp_irl_multi_gamma}
    \begin{align}
    \max_{\Gamma\in[0,1]^K} \max_{r} & \sum_{k\in[K]}\sum_{s\in\mathcal{S}}\min_{a\in A\setminus \pi^*_k(s)} \{Q^{r,\gamma_k}_{\pi^*_k}(s,\pi^*_k(s)) - Q^{r,\gamma_k}_{\pi^*_k}(s,a)\}-\lambda\|r\|_1 \label{eqn:naive_lp_irl_obj_multi_gamma}\\
    \text{subject to } &Q^{r,\gamma_k}_{\pi^*_k}(s,\pi^*_k(s)) - Q^{r,\gamma_k}_{\pi^*_k}(s,a)\geq 0 \quad \forall s\in\mathcal{S},\ a\in \mathcal{A}\backslash \pi^*_k(s), k\in [K]\\
    & |r|\preceq r_{\max}
\end{align}
\end{subequations}

In Appendix~\ref{sec:naive_lp}, we show that without further constraining the discount factors, there are situations where the above optimization problem assigns the same discount factors to multiple experts.
This implies that,  under the learned discount factors, some expert policies are not distinguishable from each other under any reward function, which violates Assumption~\ref{assump_1}.
The expert policies should be sufficiently different in terms of $Q$-functions under the learned reward function and discount factors. We modify this naive optimization problem to address this problem below in Section \ref{subsec:lp_irl_multiple_gammas}.

\subsection{Multi-planning horizon LP-IRL (MPLP-IRL)}\label{subsec:lp_irl_multiple_gammas}
To avoid undesirable global optimum in Appendix~\ref{sec:naive_lp}, we need to incorporate the constraints in Assumption~\ref{assump_1}. This is challenging because optimization problems that include strict inequality constraints may not have attainable optimal solutions. 
We avoid incorporating strict inequalities by first selecting states where the expert policies are distinguishable and then 
maximizing the differences of $Q$-functions only on those states.
The proposed MPLP-IRL problem is as follows:
\begin{subequations}\label{eqn:multi_gamma_cvx_irl}
\begin{align}
    \max_{\Gamma\in[0,1]^K}\max_{r} & \min_{\substack{k\in[K], (s,a)\in \Omega_k}} \{Q^{r,\gamma_k}_{\pi^*_k}(s,\pi^*_k(s)) - Q^{r,\gamma_k}_{\pi^*_k}(s,a)\}-\lambda\|r\|_1 \label{eqn:lp_irl_obj_multi_gamma}\\
    \text{subject to } &Q^{r,\gamma_k}_{\pi^*_k}(s,\pi^*_k(s)) - Q^{r,\gamma_k}_{\pi^*_k}(s,a)\geq 0 \quad \forall s\in\mathcal{S},\ a\in \mathcal{A}\backslash \pi^*_k(s), k\in [K]\label{eqn:lp_irl_obj_const1}\\
    & |r|\preceq r_{\max}, \label{eqn:lp_irl_obj_const2}
\end{align}
\end{subequations}
where $\Omega_k$ is a set of state-action tuples ensuring that there exists a feasible reward solution $r$ such that, for any $(s,a)\in\Omega_k,\ Q^{\gamma_k}_{\pi^*_k}(s,\pi^*_k(s)) - Q^{\gamma_k}_{\pi^*_k}(s,a)>0$. 
$\Omega_k$ is constructed by solving another LP problem which
distinguishes the optimal action from other actions on as many states as possible.

\begin{theorem}
\label{thm:lp}
For a set of arbitrary distinct discount factors, $\Gamma$ ($\gamma_i\neq\gamma_j$ for $i\neq j$), let $\{z^*_k\}_{k=1}^K \ (z^*_k\in\mathbb{R}^{|\mathcal{S}|\times(|\mathcal{A}-1|)})$ be the optimal solution to the following LP problem,
\begin{subequations}
\label{eqn:omega_k}
\begin{align}
    \min_{r,\{z_k\}} & \sum_{k=1}^K\mathbf{1}^\intercal z_k \\
    \textup{subject to } 
    & Q^{r,\gamma_k}_{\pi^*_k}(s,\pi^*_k(s)) - Q^{r,\gamma_k}_{\pi^*_k}(s,a)+z_k(s,a)\geq 1\ \forall s\in\mathcal{S},\ a\in \mathcal{A}\backslash \pi^*_k(s), k\in [K]\\
     & Q^{r,\gamma_k}_{\pi^*_k}(s,\pi^*_k(s)) - Q^{r,\gamma_k}_{\pi^*_k}(s,a)\geq 0\ \forall s\in\mathcal{S},\ a\in \mathcal{A}\backslash \pi^*_k(s), k\in [K]\\
    & z_{k}\geq 0\  \forall k\in [K],
\end{align}
\end{subequations}
where $z_k(s,a)$ denotes the element of vector $z_k$ corresponding to the state-action tuple $(s,a)$. 
There exists a feasible reward solution $r$ that satisfies Assumption~\ref{assump_1} if, for any pair of expert policies $\pi^*_i,\ \pi^*_j$ ($i\neq j$), there exists a state $s$ such that $z^*_i(s,\pi^*_j(s))=0$.
\end{theorem}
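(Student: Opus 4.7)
The plan is to exhibit the reward component of the LP's optimal solution itself as the witness. First I would observe that (\ref{eqn:omega_k}) is manifestly feasible --- for instance, $r = 0$ together with $z_k(s,a) = 1$ for every $k, s, a$ satisfies all three constraint families trivially --- and that the objective $\sum_k \mathbf{1}^\intercal z_k$ is bounded below by zero, so an optimal solution $(r^*, \{z^*_k\})$ exists (assuming the bound $|r| \preceq r_{\max}$ is appended to the LP, or enforced by the rescaling argument described at the end).

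The core of the proof is a direct reading of the LP's slack structure. Whenever $z^*_k(s,a) = 0$ at the optimum, the first family of constraints forces
\[
Q^{r^*, \gamma_k}_{\pi^*_k}(s, \pi^*_k(s)) - Q^{r^*, \gamma_k}_{\pi^*_k}(s, a) \geq 1 > 0.
\]
Given the hypothesis, for every pair $i \neq j$ there is some state $s_{ij}$ with $z^*_i(s_{ij}, \pi^*_j(s_{ij})) = 0$; instantiating the previous display with $k = i$, $s = s_{ij}$, $a = \pi^*_j(s_{ij})$ yields exactly the strict inequality required by Assumption~\ref{assump_1} for the pair $(\pi^*_i, \pi^*_j)$ at $s_{ij}$. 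Feasibility of $r^*$ for the MPLP-IRL program (\ref{eqn:multi_gamma_cvx_irl}) then comes for free, because the second family of constraints in (\ref{eqn:omega_k}) is identical to (\ref{eqn:lp_irl_obj_const1}).

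The main obstacle I anticipate is handling the magnitude bound $|r| \preceq r_{\max}$ of (\ref{eqn:lp_irl_obj_const2}), which is absent from (\ref{eqn:omega_k}) as written. Because the $Q$-functions are linear in $r$, rescaling $r^* \mapsto \alpha r^*$ for any $\alpha \in (0, 1]$ multiplies every gap $Q^{r^*, \gamma_k}_{\pi^*_k}(s, \pi^*_k(s)) - Q^{r^*, \gamma_k}_{\pi^*_k}(s, a)$ by $\alpha$, so nonnegativity is preserved everywhere and strict positivity is preserved at each witness state $s_{ij}$. Choosing $\alpha$ small enough that $|\alpha r^*| \preceq r_{\max}$ therefore delivers a reward that is simultaneously MPLP-feasible and compatible with Assumption~\ref{assump_1}. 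Apart from this bookkeeping, the argument is essentially a restatement of the LP's complementary-slackness behavior at the coordinates $(s, \pi^*_j(s))$ that compare one expert's policy against another's.
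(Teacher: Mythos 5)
Your argument is correct, and it is a genuinely more direct route than the paper's. You read the conclusion straight off the LP's constraints: wherever $z^*_i(s,\pi^*_j(s))=0$, the first constraint family forces $Q^{r^*,\gamma_i}_{\pi^*_i}(s,\pi^*_i(s)) - Q^{r^*,\gamma_i}_{\pi^*_i}(s,\pi^*_j(s)) \geq 1 > 0$, so the LP's own optimal reward (after the harmless rescaling you describe to meet $|r|\preceq r_{\max}$, valid by linearity of the $Q$-functions in $r$) witnesses Assumption~\ref{assump_1}; this proves the stated ``if'' direction with no further machinery. The paper instead takes a longer path: it first shows that the slack-variable LP in Eq.~\ref{eqn:omega_k} is equivalent to maximizing the \emph{number} of state--action pairs with strictly positive $Q$-gap (an $\ell_0$-type objective over $\{r : Wr \geq 0\}$), characterizes the optimal $z^*$ as the indicator of the pairs that cannot be strictly separated, and only then draws the same final conclusion. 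What that extra work buys is the converse-flavored information used elsewhere in the section: because the LP provably separates the \emph{maximum} possible set of pairs, a failure of the condition $z^*_i(s,\pi^*_j(s))=0$ certifies that \emph{no} reward can satisfy Assumption~\ref{assump_1} under the given $\Gamma$, which is what licenses declaring the inner problem of Eq.~\ref{eqn:multi_gamma_cvx_irl} infeasible and taking $\Omega_k=\{(s,a):z^*_k(s,a)=0\}$ as the largest valid constraint set. Your proof establishes the theorem as literally stated but not that stronger maximality/infeasibility claim; if you wanted the full content of the proof sketch, you would still need something like the paper's equivalence argument.
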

\begin{proof_sketch} The optimization problem in Eq.~\ref{eqn:omega_k} is equivalent to 
\begin{equation*}
\max_r \sum_{k=1}^K \sum_{s\in\mathcal{S}}\sum_{a\in\{\mathcal{A}\setminus \pi^*_k(s)\}} \mathbbm{1}_{\{Q^{r,\gamma_k}_{\pi^*_k}(s,\pi^*_k(s)) - Q^{r,\gamma_k}_{\pi^*_k}(s,a)>0\}},
\end{equation*} 
which maximizes the number of state-action pairs where the $Q$-function difference is positive (where the expert policy is strictly better). $z^*_k(s,a)=1$ if the optimal solution cannot distinguish the optimal action, $\pi^*_k(s)$, from action $a$ on state $s$.
If under the optimal solution,
there is a pair of expert policies $\pi^*_i,\pi^*_j$ that cannot be distinguished on any states (i.e., $(s,\pi^*_i(s))\notin\Omega_j$ for $\forall s\in\mathcal{S}$), we fail to find a reward solution that does not violate Assumption~\ref{assump_1}, and we claim the inner optimization problem in Eq.~\ref{eqn:multi_gamma_cvx_irl} to be infeasible under $\Gamma$. Otherwise, we have that $\Omega_k=\{(s,a)|z^*_k(s,a)=0\}$. 
See full proof in Appendix~\ref{apdx:thm_lp}.
\end{proof_sketch}

Intuitively, our proposed optimization problem seeks a reward function that maximizes the minimal non-zero difference of $Q$-functions over states where 
 expert policies are distinguishable, thus encouraging expert policies to be sufficiently different and ensuring the satisfaction of Assumption~\ref{assump_1}. 

\subsection{Inference for MPLP-IRL}\label{subsec:lp_irl_bo}
While the objective function in~\ref{eqn:lp_irl_obj_multi_gamma} gives us the desired solutions, it is not convex with respect to the discount factors $\Gamma$. To solve for the global optima, we perform a bi-level optimization. Denote the lower-level objective function as 
 $$g(\Gamma,r) = \min_{{k\in[K],(s,a)\in\Omega_k}} \{Q^{r,\gamma_k}_{\pi^*_k}(s,\pi^*_k(s)) - Q^{r,\gamma_k}_{\pi^*_k}(s,a)\}-\lambda\|r\|_1.$$
We rewrite the optimization problem in Eq.~\ref{eqn:multi_gamma_cvx_irl} as,
\begin{equation}
    \max_{\Gamma\in[0,1]^K} g^*(\Gamma),
    \quad\text{where}\quad g^*(\Gamma)=\max_r g(\Gamma,r)\ \text{subject to constraints as in~\ref{eqn:lp_irl_obj_const1},~\ref{eqn:lp_irl_obj_const2}}.
\label{eqn:upper_level_obj} 
\end{equation}
Given any $\Gamma\in[0,1]^K$, the lower-level objective function $g$ can be solved analytically with LP. The upper-level optimization problem can be solved by performing a grid search over the space $[0,1]^K$. However, the computation complexity of grid search increases exponentially with respect to the total number of expert policies.
To improve computation efficiency, 
we use Bayesian optimization (BO) techniques, which are suitable for nonconvex objective functions that are expensive to evaluate. 
See full algorithm details in Appendix~\ref{apdx_sec:alg}.

\section{Algorithms for multi-planning horizon IRL: MCE-IRL}\label{sec:mce_irl}
Although LP-IRL is easy to solve, we cannot apply it to domains with continuous state spaces as this will result in an infinite number of constraints. In this section, we switch our focus to MCE-IRL, which is a popular class of IRL algorithms for continuous domains. 
As is standard in this setting, we assume that the expert policies, $\tilde{\Pi}^*=\{\tilde{\pi}^*_k\}_{k=1}^K$, are solved with entropy-regularized MDPs and are optimizing a linear reward function, 
\[r_{\theta^*}(s) = {\theta^*}^\intercal\phi^{s}, \theta^*,\phi^s\in\mathbb{R}^{|S|}.\]

We first introduce notations used for MCE-IRL. For any state $s\in\mathcal{S}$ and action $a\in\mathcal{A}$, given an initial state distribution $\rho_0$,
the expected discounted state-action visitation count, and the expected discounted state visitation count  of policy $\tilde{\pi}$ are defined as,
$$\mu^\gamma_{\displaystyle{\tilde{\pi}}}(s,a)=\mathbbm{E}_{\displaystyle{\tilde{\pi}}}\left[\sum_{t=0}^\infty \gamma^t\mathbbm{1}_{\{S_t = s,A_t = a\}}\right],\ \text{and}\ \mu^\gamma_{\displaystyle{\tilde{\pi}}}(s)=\mathbbm{E}_{\displaystyle{\tilde{\pi}}}\left[\sum_{t=0}^\infty \gamma^t\mathbbm{1}_{\{S_t = s\}}\right]=\sum_a \mu^\gamma_{\displaystyle{\tilde{\pi}}}(s,a),$$
respectively.
 We further define the expected feature count of policy $\tilde{\pi}$ as 
 $$f^\gamma_{\displaystyle{\tilde{\pi}}} = \mathbb{E}_{\displaystyle{\tilde{\pi}}}\left[\sum_{t=0}^\infty \gamma^t \phi(S_t=s)\right]\in\mathbb{R}^{|S|}.$$ 
Note that, the expected feature count can also be written as $f^\gamma_{\displaystyle\tilde{\pi}}=\sum_{s}\mu^{\gamma}_{\displaystyle\tilde{\pi}}(s)\phi^{s}$.
 Assuming that the discount factor $\gamma$ is given, MCE-IRL solves the following constrained optimization problem:
 \begin{subequations}
 \label{eqn:mce_irl_single}
\begin{align}
\max_{\displaystyle{\mu_{\tilde{\pi}}^\gamma(s,a)}} &\quad  \mathcal{H}^\gamma_{\displaystyle{\tilde{\pi}}}
=\mathbb{E}_{\displaystyle{\tilde{\pi}}}\left[\sum_{t=0}^\infty -\gamma^t \log\tilde{\pi}(A_t|S_t)\right]
=\sum_{(s,a)} -\log\left(\frac{\mu^\gamma_{\displaystyle{\tilde{\pi}}}(s,a)}{\sum_a \mu^\gamma_{\displaystyle{\tilde{\pi}}}(s,a)}\right)\mu^\gamma_{\displaystyle{\tilde{\pi}}}(s,a)\label{eqn:mce_irl_single_obj}\\
\text{subject to} &\quad f_{\displaystyle{\tilde{\pi}}}^\gamma = f_{\displaystyle{\tilde{\pi}^*}}^\gamma\label{eqn:mce_irl_single_const1}\\
&\quad \sum_a \mu_{\displaystyle{\tilde{\pi}}}^\gamma(s',a) = \rho_0(s')+\gamma\sum_s\sum_a T(s'|s,a)\mu_{\displaystyle{\tilde{\pi}}}^\gamma(s,a) \quad \forall s'\in\mathcal{S}\label{eqn:mce_irl_single_const2}\\
&\quad\mu_{\displaystyle{\tilde{\pi}}}^\gamma(s,a)\geq 0 \quad \forall (s,a)\in(\mathcal{S}\times\mathcal{A}).
\label{eqn:mce_irl_single_const3}
\end{align}
 \end{subequations}
%
MCE-IRL~\citep{ziebart2010modeling} identifies a reward function by the principle of maximum causal entropy (Eq.~\ref{eqn:mce_irl_single_obj}) while matching the feature expectations between the expert and the learned policy (Eq.~\ref{eqn:mce_irl_single_const1}). The Bellman flow constraints in Eqs.~\ref{eqn:mce_irl_single_const2}-\ref{eqn:mce_irl_single_const3} ensure that $\mu_{\displaystyle\tilde{\pi}}^\gamma(s,a)$ are state-action visitation count of a valid stochastic policy $\tilde{\pi}$ with discount factor $\gamma$. 
When the reward function is linear,~\citet{zeng2022maximum} establish a strong duality between the MCE-IRL problem
 (Eq.~\ref{eqn:mce_irl_single}) and its Lagrangian dual problem, which is equivalent to the following ML-IRL problem:
\begin{subequations}
\label{eqn:ml_irl_single}
\begin{align}
\max_{\theta}  &\quad \mathcal{L}(\theta) =\mathbb{E}_{\displaystyle{\tilde{\pi}^*}}\left[\sum_{t=0}^\infty \gamma^t \log\tilde{\pi}^\theta(A_t|S_t)\right]\\
\text{subject to} &\quad \tilde{\pi}^\theta = \arg\max_{\pi} \tilde{Q}^{\theta,\gamma}_\pi(s,a),
\end{align}
\end{subequations}
where $\mathcal{L}(\theta)$ is the expectation of the discounted likelihood of expert trajectories under policy $\tilde{\pi}^\theta$. In practice, one often solves the above ML-IRL problem instead because it is more tractable. 

\subsection{Strong duality does not hold for multi-planning horizon MCE-IRL (MPMCE-IRL)}
\label{sec:mce-irl-theory}
Although extending the formulation in Eq.~\ref{eqn:mce_irl_single} to the MP-IRL setting is straightforward, we cannot solve the Lagrangian dual or the ML-IRL problem as alternative optimization problems. In this section, we show that when the discount factors are unknown, 
strong duality does not hold between the MCE-IRL problem and its Lagrangian dual, which makes the inference less tractable.
We extend the MCE-IRL formulation to the MP-IRL setting by maximizing the sum of causal entropy while matching the expected feature counts for all expert policies:
\begin{subequations}
\small{
\label{eqn:mce_irl_multi}
\begin{align}
\max_{\Gamma\in[0,1]^K}\max_{\{\mu_{\displaystyle\tilde{\pi}_k}^{\gamma_k}(s,a)\}} &\quad  \sum_{k=1}^K \mathcal{H}^{\gamma_k}_{\displaystyle\tilde{\pi}_k}
=\sum_{k=1}^K\sum_{(s,a)} -\log\left(\frac{\mu^{\gamma_k}_{\displaystyle\tilde{\pi}_k}(s,a)}{\sum_a\mu^{\gamma_k}_{\displaystyle\tilde{\pi}_k}(s,a)}\right)\mu^{\gamma_k}_{\displaystyle\tilde{\pi}_k}(s,a)\label{eqn:mce_irl_multi_obj}\\
\text{subject to} &\quad f_{\displaystyle\tilde{\pi}_k}^{\gamma_k} = f_{\displaystyle\tilde{\pi}_k^*}^{\gamma_k}\quad \forall k\in[K]\label{eqn:mce_irl_multi_const1}\\
&\quad \sum_a \mu_{\displaystyle\tilde{\pi}_k}^{\gamma_k}(s',a) = \rho_0(s')+{\gamma_k}\sum_s\sum_a T(s'|s,a)\mu_{\displaystyle\tilde{\pi}_k}^{\gamma_k}(s,a)\quad\forall s'\in\mathcal{S},\ k\in[K]\label{eqn:mce_irl_multi_const2}\\
&\quad\mu_{\displaystyle\tilde{\pi}_k}^{\gamma_k}(s,a)\geq 0 \quad\forall (s,a)\in(\mathcal{S}\times\mathcal{A}),\ k\in[K]\label{eqn:mce_irl_multi_const3}.
\end{align}
}
\end{subequations}
The multi-planning horizon ML-IRL (MPML-IRL) formulation is as follows:
\begin{subequations}
\label{eqn:ml_irl_multi}
\begin{align}
\max_{\Gamma\in[0,1]^K}\max_{\theta}  &\quad \mathcal{L}(\theta,\Gamma) =\sum_{k=1}^K\mathbb{E}_{\displaystyle\tilde{\pi}^*_k}\left[\sum_{t=0}^\infty \gamma^t \log\tilde{\pi}^\theta_k(A_t|S_t)\right]\\
\text{subject to} &\quad \tilde{\pi}^\theta_k = \arg\max_{\pi} \tilde{Q}^{\theta,\gamma_k}_\pi(s,a)    \quad\forall  k\in[K]
\end{align}
\end{subequations}
We first show that  
strong duality does not hold between the MPMCE-IRL problem and its Lagrangian dual problem.
\begin{theorem}
\label{thm:strong_duality}
    Let $\mathcal{H}^*, \mathcal{G}^*$ be the optimal value of the MPMCE-IRL problem in Eq.~\ref{eqn:mce_irl_multi} and its Lagrangian dual problem, respectively. Let $\mathcal{L}^*$ be the optimal value of the MPML-IRL problem in Eq.~\ref{eqn:ml_irl_multi}.
    Then, we have that $\mathcal{G}^* \geq\mathcal{H}^*\geq-\mathcal{L}^*$. 
\end{theorem}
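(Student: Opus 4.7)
The plan is to prove the two inequalities separately: $\mathcal{G}^* \geq \mathcal{H}^*$ via standard weak duality, and $\mathcal{H}^* \geq -\mathcal{L}^*$ by exhibiting the true parameters $(\theta^*, \Gamma^*)$ as a simultaneous witness point for the primal and the likelihood problems.

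For the first inequality I would form the Lagrangian $L(\mu, \theta, V, \Gamma)$ of the MPMCE-IRL primal in Eq.~\ref{eqn:mce_irl_multi} by dualizing the feature matching and Bellman flow constraints with per-expert multipliers $\{\theta_k, V_k\}_{k=1}^K$, keeping $\Gamma \in [0,1]^K$ and $\mu_k \geq 0$ explicit. Then $\mathcal{H}^* = \max_{\Gamma,\mu} \inf_{\{\theta_k,V_k\}} L$ (the inner infimum equals the primal objective on the feasible set and $-\infty$ elsewhere), whereas $\mathcal{G}^* = \inf_{\{\theta_k, V_k\}} \max_{\Gamma, \mu} L$. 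The classical minimax inequality $\sup\inf \leq \inf \sup$ immediately yields $\mathcal{H}^* \leq \mathcal{G}^*$, with no convexity assumption required.

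For the second inequality I would evaluate $\mathcal{L}$ at the true parameters. Since the problem setup assumes each $\tilde\pi_k^*$ is the soft-optimal policy induced by $(\theta^*,\gamma_k^*)$, we have $\tilde\pi_k^{\theta^*} = \tilde\pi_k^*$, so direct substitution into Eq.~\ref{eqn:ml_irl_multi} gives
\begin{equation*}
\mathcal{L}(\theta^*, \Gamma^*) = \sum_{k=1}^K \mathbb{E}_{\tilde\pi_k^*}\!\left[\sum_{t=0}^\infty \gamma_k^{*t} \log \tilde\pi_k^*(A_t|S_t)\right] = -\sum_{k=1}^K \mathcal{H}^{\gamma_k^*}_{\tilde\pi_k^*},
\end{equation*}
the negative of the sum of the experts' own causal entropies. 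Separately, the expert visitation $\mu^{\gamma_k^*}_{\tilde\pi_k^*}$ trivially satisfies feature matching (tautologically), Bellman flow (since it is a valid state-action visitation under $\gamma_k^*$), and nonnegativity, so the tuple $(\Gamma^*, \{\mu^{\gamma_k^*}_{\tilde\pi_k^*}\}_{k=1}^K)$ is feasible for Eq.~\ref{eqn:mce_irl_multi} with primal value $\sum_k \mathcal{H}^{\gamma_k^*}_{\tilde\pi_k^*} = -\mathcal{L}(\theta^*, \Gamma^*)$. Combining this with the maximality bounds $\mathcal{H}^* \geq -\mathcal{L}(\theta^*, \Gamma^*)$ and $\mathcal{L}^* \geq \mathcal{L}(\theta^*, \Gamma^*)$ produces the chain $\mathcal{H}^* \geq -\mathcal{L}(\theta^*, \Gamma^*) \geq -\mathcal{L}^*$.

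The main obstacle is sign-tracking in the substitution step: I have to apply the identity $\mathbb{E}_{\tilde\pi}[\sum_t \gamma^t \log \tilde\pi(A_t|S_t)] = -\mathcal{H}^\gamma_{\tilde\pi}$ cleanly and confirm that the $(\theta^*, \Gamma^*)$ witness serves simultaneously as a feasible point of the MPMCE-IRL primal and as an evaluation point of the MPML-IRL objective whose values negate each other. Once this bookkeeping is in place, the result follows from feasibility and maximality alone, with no appeal to multi-horizon strong duality — indeed, the gap between $\mathcal{H}^*$ and $-\mathcal{L}^*$ arises precisely because MPML-IRL enforces a single shared $\theta$ across experts while MPMCE-IRL's per-expert constraints do not impose this coupling.
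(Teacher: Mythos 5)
Your proof is correct for the statement as written, and it splits the claim the same way the paper does. For $\mathcal{G}^*\geq\mathcal{H}^*$ you and the paper use the identical argument: weak duality via $\sup\inf\leq\inf\sup$, with $\Gamma$ kept on the primal side of the Lagrangian. For $\mathcal{H}^*\geq-\mathcal{L}^*$ your route genuinely differs in the choice of witness. The paper argues that an optimal solution of the MPMCE-IRL problem is a feasible point of the MPML-IRL problem, which implicitly leans on Corollary~\ref{cor:mce_feasibility} to identify the MPMCE optimum with the expert visitations and to extract a reward parameter from the dual multipliers. You instead evaluate both problems at the ground-truth pair $(\theta^*,\Gamma^*)$, which exists by the problem setup (linear reward, soft-optimal experts), and observe that the expert visitations $\{\mu^{\gamma^*_k}_{\tilde\pi^*_k}\}$ are primal-feasible with value $\sum_k\mathcal{H}^{\gamma^*_k}_{\tilde\pi^*_k}=-\mathcal{L}(\theta^*,\Gamma^*)\geq-\mathcal{L}^*$. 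This is more self-contained and buys independence from the feasibility analysis of Corollary~\ref{cor:mce_feasibility}; what it gives up is that it says nothing about where the MPMCE optimum is actually attained. Two caveats: (i) the paper's appendix proof additionally shows the Lagrangian dual function has no saddle point, i.e., that strong duality genuinely fails; your argument establishes only the weak inequalities, which is all the theorem formally claims, so this is not a gap but you lose the punchline of the section. (ii) Your closing intuition that the gap between $\mathcal{H}^*$ and $-\mathcal{L}^*$ comes from MPML-IRL enforcing a shared $\theta$ is not the mechanism the paper identifies: Proposition~\ref{prop:ml_irl} attributes it to the fact that, at the reconstructing parameters, the discounted log-likelihood still has a nonzero gradient in $\gamma_k$ (the entropy contribution grows with the horizon), so the ML optimum moves away from $(\theta^*,\Gamma^*)$. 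That remark is tangential and does not affect the validity of your proof.
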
 
\begin{proof_sketch}
The MPMCE-IRL problem in Eq.~\ref{eqn:mce_irl_multi} is nonconcave because the constraints in Eq.~\ref{eqn:mce_irl_multi_const2} are not affine. In Appendix~\ref{apdx:thm_duality}, we show that there are no saddle points for the Lagrangian dual function. Thus, strong duality does not hold. 
\end{proof_sketch}
Theorem~\ref{thm:strong_duality} also implies that, for MP-IRL problems, solving the MPMCE-IRL problem is not equivalent to solving the MPML-IRL problem. In Proposition~\ref{prop:ml_irl}, we further show that an optimal solution to the MPML-IRL problem may not reconstruct the expert policies. Thus, we cannot solve the MPML-IRL as an alternative even though it is more computationally convenient.
\begin{proposition}\label{prop:ml_irl}
Let $\Gamma$, $\theta$ be an optimal solution to the MPML-IRL in Eq.~\ref{eqn:ml_irl_multi} and $\tilde{\pi}^\theta_k$ be the optimal policy for the reward parameters $\theta$ and the discount factor $\gamma_k$. Then $\tilde{Q}^{{\theta},{\gamma}_k}_{\displaystyle{\tilde{\pi}}^\theta_k}\geq \tilde{Q}^{{\theta},{\gamma}_k}_{\displaystyle{\tilde{\pi}}_k^*}$ (i.e., $\tilde{\pi}_k^*$ may not be optimal under the optimal solution $\Gamma,\theta$).
\end{proposition}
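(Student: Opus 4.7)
The inequality $\tilde{Q}^{\theta,\gamma_k}_{\tilde{\pi}^\theta_k} \geq \tilde{Q}^{\theta,\gamma_k}_{\tilde{\pi}^*_k}$ will follow immediately from the definition of $\tilde{\pi}^\theta_k$ in Eq.~\ref{eqn:opt_pi_soft}: since $\tilde{\pi}^\theta_k$ is the arg-max of the entropy-regularized soft $Q$-function over all policies at $(\theta,\gamma_k)$, it dominates every other policy including $\tilde{\pi}^*_k$. The non-trivial content of the proposition is the parenthetical claim that this inequality can be \emph{strict}, i.e.\ that the MPML-IRL optimum $(\theta,\Gamma)$ need not render every $\tilde{\pi}^*_k$ optimal under the induced entropy-regularized MDP.

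My plan for the parenthetical is to exhibit an explicit counterexample. I would pick a small finite MDP, fix a ground-truth reward parameter $\theta^*$ together with two distinct planning horizons $\gamma^*_1, \gamma^*_2$, solve Eq.~\ref{eqn:opt_pi_soft} to obtain the expert policies $\tilde{\pi}^*_1, \tilde{\pi}^*_2$, and then study $\mathcal{L}(\theta,\Gamma)$ over the joint parameter domain $(\theta,\gamma_1,\gamma_2)$. The structural reason I expect the MPML-IRL optimum $(\theta^\dagger,\Gamma^\dagger)$ to drift away from $(\theta^*,\Gamma^*)$ is that each per-expert summand $\mathbb{E}_{\tilde{\pi}^*_k}\!\left[\sum_t \gamma_k^t \log \tilde{\pi}^\theta_k(A_t|S_t)\right]$ is a sum of non-positive terms weighted by $\gamma_k^t$, so shrinking $\gamma_k$ reduces the weight on every negative contribution even when it degrades the match between $\tilde{\pi}^\theta_k$ and $\tilde{\pi}^*_k$. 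Unlike MPMCE-IRL, MPML-IRL does not impose a hard feature-matching constraint coupling $\theta$ and $\Gamma$, so the optimizer is free to exploit this trade-off and settle on a $\Gamma^\dagger$ whose entries are strictly smaller than $\Gamma^*$.

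To close the argument, I would appeal to the identifiability result of~\cite{cao2021identifiability}: given two distinct discount factors in an entropy-regularized MDP, the reward parameter inducing a prescribed pair of optimal soft policies is unique up to a constant. Consequently, if $\tilde{\pi}^{\theta^\dagger}_k = \tilde{\pi}^*_k$ were to hold at the MPML-IRL optimum for every $k$, then necessarily $\Gamma^\dagger = \Gamma^*$ (and $\theta^\dagger = \theta^*$ up to an additive shift). Demonstrating in the counterexample that $\mathcal{L}(\theta^\dagger,\Gamma^\dagger) > \mathcal{L}(\theta^*,\Gamma^*)$ for some $\Gamma^\dagger \neq \Gamma^*$ therefore forces at least one $\tilde{\pi}^*_k$ to fail to be optimal under $(\theta^\dagger,\gamma^\dagger_k)$, promoting the trivial inequality to a strict one.

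The main obstacle will be making the counterexample transparent rather than merely numerical: the soft Bellman equations generically have no closed form, so the proof should either use a highly degenerate instance (for example, a two-state deterministic MDP) in which $\tilde{Q}$ can be written explicitly in $\theta$ and $\gamma_k$ and the two values of $\mathcal{L}$ compared analytically, or else rely on a verified grid evaluation over $(\theta,\gamma_1,\gamma_2)$. Since the proposition is an existence (``may not'') claim, either route suffices; the delicate part is engineering a clean enough MDP that the failure mechanism --- the optimum drifting to smaller $\Gamma$ at the cost of policy matching --- is visible with a short computation rather than pages of algebra.
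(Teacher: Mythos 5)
Your reduction of the proposition to the parenthetical ``may not'' claim is right, and a counterexample would in principle suffice, but your plan has two genuine gaps. First, the counterexample is never constructed: the entire burden of the proof is deferred to ``a verified grid evaluation'' or ``a highly degenerate instance'' that you do not exhibit, so as written there is nothing to check. The paper avoids this by arguing in general rather than by example: it takes an \emph{arbitrary} point $(\hat\theta,\hat\Gamma)$ at which $\tilde{\pi}^{\hat\theta}_k=\tilde{\pi}^*_k$ for all $k$, reparametrizes $\delta_k=\mathrm{logit}(\gamma_k)$, and computes $\partial\mathcal{L}/\partial\delta_k$ there. The reward-dependent terms cancel exactly because the learned and expert policies coincide at such a point, leaving only the derivative of the discounted entropy of $\tilde{\pi}^*_k$, which is strictly positive for $\hat\gamma_k\in(0,1)$ and a stochastic expert. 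Hence no policy-matching point is a critical point of $\mathcal{L}$, so the MPML-IRL optimum cannot reconstruct the experts. Note that this computation also contradicts your heuristic about the direction of drift: $\gamma_k$ enters $\mathcal{L}$ both through the weights $\gamma_k^t$ on the nonpositive log-terms and through the dependence of $\tilde{\pi}^\theta_k$ on $\gamma_k$, and at a matching point the net effect is that $\mathcal{L}$ \emph{increases} in $\gamma_k$; your argument accounts only for the first effect.

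Second, your closing step is incorrect. You claim that if $\tilde{\pi}^{\theta^\dagger}_k=\tilde{\pi}^*_k$ held at the MPML-IRL optimum for all $k$, then identifiability would force $\Gamma^\dagger=\Gamma^*$. The identifiability result you invoke fixes the discount factors and identifies the reward; it does not identify the discount factors from the policies. Indeed, Proposition~\ref{prop:id} and the experiment in Section~\ref{sec:id_and_gen} (with $K=2$) show that for \emph{misspecified} discount factors there can still exist a reward, unique up to a constant but different from $r^*$, that reconstructs all the expert policies. So exhibiting $\mathcal{L}(\theta^\dagger,\Gamma^\dagger)>\mathcal{L}(\theta^*,\Gamma^*)$ for some $\Gamma^\dagger\neq\Gamma^*$ does not force a policy mismatch at the optimum: you would have to dominate $\mathcal{L}$ over the entire (possibly large) set of policy-matching pairs $(\theta,\Gamma)$, which is exactly what the paper's gradient computation delivers uniformly and your construction does not.
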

\begin{proof_sketch}
Let $\hat\theta$, $\hat\Gamma$ be any reward parameters and discount factors in the parameter space such that for all $k\in[K],\ \tilde{\pi}^{\hat\theta}_k = \tilde{\pi}^*_k $. 
In Appendix~\ref{apdx_sec:prop_proof}, we show that such a feasible solution that reconstructs expert policies $\tilde{\Pi}^*$ may not be a critical point of $\mathcal{L}(\theta,\Gamma)$.
\end{proof_sketch}

\subsection{Inference for multi-planning horizon MCE-IRL}
Our theory in Section~\ref{sec:mce-irl-theory} above implies that for MP-IRL problems, we cannot solve the MPMCE-IRL problem by solving the MPML-IRL problem.  To solve the MPMCE-RL problem, similar to MPLP-IRL (Sec.~\ref{subsec:lp_irl_bo}), we propose a bi-level optimization in Eq.~\ref{eqn:mce_irl_multi}. Given a fixed set of discount factors, $\Gamma\in[0,1]^K$, let the lower-level optimization problem be
\begin{equation}
\label{eqn:mce_inner}
    g^*(\Gamma) = \max_{\{\mu_{\displaystyle{\tilde{\pi}_k}}^{\gamma_k}(s,a)\}}  \sum_{k=1
    } \mathcal{H}^{\gamma_k}_{\displaystyle{\tilde{\pi}}_k} \quad \text{subject to constraints as in \ Eq.~\ref{eqn:mce_irl_multi_const1}-\ref{eqn:mce_irl_multi_const3}}.
\end{equation}

In practice,  we observe that the above optimization problem is feasible for only a small set of discount factors (see the full discussion in Sec.~\ref{sec:identifiablity}). 
To ensure convergence to a feasible solution, for each lower-level optimization problem, we calculate the duality gap between the primal problem and its Lagrangian dual problem. 
For the upper-level optimization problem, we use BO to quickly identify discount factors with feasible reward functions. 
The full algorithm is given in Algorithm~\ref{alg:mceirl}.
\subsection{Feasibilty and Identifiability Analysis for Inference}\label{sec:identifiablity}
Although the optimization problem in Eq.~\ref{eqn:mce_inner} is concave and can be solved with the Lagrangian duality method, it may not be feasible when the discount factors are misspecified.
In this section, we study the effect of misspecified or unknown discount factors on the identifiability of the true reward function, which further determines the feasible solution space of the MPMCE-IRL problem.

Similar to~\cite{rolland2022identifiability}, we first give matrix rank conditions that determine the set of reward functions that reconstruct optimal policies, $\tilde{\Pi}^*$. 
\begin{proposition}
\label{prop:id}
    Consider an MP-IRL problem defined in Sec.~\ref{sec:prob_set} with $K$ expert policies, $\tilde\Pi^*=\{\tilde{\pi}^*_k\}_{k=1}^K$, solved with entropy-regularized RL. Assume that we are given a set of arbitrary discount factors, $\Gamma=\{\gamma_k\}_{k=1}^K$ ($\gamma_i\neq \gamma_j$ for $i\neq j$). $T_{a_i}$ is the transition dynamics under action $a_i$. Let 
    \begin{align*}
        &T_\mathcal{A} = 
        \begin{bmatrix}
            T_{a_1}\\
            \vdots\\
            T_{a_{|\mathcal{A}|}}
        \end{bmatrix},\quad
        \Phi = 
       \begin{bmatrix}
        T_{\mathcal{A}} & -(\mathbf{1}\otimes I-\gamma_1 T_{\mathcal{A}})& 0  & \cdots\\
        \vdots&\vdots&\ddots&\vdots \\
        T_{\mathcal{A}} & 0 & \cdots & -(\mathbf{1}\otimes I-\gamma_k T_{\mathcal{A}})\\
    \end{bmatrix}\\
\text{and}\quad
    &b^\intercal=\lambda
    \begin{bmatrix}
        \log \pi^*_1(a_1|\cdot),
        \cdots,
        \log \pi^*_1(a_{|\mathcal{A}|}|\cdot),
        \cdots,
        \log \pi^*_k(a_1|\cdot),
        \cdots,
        \log \pi^*_k(a_{|\mathcal{A}|}|\cdot)
        \end{bmatrix}^\intercal.
   \end{align*}
 Then there does not exist any reward function that reconstructs optimal policies if and only if $rank(\Phi\vert b)>rank(\Phi)$.
 There exists a unique reward function (up to a constant factor) that reconstructs optimal policies if  $rank(\Phi\vert b)=rank(\Phi)=K|\mathcal{S}|+|\mathcal{S}|-1$. 
\end{proposition}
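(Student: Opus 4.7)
The plan is to reduce Proposition~\ref{prop:id} to the Rouch\'e--Capelli criterion for the linear system $\Phi x = b$ in the unknown $x = (r; V_1; \ldots; V_K) \in \mathbb{R}^{(K+1)|\mathcal{S}|}$, with $\Phi$ and $b$ exactly as in the statement. I would begin by deriving this system from the soft Bellman characterization of entropy-regularized MDPs: the unique optimal policy for reward $r$ and discount $\gamma_k$ satisfies $\lambda \log \tilde{\pi}^{r,\gamma_k}(a|s) = \tilde{Q}^{r,\gamma_k}(s,a) - \tilde{V}^{r,\gamma_k}(s)$ together with $\tilde{Q}^{r,\gamma_k}(s,a) = (T_a r)(s) + \gamma_k (T_a \tilde{V}^{r,\gamma_k})(s)$. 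Substituting the second identity into the first, then stacking over all $|\mathcal{A}|$ actions and all $K$ experts, reproduces exactly the block equation $\Phi x = b$ under the identification $V_k := \tilde{V}^{r,\gamma_k}$.

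The converse --- that every solution of $\Phi x = b$ really does reconstruct the expert policies --- is the main subtlety, because the $V_k$ initially appear only as auxiliary unknowns with no guaranteed interpretation. Given any $(r, V_1, \ldots, V_K)$ satisfying the $k$-th row block, I would exponentiate to obtain $\tilde{\pi}^*_k(a|s) = \exp(((T_a r)(s) + \gamma_k (T_a V_k)(s) - V_k(s))/\lambda)$ and then impose the normalization $\sum_a \tilde{\pi}^*_k(a|s) = 1$, which forces $V_k(s) = \lambda \log \sum_a \exp(((T_a r)(s) + \gamma_k (T_a V_k)(s))/\lambda)$. This is the soft Bellman optimality equation, whose unique max-norm fixed point is $\tilde{V}^{r,\gamma_k}$, so $V_k = \tilde{V}^{r,\gamma_k}$ and $\tilde{\pi}^*_k$ is the softmax optimal policy induced by $(r, \gamma_k)$. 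Reconstruction of $\tilde{\Pi}^*$ is therefore equivalent to solvability of $\Phi x = b$, and Rouch\'e--Capelli immediately yields the first claim: no reward reconstructs the expert policies iff $rank(\Phi \vert b) > rank(\Phi)$.

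For the uniqueness assertion, I would exhibit the explicit null vector $v_0 = (\mathbf{1}; (1-\gamma_1)^{-1} \mathbf{1}; \ldots; (1-\gamma_K)^{-1} \mathbf{1})$ and verify $\Phi v_0 = 0$ block by block: since $T_a \mathbf{1} = \mathbf{1}$, each row block evaluates to $\mathbf{1}_{|\mathcal{A}||\mathcal{S}|} - (1-\gamma_k)^{-1}(1 - \gamma_k)\mathbf{1}_{|\mathcal{A}||\mathcal{S}|} = 0$. Under the hypothesis $rank(\Phi) = (K+1)|\mathcal{S}| - 1$, the kernel of $\Phi$ is one-dimensional and hence equals $\operatorname{span}(v_0)$, so any two solutions of $\Phi x = b$ differ by a scalar multiple of $v_0$, which perturbs $r$ only by an additive constant. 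The principal obstacle is the converse step above: binding the algebraic variables $V_k$ to the genuine soft value functions relies essentially on both the policy normalization and the contraction property of the soft Bellman operator, and without this linkage the equivalence between the linear system and policy reconstruction would fail.
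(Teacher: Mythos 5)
Your proposal is correct and follows essentially the same route as the paper: both reduce reconstruction of the expert policies to consistency of the linear system $\Phi x = b$ in the unknowns $(r; V_1;\ldots;V_K)$ and then apply the Rouch\'e--Capelli theorem. The paper's own proof is terser (it delegates the equivalence to the argument of Theorem~3 in \cite{rolland2022identifiability}), whereas you additionally spell out the converse direction via normalization and the contraction of the soft Bellman operator and exhibit the explicit kernel vector $(\mathbf{1};(1-\gamma_1)^{-1}\mathbf{1};\ldots;(1-\gamma_K)^{-1}\mathbf{1})$ for the uniqueness claim---details the paper leaves implicit.
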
 
\begin{proof}
The proof of Proposition~\ref{prop:id}, follows in a similar manner to that of Theorem 3 in \cite{rolland2022identifiability}. To find a reward function that reconstructs optimal policies, the linear system $\Phi x=b$, with $K|\mathcal{A}||\mathcal{S}|$ equations and $(K+1)|\mathcal{S}|$ variables, needs to be consistent.  

Unlike the proof of Theorem 3 in \cite{rolland2022identifiability}, we do not assume the linear system is consistent because when the discount factors, $\Gamma$, are misspecified, the expert policies may not be optimal for the true reward function $r^*$.
According to the Rouch\'e-Capelli theorem,
the above system of equations is inconsistent if $rank(\Phi\vert b)>rank(\Phi)$. The linear system has a unique  solution (up to a constant factor) if $rank(\Phi\vert b)=rank(\Phi)=(K+1)|\mathcal{S}|-1$.
\end{proof}

Proposition~\ref{prop:id} implies that, without correctly specified discount factors, the reward set that reconstructs optimal policies may not include the true reward function, which emphasizes the importance of 
inferring the discount factors (and learning them correctly) rather than fixing them arbitrarily. 

But is it possible to identify the true discount factors and the true reward function in practice? By the proof of Proposition~\ref{prop:id}, with more expert policies, the number of constraints of the linear system grows faster than the number of variables, which makes the set of feasible solutions of the linear system smaller.
In Section \ref{sec:id_and_gen}, we empirically show that when the number of experts is sufficiently large, the reward set is non-empty for only a small set of discount factors. Thus, in highly heterogenous settings, both the reward function and the discount factors are more identifiable.

The rank conditions in Proposition~\ref{prop:id} also determine if the optimization problem in Eq.~\ref{eqn:mce_inner} is feasible.
\begin{corollary}
\label{cor:mce_feasibility}
    Given a set of discount factors $\Gamma=\{\gamma_k\}_{k=1}^K$, the optimization problem in Eq.~\ref{eqn:mce_inner} is feasible if and only if $rank(\Phi\vert b)=rank(\Phi)$ where $\Phi,\ b$ is defined in Proposition \ref{prop:id}. Additionally, if the optimization problem is feasible, the optimal solution is achieved at $\mu_{\displaystyle\tilde{\pi}_k}^{\gamma_k}(s,a)=\mu_{\displaystyle\tilde{\pi}^*_k}^{\gamma_k}(s,a)$.
\end{corollary}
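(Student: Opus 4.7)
The plan is to reduce the feasibility question in Eq.~\ref{eqn:mce_inner} to the consistency of the linear system $\Phi x = b$ from Proposition~\ref{prop:id}, and then identify the primal optimum with the expert's visitation count. A preliminary observation is that $\mu^{\gamma_k}_{\tilde{\pi}^*_k}$ already satisfies every written constraint (feature matching, Bellman flow, non-negativity) for any $\Gamma\in[0,1]^K$. Hence the notion of feasibility that the corollary is really asking about is dual attainability / zero duality gap, which is precisely what the algorithm monitors in the paragraph just before the corollary.

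First, I would form the Lagrangian of Eq.~\ref{eqn:mce_inner} with reward parameters $\theta$ as multipliers for the feature-matching constraints in Eq.~\ref{eqn:mce_irl_multi_const1} and value functions $V^{\gamma_k}$ as multipliers for the Bellman-flow constraints in Eq.~\ref{eqn:mce_irl_multi_const2}. Applying KKT stationarity in $\mu^{\gamma_k}_{\tilde{\pi}_k}(s,a)$, along the lines of the standard MCE-IRL derivation in~\citet{zeng2022maximum}, any primal optimum must take the softmax form $\tilde{\pi}^{\theta,\gamma_k}_k(a|s)\propto \exp(\tilde{Q}^{\theta,\gamma_k}_k(s,a)/\lambda)$.

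Second, because the features $\phi^s$ are indicator features, feature matching pins every state visitation to the expert's, which combined with the softmax form forces the policy identity $\tilde{\pi}^{\theta,\gamma_k}_k = \tilde{\pi}^*_k$ for every $k$. Taking logarithms of these softmax identities state-action-wise yields exactly the linear system $\Phi x = b$ of Proposition~\ref{prop:id}, with $x$ stacking $r_\theta$ and the $V^{\gamma_k}$. By the Rouch\'e--Capelli theorem, this system is consistent if and only if $\text{rank}(\Phi\vert b)=\text{rank}(\Phi)$, which gives the ``if and only if'' half. When consistent, strict concavity of the sum of causal entropies over the affine constraint set forces a unique primal maximizer, and since $\mu^{\gamma_k}_{\tilde{\pi}^*_k}$ satisfies both the softmax form under the resulting $\theta^*$ and every constraint, it must be that maximizer, yielding $\mu^{\gamma_k}_{\tilde{\pi}_k}(s,a)=\mu^{\gamma_k}_{\tilde{\pi}^*_k}(s,a)$.

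The main obstacle will be pinning down ``feasibility'' cleanly: the primal constraint set is never literally empty, so the corollary must be read as asserting dual attainability (matching the duality-gap check the algorithm performs). Once this interpretation is fixed, the argument is essentially a KKT computation producing the softmax form, followed by Rouch\'e--Capelli applied to the induced linear system. Care is needed to handle the value-function multipliers $V^{\gamma_k}$ so that the reduced system matches the $\Phi$ defined in Proposition~\ref{prop:id} rather than an enlarged system; eliminating $V^{\gamma_k}$ via the soft-Bellman identity is the technically fiddly bookkeeping step.
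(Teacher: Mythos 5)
Your overall architecture (KKT stationarity giving the softmax form, reduction to the linear system $\Phi x = b$, Rouch\'e--Capelli, and strict concavity of the conditional entropy on the feature-matched affine slice) matches the paper's, and your ``if'' direction --- the rank condition yields a $\theta^*$ whose softmax policies are exactly the experts', so $\mu^{\gamma_k}_{\tilde\pi^*_k}$ satisfies KKT and is the unique primal maximizer --- is essentially the paper's second half. Your preliminary observation that the written constraint set is never literally empty, so that ``feasibility'' must be read as dual attainability / zero duality gap, is a fair and correct reading of what the corollary and Algorithm~\ref{alg:mceirl} are actually checking.

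The gap is in the ``only if'' direction, at the sentence ``feature matching pins every state visitation to the expert's, which combined with the softmax form forces the policy identity $\tilde\pi^{\theta,\gamma_k}_k=\tilde\pi^*_k$.'' Matching discounted state marginals does not force two policies to coincide: distinct conditionals $\pi(a|s)$ can induce identical $\mu^{\gamma_k}(s)$, and both can be of softmax form (for different rewards). If the dual-optimal softmax policy $\hat\pi_k$ were feature-matched to, but distinct from, $\tilde\pi^*_k$, then taking logarithms of \emph{its} softmax identities would only give consistency of $\Phi x=\hat b$ with $\hat b$ built from $\log\hat\pi_k$, not the $b$ of Proposition~\ref{prop:id}, so the rank condition on $(\Phi\vert b)$ would not follow. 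The paper closes exactly this hole with a two-step contradiction: (i) since $\hat\pi_k$ is the unique maximizer of $\theta^\intercal f+\mathcal{H}$ and shares $f$ with $\tilde\pi^*_k$, one would get $\mathcal{H}_{\hat\pi_k}>\mathcal{H}_{\tilde\pi^*_k}$ and hence $\tilde V^{\theta',\gamma_k}_{\hat\pi_k}>\tilde V^{\theta',\gamma_k}_{\tilde\pi^*_k}$ for \emph{every} reward $\theta'$; (ii) the shaping construction $r'=r^*+(\gamma^*_k-\gamma_k)\tilde V^{r^*,\gamma^*_k}_{\tilde\pi^*_k}$ exhibits a reward for which $\tilde\pi^*_k$ \emph{is} soft-optimal at discount $\gamma_k$, a contradiction, forcing $\hat\pi_k=\tilde\pi^*_k$. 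You need this (or an equivalent) argument; without it the identification of the primal optimizer with the expert's visitation counts, and hence the forward implication, does not go through.
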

\begin{proof_sketch}
We show that, when the optimization problem is feasible, an optimal solution of the Lagrangian multipliers is a reward function that induces the same visitation variables as the expert policies, $\{\mu_{\tilde{\pi}^*_k}^{\gamma_k}(s,a)\}$. 
Thus, for the primal problem to be feasible, it is sufficient to find a reward function 
that reconstructs the expert policies. See full details in Appendix~\ref{apdx:cor_mce}.
\end{proof_sketch}


\section{Experiments and Results}
In this section, we provide details of our designed domains and experiment setup.  We first study the identifiability and generalizability of both the reward function and discount factors of each domain.
We then study the properties of the learned reward function and the set of discount factors of MPLP-IRL and MPMCE-IRL algorithms, and investigate how well the learned reward functions generalize to similar tasks. Last,  we demonstrate the fast convergence of our algorithms. 
\subsection{Domains}
\label{subsec:domains}
We test the MPLP-IRL (Algorithm~\ref{alg:lp}) and MPMCE-IRL (Algorithm ~\ref{alg:mceirl}) on three domains: (1) the toy domain (Fig.~\ref{fig:mdp_true_r}), in which the experts trade off between the probability of getting the reward and the reward magnitude; (2) the big-small domain~\citep{ankile2023discovering}, in which experts choose between a small reward close by or a large one that is far away; and (3) the cliff domain, in which the experts trade off the risk of falling off the cliff with a large reward.
 For each domain, we provide expert demonstrations from $3$ distinct expert policies.
See full details in Appendix~\ref{apdx_sec:domains}.

\subsection{Identifiability and Generalizability Analysis}\label{sec:id_and_gen}
For each domain, we solve the linear system $\Phi x=b$ (Proposition~\ref{prop:id}) by performing a grid search over the space of $\Gamma\in[0,1]^K$ with an interval of $0.01$. Across all designed domains, we observe that there exist reward functions that reconstruct expert policies only when $\Gamma=\Gamma^*$, which implies that the MPMCE-IRL problem (Eq.~\ref{eqn:mce_irl_multi}) has a small feasible solution space. 

For the toy domain, if we only provide expert demonstrations from $2$ expert policies, for all the given discount factors obtained form $[0,1]^2$, there always exists a unique reward function (up to a constant) that reconstructs the optimal policy, which may not include the true reward function. We further conduct a generalizability analysis of these reward functions with the full procedure described in Appendix~\ref{sec:gen_error}. In Appendix Fig.~\ref{fig:id_and_gen}, we see that $\sim 40\%$ of these feasible reward functions do not generalize well to new tasks with generalization errors (Eq.~\ref{eqn:eval_gen}) larger than $0.5$, which emphasizes the importance of learning the discount factors correctly.
\subsection{Results}
\label{sec:results}
\vspace{-0.2cm}
\begin{figure}[htbp]
\centering
\makebox[0.95\textwidth]{
 \begin{subfigure}[t]{0.45\linewidth}
     \includegraphics[width=0.8\textwidth]{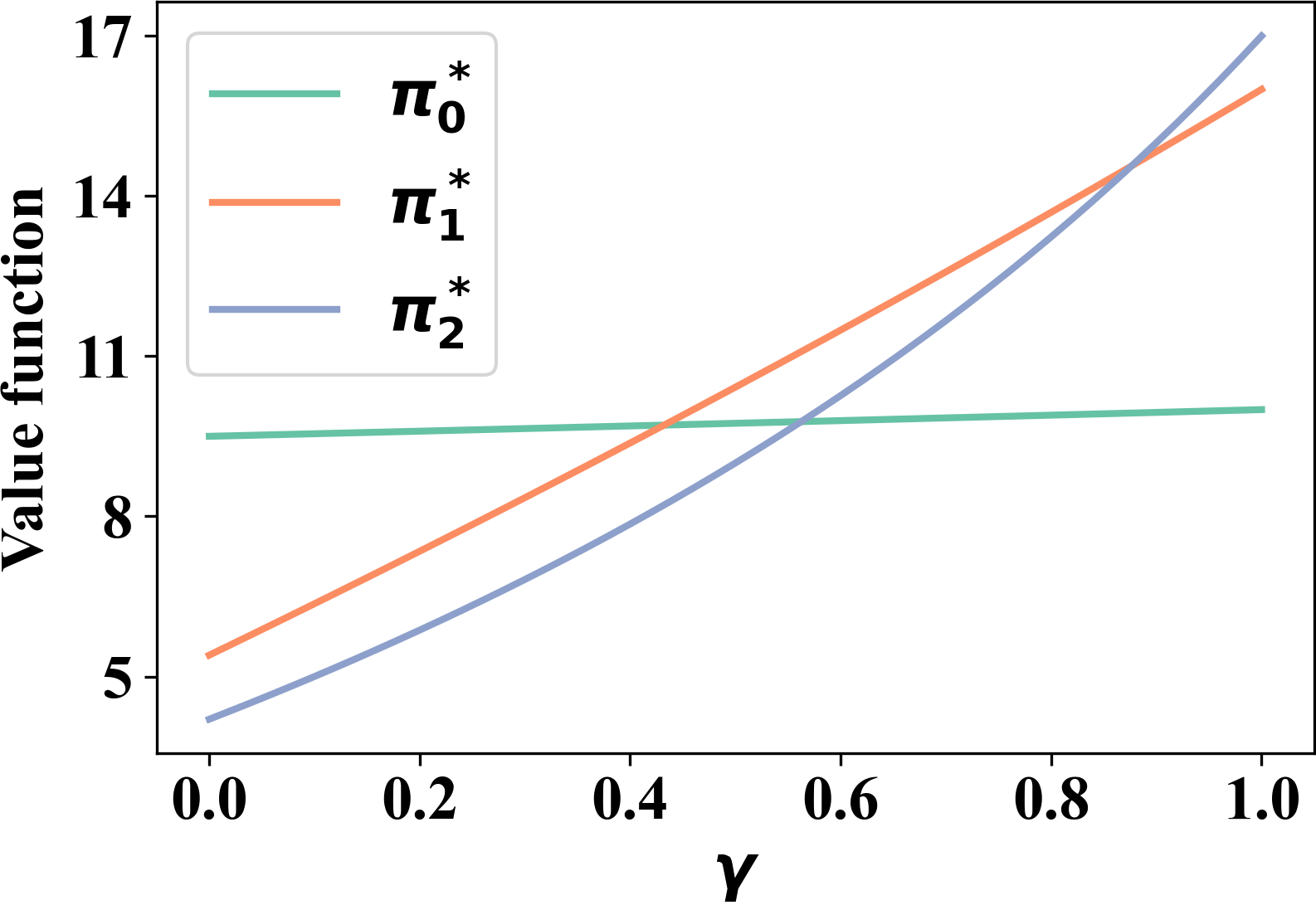}
     \caption{The value function $V^{\gamma,r*}_{\pi^*_k}(s_0)$ of expert policies $\pi^*_0,\ \pi^*_1,\ \pi^*_2$ under the true reward $r^*$.}
     \label{fig:toy_opt_policy}
 \end{subfigure}
 ~
  \begin{subfigure}[t]{0.5\linewidth}
     \includegraphics[width=0.7\textwidth]{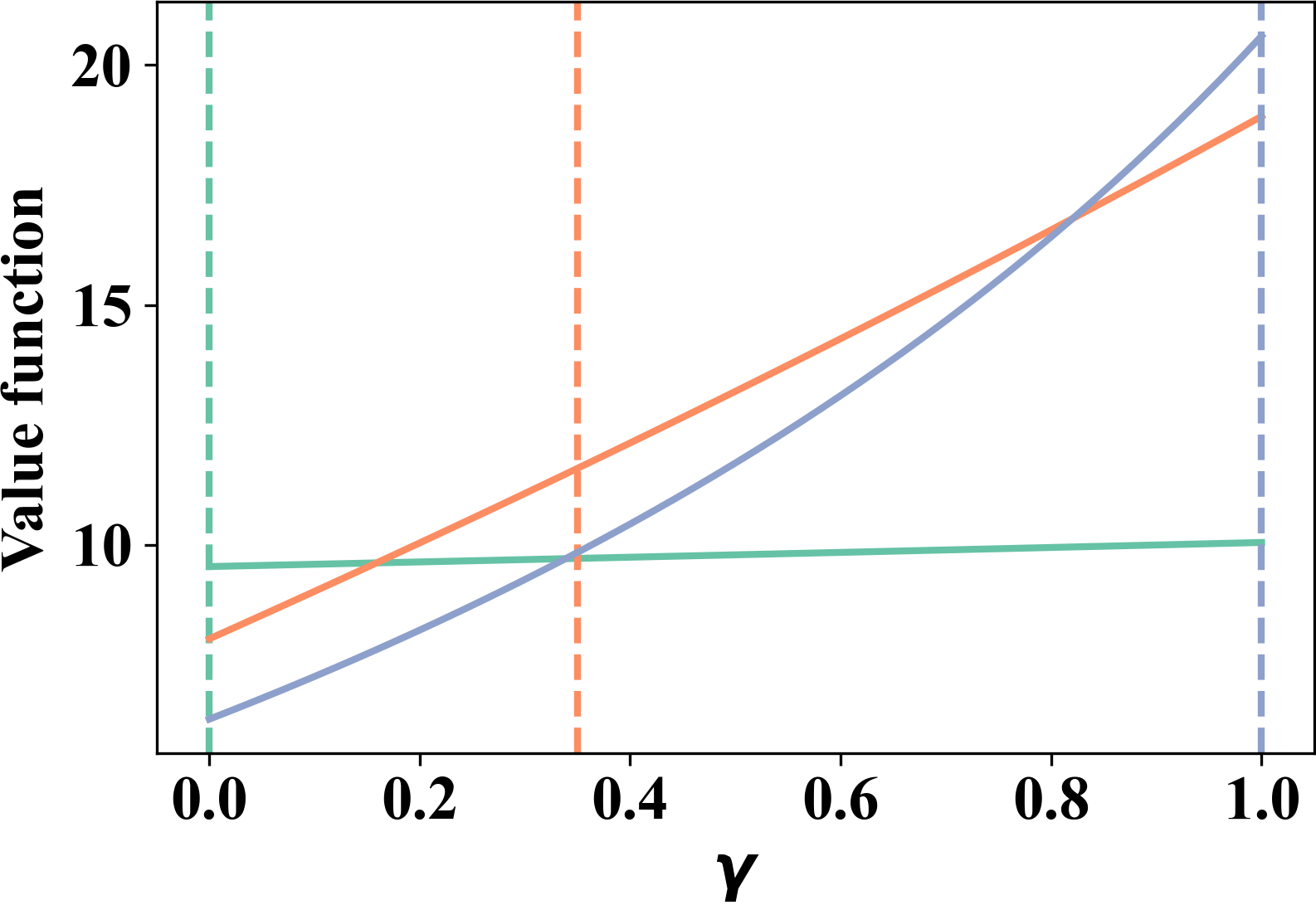}
     \raisebox{2.2em}{
     \includegraphics[width=0.2\textwidth]{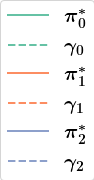}}
     \caption{The value function $V^{\displaystyle{\tilde{r}},\displaystyle{\tilde\gamma}}_{\pi_k}(s_0)$ of reconstructed optimal policies $\pi_0,\ \pi_1,\ \pi_2$ 
     under the learned reward function of MPLP-IRL, $\tilde{r}$.
     }
     \label{fig:toy_lp_policy}
 \end{subfigure}
 }
 \caption{Plots of the value function of the initial state under (a) the true reward function $r^*$, (b) the learned reward function of MPLP-IRL, $\tilde{r}$: $x,\ y$-axes represent the discount factor $\gamma\in[0,1]$ and the value function of expert policies or reconstructed optimal policies, respectively. Each color represents a different policy. The dashed lines in (b) represent the learned discount factors, $\tilde{\Gamma}$. We see that MPLP-IRL recovers the order of true discount factors.}
 \label{fig:toy_domain}
\end{figure}
\vspace{-0.2cm}
\begin{figure}[htbp]
\centering
\begin{tabular}{|l|l|l|l|}
\hline
 & Toy & Big Small & Cliff  \\ \hline
MPLP-IRL & $0.009\pm 0.040$ & $0.039\pm 0.117$ & $0.0\pm 0.0$ \\ \hline
MPMCE-IRL & $0.21\pm 0.25$ & $0.040\pm 0.065$ & $0.0\pm 0.0$ \\ \hline
\end{tabular}
\caption{The table of the generalization error (Eq.~\ref{eqn:eval_gen}) with one standard deviation of the learned reward function: each row and column represents a different algorithm and domain, respectively.}
\label{table:gen_err}
\end{figure}
\textbf{The learned discount factors recover the order of the true discount factors.} 
From Fig.~\ref{fig:toy_domain}, we see that although the learned discount factor ($\tilde{\Gamma}\approx\{0,\ 0.35,\ 1\}$) does not exactly align with the ground truth, they follow the same order of the true discount factors ($\gamma^*_0\leq \gamma^*_1\leq \gamma^*_2$). This is also true for the big-small domain and the cliff domain (see full comparison of the true discount factors and the learned discount factors in Appendix Table~\ref{table:learned_gamma}).
This property allows us to interpret the bias of each expert's goal---a small discount factor implies that the expert cares more about short-term outcomes and vice versa. 

\textbf{MPLP-IRL and MPMCE-IRL can appropriately capture key aspects of the task and the learned reward functions generalize well to similar new tasks.} 
For the toy domain, we see that both MPLP-IRL and MPMCE-IRL learn a larger reward at state $s_2$ than at state $s_1$ (Appendix Fig.~\ref{fig:toy}). When the discount factor is large, this reward structure encourages the agent to collect the large reward even in the face of more stochasticity. 
For the big-small domain, the learned reward functions have a small reward for the bottom left grid and a large reward for the bottom right grid (Appendix Fig.~\ref{fig:big_small_rs}). 
For the cliff domain, the learned reward functions have large penalties for the top rows and a large reward for the upper right grid (Appendix Fig.~\ref{fig:cliff_rs}). 
Thus, all the learned reward functions have 
 similar structures to the true reward function, which allows us to transfer the reward function to new RL tasks. The setup of the generalizability analysis is described in Appendix Section~\ref{sec:gen_error}.
 In Table~\ref{table:gen_err}, we see that our learned reward functions have good generalizability  (all the generalization errors are below $0.04$ except for that of the toy domain of MPMCE-IRL, which fails to learn $r(s_0)$ correctly (Appendix Fig.~\ref{fig:mce_learned_r})).
\begin{figure}[htbp]
\centering
\begin{subfigure}[b]{0.31\linewidth}
\includegraphics[width=\textwidth]{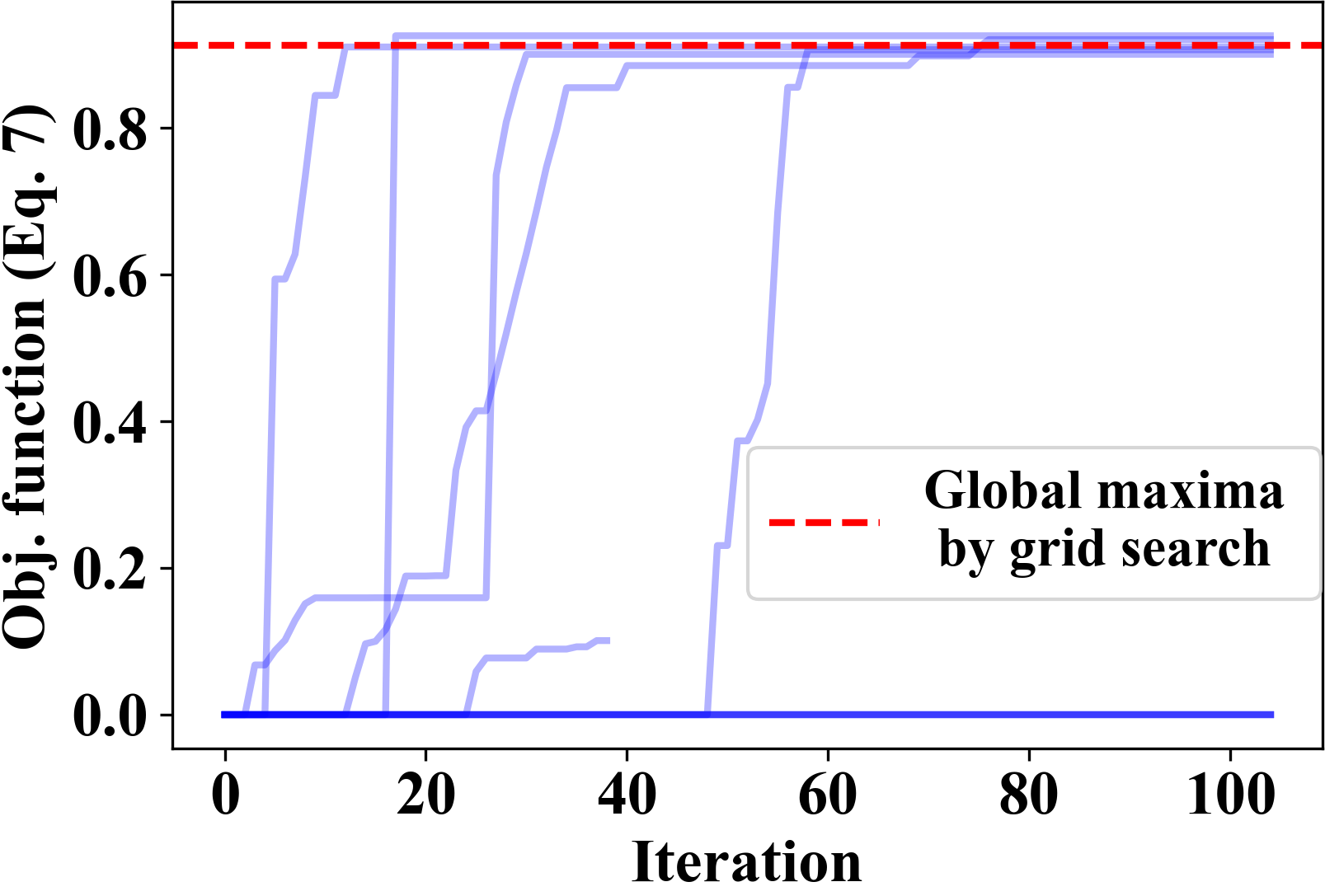}
\caption{MPLP-IRL: the toy domain}
\label{fig:toy_lp_bo_trace}
\end{subfigure}
~
\begin{subfigure}[b]{0.31\linewidth}
\includegraphics[width=0.95\textwidth]{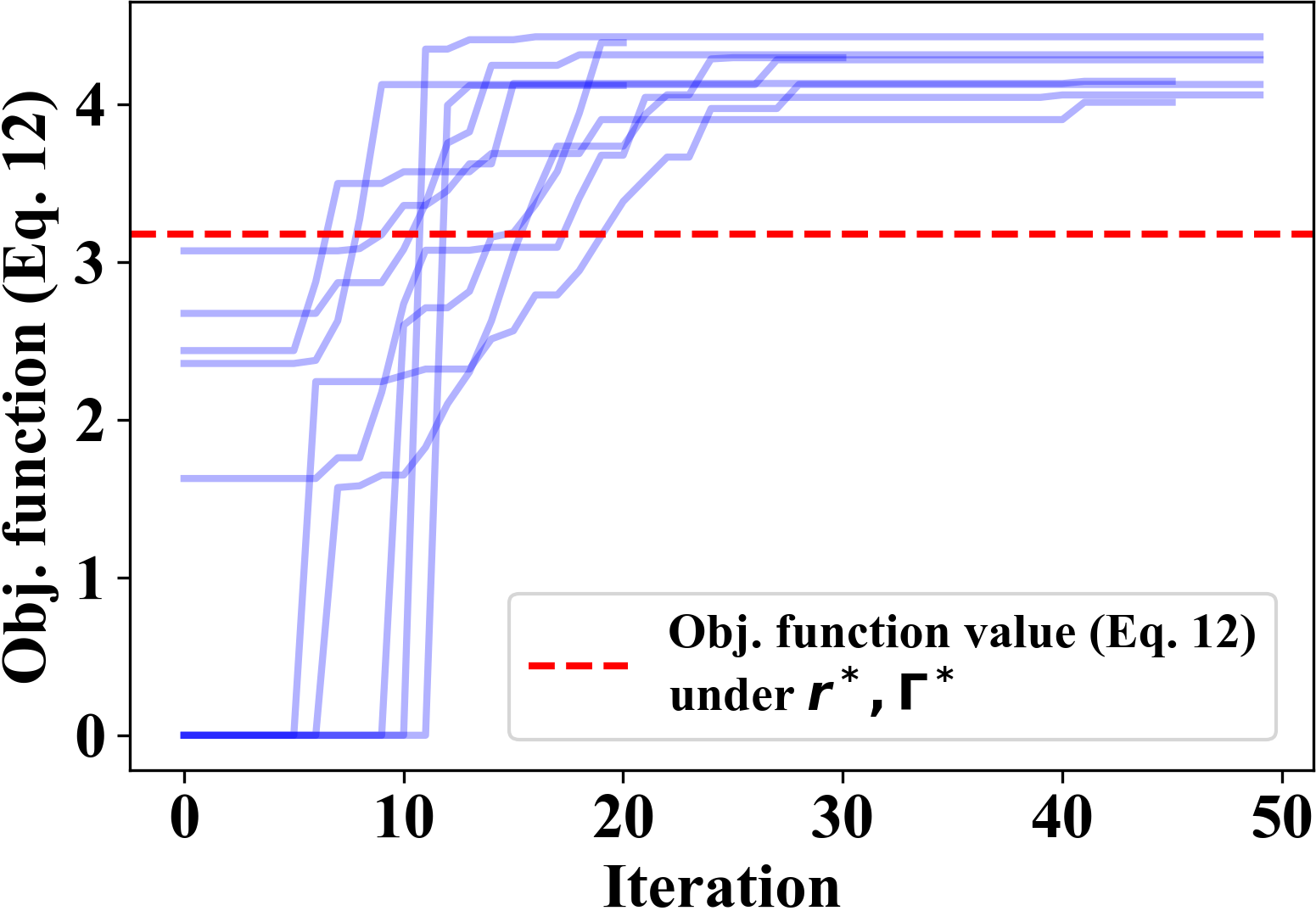}
\caption{MPMCE-IRL: the toy domain}
\label{fig:toy_mce_bo_trace}
\end{subfigure}
\caption{Trace plots of the best observed objective value of BO: $x,\ y$-axis represent the iteration and the best observed objective value, respectively. The red dashed line represents an approximate global maximum or the objective value under the ground truth.}
\label{fig:bo_trace}
\end{figure}

\textbf{MPLP-IRL and MPMCE-IRL converge quickly.} For MPLP-IRL, we approximate the global optimum by performing a grid search over $[0,1]^K$ with an interval of $0.01$. For MPMCE-IRL, it is computationally heavy to solve the optimization problem in Eq.~\ref{eqn:mce_inner} $10^6$ times. We instead compare the best current objective value of BO to the objective value evaluated under the true reward function and discount factors. 
In  Fig.~\ref{fig:bo_trace}, we see that on the toy domain, MPLP-IRL converges to the global maximum within $100$ iterations while MPMCE-IRL converges within $50$, reducing the computational burden by a factor of $\sim10^4$ compared to grid search. See Appendix~\ref{sec:result_convergence} for details.
\section{Discussion and Future Work}
In this work, we study the MP-IRL setting where each expert is planning under different planning horizons but the same reward function. We provide theoretical and empirical evidence that highlights the importance of learning correct discount factors. 

We develop two novel algorithms, MPLP-IRL and MPMCE-IRL, that learn the reward function and the discount factors jointly. 
Although MPLP-IRL is more computationally efficient (LP problems are faster to solve), it only applies to discrete domains. Additionally, MPLP-IRL does not guarantee identifying the true reward function and discount factors (in fact, for standard MDPs, identifying the set of discount factors for which the policy is optimal is nontrivial~\citep{denis2019issues}). 
In contrast, we show that when there is a sufficiently large number of experts, MPMCE-IRL can identify both the reward function and discount factors. However, in practice, MPMCE-IRL has a larger feasible solution set than Corollary~\ref{cor:mce_feasibility} suggests because we only require the algorithm to match the feature expectation in Eq.~\ref{eqn:mce_irl_multi_const1} within some threshold. Moreover, if the optimization problem in Eq.~\ref{eqn:mce_inner} is feasible for any $\Gamma\in[0,1]^K$, MPMCE-IRL does not have attainable optimal solutions.

Interesting future work includes studying when the reward function is identifiable for an MP-IRL problem and extending to an IRL setting where both planning horizons and reward functions vary.

\section{Acknowledgements}
This material is based upon work supported by the National Science Foundation under Grant No. IIS-2007076.  Any opinions, findings, and conclusions or recommendations expressed in this material are those of the author(s) and do not necessarily reflect the views of the National Science Foundation. 
JY and BEE were funded in part by Helmsley Trust grant AWD1006624, NIH NCI 5U2CCA233195, and CZI.  
BEE is a CIFAR Fellow in the Multiscale Human Program. 
\bibliography{main}

\begin{thebibliography}{24}
\providecommand{\natexlab}[1]{#1}
\providecommand{\url}[1]{\texttt{#1}}
\expandafter\ifx\csname urlstyle\endcsname\relax
  \providecommand{\doi}[1]{doi: #1}\else
  \providecommand{\doi}{doi: \begingroup \urlstyle{rm}\Url}\fi

\bibitem[Abbeel \& Ng(2004)Abbeel and Ng]{abbeel2004apprenticeship}
Pieter Abbeel and Andrew~Y Ng.
\newblock Apprenticeship learning via inverse reinforcement learning.
\newblock In \emph{Proceedings of the twenty-first international conference on
  Machine learning}, pp.\ ~1, 2004.

\bibitem[Ankile et~al.(2023)Ankile, Ham, Mao, Shin, Swaroop, Doshi-Velez, and
  Pan]{ankile2023discovering}
Lars~L Ankile, Brian~S Ham, Kevin Mao, Eura Shin, Siddharth Swaroop, Finale
  Doshi-Velez, and Weiwei Pan.
\newblock Discovering user types: Mapping user traits by task-specific
  behaviors in reinforcement learning.
\newblock \emph{arXiv preprint arXiv:2307.08169}, 2023.

\bibitem[Babes et~al.(2011)Babes, Marivate, Subramanian, and
  Littman]{babes2011apprenticeship}
Monica Babes, Vukosi Marivate, Kaushik Subramanian, and Michael~L Littman.
\newblock Apprenticeship learning about multiple intentions.
\newblock In \emph{Proceedings of the 28th international conference on machine
  learning (ICML-11)}, pp.\  897--904, 2011.

\bibitem[Brown et~al.(2019)Brown, Goo, Nagarajan, and
  Niekum]{brown2019extrapolating}
Daniel Brown, Wonjoon Goo, Prabhat Nagarajan, and Scott Niekum.
\newblock Extrapolating beyond suboptimal demonstrations via inverse
  reinforcement learning from observations.
\newblock In \emph{International conference on machine learning}, pp.\
  783--792. PMLR, 2019.

\bibitem[Cao et~al.(2021)Cao, Cohen, and Szpruch]{cao2021identifiability}
Haoyang Cao, Samuel Cohen, and Lukasz Szpruch.
\newblock Identifiability in inverse reinforcement learning.
\newblock \emph{Advances in Neural Information Processing Systems},
  34:\penalty0 12362--12373, 2021.

\bibitem[Charpentier et~al.(2021)Charpentier, Elie, and
  Remlinger]{charpentier2021reinforcement}
Arthur Charpentier, Romuald Elie, and Carl Remlinger.
\newblock Reinforcement learning in economics and finance.
\newblock \emph{Computational Economics}, pp.\  1--38, 2021.

\bibitem[Denis(2019)]{denis2019issues}
Nicholas Denis.
\newblock Issues concerning realizability of blackwell optimal policies in
  reinforcement learning.
\newblock \emph{arXiv preprint arXiv:1905.08293}, 2019.

\bibitem[Dicianno et~al.(2017)Dicianno, Henderson, and
  Parmanto]{dicianno2017design}
Brad~Edward Dicianno, Geoffrey Henderson, and Bambang Parmanto.
\newblock Design of mobile health tools to promote goal achievement in
  self-management tasks.
\newblock \emph{JMIR mHealth and uHealth}, 5\penalty0 (7):\penalty0 e7335,
  2017.

\bibitem[Gleave \& Habryka(2018)Gleave and Habryka]{gleave2018multi}
Adam Gleave and Oliver Habryka.
\newblock Multi-task maximum entropy inverse reinforcement learning.
\newblock \emph{arXiv preprint arXiv:1805.08882}, 2018.

\bibitem[Haarnoja et~al.(2017)Haarnoja, Tang, Abbeel, and
  Levine]{haarnoja2017reinforcement}
Tuomas Haarnoja, Haoran Tang, Pieter Abbeel, and Sergey Levine.
\newblock Reinforcement learning with deep energy-based policies.
\newblock In \emph{International conference on machine learning}, pp.\
  1352--1361. PMLR, 2017.

\bibitem[Jin et~al.(2010)Jin, Qian, and Zhu]{jin2010gaussian}
Zhuo-Jun Jin, Hui Qian, and Miao-Liang Zhu.
\newblock Gaussian processes in inverse reinforcement learning.
\newblock In \emph{2010 International Conference on Machine Learning and
  Cybernetics}, volume~1, pp.\  225--230. IEEE, 2010.

\bibitem[Kapnadak et~al.(2015)Kapnadak, Herndon, Burns, Shim, Enfield, Brown,
  Truwit, and Vinayak]{kapnadak2015clinical}
Siddhartha~G Kapnadak, Steve~E Herndon, Suzanne~M Burns, Y~Michael Shim, Kyle
  Enfield, Cynthia Brown, Jonathon~D Truwit, and Ajeet~G Vinayak.
\newblock Clinical outcomes associated with high, intermediate, and low rates
  of failed extubation in an intensive care unit.
\newblock \emph{Journal of Critical Care}, 30\penalty0 (3):\penalty0 449--454,
  2015.

\bibitem[Mendez et~al.(2018)Mendez, Shivkumar, and Eaton]{mendez2018lifelong}
Jorge Mendez, Shashank Shivkumar, and Eric Eaton.
\newblock Lifelong inverse reinforcement learning.
\newblock \emph{Advances in neural information processing systems}, 31, 2018.

\bibitem[Ng et~al.(2000)Ng, Russell, et~al.]{ng2000algorithms}
Andrew~Y Ng, Stuart Russell, et~al.
\newblock Algorithms for inverse reinforcement learning.
\newblock In \emph{Icml}, volume~1, pp.\ ~2, 2000.

\bibitem[Ramachandran \& Amir(2007)Ramachandran and
  Amir]{ramachandran2007bayesian}
Deepak Ramachandran and Eyal Amir.
\newblock Bayesian inverse reinforcement learning.
\newblock In \emph{IJCAI}, volume~7, pp.\  2586--2591, 2007.

\bibitem[Riachi et~al.(2021)Riachi, Mamdani, Fralick, and
  Rudzicz]{riachi2021challenges}
Elsa Riachi, Muhammad Mamdani, Michael Fralick, and Frank Rudzicz.
\newblock Challenges for reinforcement learning in healthcare.
\newblock \emph{arXiv preprint arXiv:2103.05612}, 2021.

\bibitem[Rolland et~al.(2022)Rolland, Viano, Sch{\"u}rhoff, Nikolov, and
  Cevher]{rolland2022identifiability}
Paul Rolland, Luca Viano, Norman Sch{\"u}rhoff, Boris Nikolov, and Volkan
  Cevher.
\newblock Identifiability and generalizability from multiple experts in inverse
  reinforcement learning.
\newblock \emph{Advances in Neural Information Processing Systems},
  35:\penalty0 550--564, 2022.

\bibitem[Shiarlis et~al.(2016)Shiarlis, Messias, and
  Whiteson]{shiarlis2016inverse}
Kyriacos Shiarlis, Joao Messias, and Shimon Whiteson.
\newblock Inverse reinforcement learning from failure.
\newblock 2016.

\bibitem[Syed et~al.(2008)Syed, Bowling, and Schapire]{syed2008apprenticeship}
Umar Syed, Michael Bowling, and Robert~E Schapire.
\newblock Apprenticeship learning using linear programming.
\newblock In \emph{Proceedings of the 25th international conference on Machine
  learning}, pp.\  1032--1039, 2008.

\bibitem[Yu et~al.(2019)Yu, Yu, Finn, and Ermon]{yu2019meta}
Lantao Yu, Tianhe Yu, Chelsea Finn, and Stefano Ermon.
\newblock Meta-inverse reinforcement learning with probabilistic context
  variables.
\newblock \emph{Advances in neural information processing systems}, 32, 2019.

\bibitem[Zeng et~al.(2022)Zeng, Li, Garcia, and Hong]{zeng2022maximum}
Siliang Zeng, Chenliang Li, Alfredo Garcia, and Mingyi Hong.
\newblock Maximum-likelihood inverse reinforcement learning with finite-time
  guarantees.
\newblock \emph{Advances in Neural Information Processing Systems},
  35:\penalty0 10122--10135, 2022.

\bibitem[Zhou et~al.(2017)Zhou, Bloem, and Bambos]{zhou2017infinite}
Zhengyuan Zhou, Michael Bloem, and Nicholas Bambos.
\newblock Infinite time horizon maximum causal entropy inverse reinforcement
  learning.
\newblock \emph{IEEE Transactions on Automatic Control}, 63\penalty0
  (9):\penalty0 2787--2802, 2017.

\bibitem[Ziebart(2010)]{ziebart2010modeling}
Brian~D Ziebart.
\newblock \emph{Modeling purposeful adaptive behavior with the principle of
  maximum causal entropy}.
\newblock Carnegie Mellon University, 2010.

\bibitem[Ziebart et~al.(2008)Ziebart, Maas, Bagnell, Dey,
  et~al.]{ziebart2008maximum}
Brian~D Ziebart, Andrew~L Maas, J~Andrew Bagnell, Anind~K Dey, et~al.
\newblock Maximum entropy inverse reinforcement learning.
\newblock In \emph{Aaai}, volume~8, pp.\  1433--1438. Chicago, IL, USA, 2008.

\end{thebibliography}
\bibliographystyle{rlc}

\appendix
\setcounter{theorem}{0}
\setcounter{corollary}{0}
\setcounter{lemma}{0}
\setcounter{assumption}{0}
\section{Theorems}\label{apdx:proof}
\subsection{Proof of Theorem~\ref{thm:lp}}\label{apdx:thm_lp}
In this section, we prove that the pre-computed state-action tuple $\Omega_k$ allows us to find a feasible reward function that satisfies the following assumptions:
\begin{assumption}
\label{apdx:assmp}
    For any two distinct expert policies $\pi^*_i,\pi^*_j\in \Pi^*,\ i\neq j,$ optimized under $\gamma^*_i,\gamma^*_j$  ($\gamma^*_i\neq \gamma^*_j$), respectively, there is at least one
    state $s\in\mathcal{S}$ such that $Q_{\pi_i^*}^{r^*,\gamma^*_i}(s,\pi_i^*(s)) > Q_{\pi_i^*}^{r^*,\gamma^*_i}(s,\pi_j^*(s))$.
\end{assumption}
\begin{theorem}
(Restated) For a set of arbitrary distinct discount factors, $\Gamma$ ($\gamma_i\neq\gamma_j$ for $i\neq j$), let $\{z^*_k\}_{k=1}^K \ (z^*_k\in\mathbb{R}^{|\mathcal{S}|\times(|\mathcal{A}-1|)})$ be the optimal solution to the following problem,
\begin{subequations}
\label{apdx_eqn:omega_k}
\begin{align}
    \min_{r,z_k} & \sum_{k=1}^K\mathbf{1}^\intercal z_k \\
    \textup{subject to } 
    & Q^{r,\gamma_k}_{\pi^*_k}(s,\pi^*_k(s)) - Q^{r,\gamma_k}_{\pi^*_k}(s,a)+z_k(s,a)\geq 1\ \forall s\in\mathcal{S},\ a\in \mathcal{A}\backslash \pi^*_k(s), k\in [K]\\
     & Q^{r,\gamma_k}_{\pi^*_k}(s,\pi^*_k(s)) - Q^{r,\gamma_k}_{\pi^*_k}(s,a)\geq 0\ \forall s\in\mathcal{S},\ a\in \mathcal{A}\backslash \pi^*_k(s), k\in [K]\\
    & z_{k}\geq 0\  \forall k\in [K],
\end{align}
\end{subequations}
where $z_k(s,a)$ denotes the element of vector $z_k$ corresponding to the state-action tuple $(s,a)$. 
There exists a feasible reward solution $r$ that satisfies Assumption~\ref{apdx:assmp} if, for any pair of policies $\pi^*_i,\ \pi^*_j$ ($i\neq j$), there exists a state $s$ such that $z^*_i(s,\pi^*_j(s))=0$.
\end{theorem}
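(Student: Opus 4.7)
The plan is to reinterpret the auxiliary LP in Eq.~\ref{apdx_eqn:omega_k} as the search for a reward function that maximizes the number of expert--state--action tuples on which the $Q$-function difference is strictly positive, and then to read off this maximal set directly from the slack variables. Write $\Delta_k(s,a;r) := Q^{r,\gamma_k}_{\pi^*_k}(s,\pi^*_k(s)) - Q^{r,\gamma_k}_{\pi^*_k}(s,a)$, which for fixed $\Gamma$ and fixed $\pi^*_k$ is linear in $r$.

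First I would eliminate the slacks. For any feasible $r$ the best response is $z_k(s,a) = \max\{0, 1 - \Delta_k(s,a;r)\}$. Because $r$ is unconstrained in norm inside the LP and $\Delta_k$ is positively homogeneous in $r$, replacing $r$ by $\alpha r$ with $\alpha \geq 1/\min\{\Delta_k(s,a;r) : \Delta_k(s,a;r) > 0\}$ sends every strictly positive $\Delta$ to a value $\geq 1$ and its slack to $0$, while every zero $\Delta$ leaves its slack at $1$. Hence the optimal LP value equals $\min_{r \text{ feasible}} |\{(s,a,k) : \Delta_k(s,a;r) = 0\}|$, i.e.\ the LP maximizes the cardinality of the ``strictly-distinguishing'' set, which is the informal statement made in the paper's sketch.

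Next I would show that this maximum is achieved by a single reward $r^*$ using a union-closure argument. If $r_1$ and $r_2$ are both feasible (all $\Delta$'s nonnegative), then $r_1 + r_2$ is feasible by linearity, and $\Delta_k(s,a;r_1+r_2) > 0$ whenever either $\Delta_k(s,a;r_1) > 0$ or $\Delta_k(s,a;r_2) > 0$. Summing over finitely many feasible rewards whose strictly-positive supports collectively cover the maximum possible set therefore produces a single $r^*$ realizing that maximum support $S^*$. Combined with the rescaling of step one, the pair $(r^*,z^*)$ with $z_k^*(s,a)=0$ on $S^*$ and $z_k^*(s,a)=1$ off $S^*$ is LP-optimal, so $\Omega_k = \{(s,a) : z_k^*(s,a) = 0\}$ is exactly the $k$-th slice of $S^*$.

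Finally I would translate the hypothesis into Assumption~\ref{apdx:assmp}. The hypothesis asserts that for each ordered pair $i \neq j$ there is some state $s$ with $z_i^*(s, \pi^*_j(s)) = 0$, which by the preceding characterization is equivalent to $\Delta_i(s, \pi^*_j(s); r^*) > 0$; this is precisely Assumption~\ref{apdx:assmp} applied to $r^*$, and a further positive rescaling brings $|r^*| \preceq r_{\max}$ without changing any sign. The main subtlety I expect is the union-closure step in paragraph three: since LP optima need not be unique, one must argue that the ``canonical'' optimum obtained by summing achieves a support \emph{as large as} any other feasible support. This is handled by contradiction: if some feasible $r'$ had a strictly-positive tuple outside $S^*$, then $r^* + r'$ would be feasible and would enlarge the support, contradicting LP optimality. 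Hence $S^*$ is the largest achievable support, and the characterization of $\Omega_k$ in terms of $z_k^*$ is unambiguous.
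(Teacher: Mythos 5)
Your proposal is correct and follows the same core route as the paper: both reduce the slack LP to the problem of maximizing the number of state--action--expert tuples with strictly positive $Q$-difference, using the positive homogeneity of $\Delta_k(\cdot;r)$ in $r$ to rescale any feasible reward so that every strictly positive difference clears the threshold $1$ and its slack drops to zero, and then reading Assumption~\ref{apdx:assmp} off the zero pattern of $z^*$ (note that the ``if'' direction of the statement is in fact immediate from the constraint $\Delta_i(s,\pi^*_j(s);r)+z_i(s,\pi^*_j(s))\geq 1$: a zero slack forces $\Delta_i\geq 1>0$). Where you genuinely add something is the union-closure step: the paper argues by contradiction only that an LP optimum attains the \emph{maximum cardinality} of strictly distinguishing tuples, which by itself does not rule out two incomparable maximal supports; your observation that $r_1+r_2$ remains feasible and its strict-positivity support is the union of the two supports shows that there is a single canonical maximal set $S^*$ containing the support of \emph{every} feasible reward, and that every LP optimum has $z^*$ equal to the indicator of the complement of $S^*$. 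This is exactly what is needed to justify the stronger claim in the paper's proof sketch that failure of the condition implies \emph{no} feasible reward satisfies Assumption~\ref{apdx:assmp}, a claim the paper's written argument supports only implicitly. One cosmetic caveat: when no feasible $r$ has any strictly positive $\Delta$, the minimum in your rescaling factor is over an empty set, so that edge case should be dispatched separately (all slacks then equal $1$ and the conclusion is vacuous).
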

\begin{proof}
Given an optimization problem in the following: 
\begin{subequations}
\label{eqn:apdx_multi_gamma_irl}
\begin{align}
    \max_{r} & \sum_{(s,a)} \mathbbm{1}_
    {\{Q^{r,\gamma_k}_{\pi^*_k}(s,\pi^*_k(s)) - Q^{r,\gamma_k}_{\pi^*_k}(s,a)\}\geq 0}\\
    \text{subject to } &Q^{r,\gamma_k}_{\pi^*_k}(s,\pi^*_k(s)) - Q^{r,\gamma_k}_{\pi^*_k}(s,a)\geq 0 \quad \forall s\in\mathcal{S},\ a\in \mathcal{A}\backslash \pi^*_k(s), k\in [K]
\end{align}
\end{subequations}
Let $\hat{r}$ be an optimal solution to the optimization problem in Eq.~\ref{eqn:apdx_multi_gamma_irl}. It is easy to see that if, for any pair of policies $\pi^*_i,\ \pi^*_j$ ($i\neq j$), there exists a state $s$ such that $Q^{\hat{r},\gamma_i}_{\pi^*_i}(s,\pi^*_i(s)) - Q^{\hat{r},\gamma_i}_{\pi^*_i}(s,\pi^*_j(s)) > 0$, then Assumption~\ref{apdx:assmp} is satisfied. 

We start the proof by showing that an optimal solution to optimization problem~\ref{apdx_eqn:omega_k} is also an optimal solution to optimization problem~\ref{eqn:apdx_multi_gamma_irl}. 

With Bellman Equations, the difference of the $Q$-functions can be written as,
$$Q^{r,\gamma_k}_{\pi^*_k}(s,\pi^*_k(s)) - Q^{r,\gamma_k}_{\pi^*_k}(s,a)= (T(\cdot|s,\pi^*_k(s)) - T(\cdot|s,a))(I-\gamma_k T^{\pi^*_k})^{-1}r.$$

Let $W\in\mathbb{R}^{(|\mathcal{S}|\times (|\mathcal{A}|-1)\times K)\times |\mathcal{S}|}$ be a matrix where each row $$w^\intercal_{s,a,k}= (T(\cdot|s,\pi^*_k(s)) - T(\cdot|s,a))(I-\gamma_k T^{\pi^*_k})^{-1},\ \forall s\in\mathcal{S},a\in \mathcal{A}\setminus \pi^*_k(s), k\in[K].$$
The optimization problem in Eq.~\ref{apdx_eqn:omega_k} then can be rewritten as
\begin{subequations}
\begin{align}
\max_r & |Wr|_0\\
\text{subject to } & Wr\geq 0
\end{align}
\label{eqn:multi_gamma_cvx_irl_step_1}
\end{subequations}

The optimization problem in Eq.~\ref{eqn:apdx_multi_gamma_irl}  can be rewritten as
\begin{subequations}
\begin{align}
    \min_{r,z} & \mathbf{1}^\intercal z \label{eqn:step1_obj}\\
    \text{subject to } & Wr+z\geq 1\label{eqn:slack}\\
    & Wr \geq 0 \label{eqn:step1_const1}\\
    & z\geq 0\label{eqn:step1_const2}
\end{align}
\label{eqn:multi_gamma_cvx_irl_step_1_dual}
\end{subequations}

Let $r$ be a reward function that satisfies Constraint~\ref{eqn:step1_const1}. Then for any constant $c>0$, $cr$ also satisfies Constraint~\ref{eqn:step1_const1}. Now, let $c$ be a constant such that any positive element in $cWr$ is larger than $1$. 
Denote the $i$-th element of any vector $x$ by $x_i$. An optimal solution, $z^*$ and $\hat{r}$. of optimization problem in~\ref{eqn:multi_gamma_cvx_irl_step_1_dual} has the following form:
\begin{equation}\label{eqn:opt_z}
    z^*_i = 1 \text{ if } (cW\hat{r})_i=0,\ z^*_i=0 \text{ if } (cW\hat{r})_i\geq 1.
\end{equation}

We show that $\hat{r}$ is also an optimal solution to optimization problem~\ref{eqn:multi_gamma_cvx_irl_step_1} with proof by contradiction.

Let $z^*,\hat{r}$  be an optimal solution to optimization problem~\ref{eqn:multi_gamma_cvx_irl_step_1_dual},
but $\hat{r}$ is not an optimal solution to optimization problem~\ref{eqn:multi_gamma_cvx_irl_step_1}. 
Then there exists another $\tilde{r}$ such that $W\tilde{r}\geq 0$ and $|W\tilde{r}|_0 >|W\hat{r}|_0$. 
Now, let $c$ be a constant such that any positive element in $W(c\tilde{r})$ is larger than or equal $1$ and construct a vector $\tilde{z}$ according to Eq.~\ref{eqn:opt_z}.
Because $|cW\tilde{r}|_0 =|W\tilde{r}|_0>|W\hat{r}|_0$, $W\tilde{r}\geq 0$ and $W\hat{r}\geq 0$, $\tilde{z}$ will have more zero elements than $z^*$. 
Thus, $\mathbf{1}^\intercal \tilde{z} < \mathbf{1}^\intercal z^*$, which contradicts the assumption that $z^*$ is optimal.

Additionally, with the definition of $z^*$ in Eq.~\ref{eqn:opt_z}, we can see that  $$w^\intercal_{s,a,k} = Q^{\hat{r},\gamma_k}_{\pi^*_k}(s,\pi^*_k(s)) - Q^{\hat{r},\gamma_k}_{\pi^*_k}(s,a) > 0$$ if and only if $z^*_k(s,a)=0$.
Thus, there exists a feasible reward solution $r$ that satisfies Assumption~\ref{apdx:assmp} if, for any pair of policies $\pi^*_i,\ \pi^*_j$, there exists a state $s$ such that $z^*_i(s,\pi^*_j(s))=0$.
\end{proof}

\subsection{Proof of Theorem~\ref{thm:strong_duality}}\label{apdx:thm_duality}

In this section, we show that under our targetted IRL setting in~\ref{sec:prob_set}, strong duality does not hold between the MCE-IRL problem and its Lagrangian dual. Furthermore, solving the MCE-IRL problem is not equivalent to solving the ML-IRL problem.

\begin{theorem} (Restated)
Let the multi-planning horizon MCE-IRL problem be
\begin{subequations}
\small{
\label{apdx_eqn:mce_irl_multi}
\begin{align}
\max_{\Gamma\in[0,1]^K}\max_{\{\mu_{\tilde{\pi}_k}^{\gamma_k}(s,a)\}} &\quad  \sum_{k=1}^K \mathcal{H}^{\gamma_k}_{\tilde{\pi}_k}
=\sum_{k=1}^K\sum_{(s,a)} -\log\left(\frac{\mu^{\gamma_k}_{\tilde{\pi}_k}(s,a)}{\sum_a\mu^{\gamma_k}_{\tilde{\pi}_k}(s,a)}\right)\mu^{\gamma_k}_{\tilde{\pi}_k}(s,a)\label{apdx_eqn:mce_irl_multi_obj}\\
\text{subject to} &\quad f_{\tilde{\pi}_k}^{\gamma_k} = f_{\tilde{\pi}_k^*}^{\gamma_k}\quad \forall k\in[K]\label{apdx_eqn:mce_irl_multi_const1}\\
&\quad \sum_a\mu_{\tilde{\pi}_k}^{\gamma_k}(s',a) = \rho_0(s')+{\gamma_k}\sum_s\sum_a T(s'|s,a)\mu_{\tilde{\pi}_k}^{\gamma_k}(s,a)\quad\forall s'\in\mathcal{S},\ k\in[K]\label{apdx_eqn:mce_irl_multi_const2}\\
&\quad\mu_{\tilde{\pi}_k}^{\gamma_k}(s,a)\geq 0 \quad\forall (s,a)\in(\mathcal{S}\times\mathcal{A}),\ k\in[K]\label{apdx_eqn:mce_irl_multi_const3}.
\end{align}
}
\end{subequations}
Let the multi-planning horizon ML-IRL problem be
\begin{subequations}
\label{apdx_eqn:ml_irl_multi}
\begin{align}
\max_{\Gamma\in[0,1]^K}\max_{\theta}  \quad \mathcal{L}(\theta,\Gamma) &=\sum_{k=1}^K\mathbb{E}_{\tilde{\pi}^*_k}\left[\sum_{t=0}^\infty \gamma^t \log\tilde{\pi}^\theta_k(A_t|S_t)\right]\label{apdx_eqn:ml_irl_multi_obj}\\
&=\sum_{k=1}^K\sum_{(s,a)} \log\left(\frac{\mu^{\gamma_k}_{\tilde{\pi}_k}(s,a)}{\sum_a\mu^{\gamma_k}_{\tilde{\pi}_k}(s,a)}\right)\mu^\gamma_{\tilde{\pi}^*_k}(s,a)\\
\textup{subject to} \quad \tilde{\pi}^\theta_k &= \arg\max_{\pi} \tilde{Q}^{\theta,\gamma_k}_\pi(s,a)    \quad\forall  k\in[K]
\end{align}
\end{subequations}
    Let $\mathcal{H}^*, \mathcal{G}^*$ be the optimal value of the multi-planning horizon MCE-IRL problem in Eq.~\ref{eqn:mce_irl_multi} and its Lagrangian dual, respectively. Let $\mathcal{L}^*$ be the optimal value of the ML-IRL problem in Eq.~\ref{eqn:ml_irl_multi}.
    Then, we have $\mathcal{G}^* \geq\mathcal{H}^*\geq-\mathcal{L}^*$. 
\end{theorem}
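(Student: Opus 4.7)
The plan is to establish the two inequalities via separate arguments and then return briefly to the strict gap that justifies the section title. The first inequality $\mathcal{G}^* \geq \mathcal{H}^*$ is weak duality: for any dual multipliers $(\theta, \nu, \eta \geq 0)$ the Lagrangian supremum over the primal variables upper-bounds the primal objective at every feasible point, and taking the infimum over the dual variables yields $\mathcal{G}^* \geq \mathcal{H}^*$. This holds without any convexity assumption and is independent of whether the primal is well-behaved, so it requires no work beyond writing down the Lagrangian.

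For the second inequality $\mathcal{H}^* \geq -\mathcal{L}^*$, I would sandwich both values using the quantity $E := \sum_k \mathcal{H}^{\gamma^*_k}_{\tilde\pi^*_k}$, the total entropy of the true expert policies evaluated at the true discount factors. The upper half $\mathcal{H}^* \geq E$ is obtained by exhibiting the pair $(\Gamma^*, \{\mu_{\tilde\pi^*_k}^{\gamma^*_k}\}_k)$ as a feasible point of the MPMCE-IRL program in Eq.~\ref{apdx_eqn:mce_irl_multi}: the expert occupancy measures satisfy the Bellman flow equations (Eq.~\ref{apdx_eqn:mce_irl_multi_const2}) and non-negativity by construction, and the feature-matching constraints (Eq.~\ref{apdx_eqn:mce_irl_multi_const1}) reduce to $f^{\gamma^*_k}_{\tilde\pi^*_k} = f^{\gamma^*_k}_{\tilde\pi^*_k}$, which holds trivially. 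Evaluating the objective at this feasible point recovers exactly $E$.

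The lower half $E \geq -\mathcal{L}^*$ follows from plugging the true reward parameters $\theta^*$ and true discounts $\Gamma^*$ into the MPML-IRL objective. By the setup in Section~\ref{sec:prob_set}, each $\tilde\pi^*_k$ is the entropy-regularized optimal policy under $(\theta^*, \gamma^*_k)$, hence $\tilde\pi^{\theta^*}_k = \tilde\pi^*_k$, so
\begin{equation*}
\mathcal{L}(\theta^*, \Gamma^*) \;=\; \sum_k \mathbb{E}_{\tilde\pi^*_k}\!\left[\sum_{t=0}^\infty (\gamma^*_k)^t \log \tilde\pi^*_k(A_t|S_t)\right] \;=\; -E.
\end{equation*}
Since $(\theta^*, \Gamma^*)$ lies in the MPML-IRL feasible set, $\mathcal{L}^* \geq \mathcal{L}(\theta^*, \Gamma^*) = -E$, i.e., $-\mathcal{L}^* \leq E$. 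Chaining the two halves delivers $\mathcal{H}^* \geq E \geq -\mathcal{L}^*$, finishing the inequality chain of the theorem.

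The main obstacle I anticipate lies not in the weak inequalities above but in substantiating the companion claim, foregrounded by the section title, that the leftmost inequality is typically strict: strong duality really fails. The obstruction is the bilinear coupling between $\gamma_k$ and $\mu_{\tilde\pi_k}^{\gamma_k}$ in Eq.~\ref{apdx_eqn:mce_irl_multi_const2}, which renders the MPMCE-IRL primal nonconcave in the joint variable $(\mu, \Gamma)$, so the usual Slater-based minimax route to strong duality is unavailable. To promote $\geq$ to $>$, I would inspect the stationarity conditions of the joint Lagrangian $L(\mu, \Gamma; \theta, \nu, \eta)$ with respect to $\gamma_k$ and show they are, for generic dual multipliers, incompatible with the $\mu$-block stationarity conditions---so no saddle point of $L$ exists, forcing $\mathcal{G}^* > \mathcal{H}^*$ in general.
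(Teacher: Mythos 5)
Your proof of the stated inequality chain is correct, and the second inequality is handled by a genuinely different route than the paper's. For $\mathcal{G}^*\geq\mathcal{H}^*$ you and the paper coincide: both invoke weak duality, which needs no convexity. For $\mathcal{H}^*\geq-\mathcal{L}^*$ the paper argues very tersely that an optimal MPMCE-IRL solution is feasible for MPML-IRL and concludes $\mathcal{H}^*\geq\mathcal{H}\geq-\mathcal{L}^*$; you instead sandwich both quantities through $E=\sum_k\mathcal{H}^{\gamma^*_k}_{\tilde{\pi}^*_k}$, using feasibility of $(\Gamma^*,\{\mu^{\gamma^*_k}_{\tilde{\pi}^*_k}\})$ for the primal on one side and feasibility of $(\theta^*,\Gamma^*)$ for MPML-IRL (so that $\mathcal{L}(\theta^*,\Gamma^*)=-E$) on the other. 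Your version is more explicit and easier to verify, at the price of leaning on realizability --- the true generating pair $(\theta^*,\Gamma^*)$ must lie in the search space, which the paper's problem setting in Section~\ref{sec:prob_set} does assume (linear reward $r_{\theta^*}$, $\Gamma^*\subset[0,1]^K$), so this is a legitimate use of the hypotheses rather than a gap. The one place where you fall short of the paper is the strictness claim that motivates the section title: you correctly identify that the bilinear coupling of $\gamma_k$ with $\mu^{\gamma_k}_{\tilde{\pi}_k}$ breaks concavity and propose to show stationarity conditions are generically incompatible, but the paper's argument is sharper and more concrete --- it shows that at any candidate saddle $(\tilde{\theta},\tilde{\Gamma})$ recovering the expert occupancies, increasing some $\gamma_k$ strictly increases the dual function because the discounted causal entropy $\mathcal{H}^{\gamma_k}_{\tilde{\pi}^*_k}$ is monotonically increasing in $\gamma_k$, so no saddle point exists. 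Since the theorem as stated only asserts the weak inequalities, your proposal proves what is claimed; if you want to also establish failure of strong duality, you should adopt (or rediscover) the entropy-monotonicity argument rather than the vaguer ``generic incompatibility'' plan.
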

\begin{proof}\quad\newline
Part I -- Proof of $\mathcal{G}^*\geq \mathcal{H}^*$:

The first inequality holds $\mathcal{G}^*\geq \mathcal{H}^*$ by the weak duality. 
We further show that when the expert policies are stochastic, the optimal solutions to the primal problem are not critical points of the Lagrangian dual function.  
Thus, strong duality may not always hold.

The Lagrangian dual problem of the primal problem in Eq.~\ref{apdx_eqn:mce_irl_multi} is
\begin{align*}
\min_{\Theta} \max_{\Gamma,M} &\mathcal{G}(\Theta,\Gamma,M) \quad \text{,where }\\
\mathcal{G}(\Theta,\Gamma,M) &= \sum_{k=1}^K\sum_{(s,a)} -\log\left(\frac{\mu^{\gamma_k}_{\tilde{\pi}_k}(s,a)}{\sum_a\mu^{\gamma_k}_{\tilde{\pi}_k}(s,a)}\right)\mu^{\gamma_k}_{\tilde{\pi}_k}(s,a)+\theta_k^\intercal(f_{\tilde{\pi}_k}^{\gamma_k} -f_{\tilde{\pi}^*_k}^{\gamma_k})\\
&+ \sum_{s',k} x_{s',k} \left(\mu_{\tilde{\pi}_k}^{\gamma_k}(s') - \rho_0(s')-{\gamma_k}\sum_s\sum_a T(s'|s,a)\mu_{\tilde{\pi}_k}^{\gamma_k}(s,a)\right)\\
M &= \{\mu_{\tilde{\pi}_k}^{\gamma_k}(s,a)\},\Theta = \{\theta_k, x_{s,k}\},\Gamma=\{\gamma_k\}_{k=1}^K.
\end{align*}
We treat the constraints in Eq.~\ref{apdx_eqn:mce_irl_multi_const3} as implicit because the objective function in Eq.~\ref{apdx_eqn:mce_irl_multi_obj} is not defined under non-positive $\mu_{\tilde{\pi}_k}^{\gamma_k}(s,a)$. 

For any Lagrangian multipliers $\Theta$, we find the critical points of $\mathcal{G}(\Theta,\Gamma,M)$ by setting the gradient to zero.
\begin{align}
    \frac{\partial \mathcal{G}}{\partial \mu_{\tilde{\pi}_k}^{\gamma_k}(s,a)} &= -\log\left(\frac{\mu^{\gamma_k}_{\tilde{\pi}_k}(s,a)}{\sum_a\mu^{\gamma_k}_{\tilde{\pi}_k}(s,a)}\right)+
    \theta_k^\intercal \phi^{s}+x_{s,k}
    - \gamma_k\sum_{s'} x_{s',k}T(s'|s,a)=0\nonumber\\
    \log\left(\frac{\mu^{\gamma_k}_{\tilde{\pi}_k}(s,a)}{\sum_a\mu^{\gamma_k}_{\tilde{\pi}_k}(s,a)}\right) &= 
    \theta_k^\intercal \phi^{s}+x_{s,k}
    - \gamma_k\sum_{s'} x_{s',k}T(s'|s,a)\label{apdx:grad_mu}
\end{align}
By Theorem 1 in ~\cite{cao2021identifiability}, Eq.~\ref{apdx:grad_mu} is satisfied by the state-action visitation count of the optimal policy under the reward parameters $\theta_k$ and discount factor $\gamma_k$ with $-x_{s,k}$ being its value function. Under our targetted IRL setting, the state-action visitation counts need to be induced by a global reward function. Thus, we have $\theta_1=\cdots=\theta_k=\theta$.

Denote the optimal policy under the reward parameters $\theta$ and discount factor $\gamma_k$ as $\tilde{\pi}^\theta_k$ and plug it into $\mathcal{G}$, the Lagrangian dual problem becomes
\begin{subequations}
\label{apdx_dqn:lagrangian_dual_2}
\begin{align}
    \min_{\theta} \max_{\Gamma} \ f(\theta,\gamma)&= \sum_{k=1}^K\sum_{(s,a)} -\log\left(\frac{\mu^{\gamma_k}_{\displaystyle\tilde{\pi}^\theta_k}(s,a)}{\sum_a\mu^{\gamma_k}_{\displaystyle\tilde{\pi}^\theta_k}(s,a)}\right)\mu^{\gamma_k}_{\displaystyle\tilde{\pi}^\theta_k}(s,a)+\theta_k^\intercal(f_{\displaystyle\tilde{\pi}^\theta_k}^{\gamma_k}-f_{\displaystyle\tilde{\pi}^*_k}^{\gamma_k} )\\
    \textup{subject to} \quad \displaystyle\tilde{\pi}^\theta_k &= \arg\max_{\pi} \displaystyle\tilde{Q}^{\theta,\gamma_k}_\pi(s,a)    \quad\forall  k\in[K]
\end{align}
\end{subequations}
Both the primal and the dual problems have an optimal solution because both of them feasible and bounded.
Strong duality holds iff there exists a saddle point for function $f(\theta, \Gamma)$. That is, there exists some $\tilde\theta, \tilde{\Gamma}$ such that 
\begin{equation*}
    \forall \theta, \Gamma\in[0,1]^K,\quad f(\tilde\theta, \Gamma)\leq f(\tilde\theta, \tilde\Gamma)\leq  f(\theta, \tilde\Gamma).
\end{equation*}
Furthermore, if strong duality holds, $\tilde\Gamma$ is a global maximum point of the primal problem and $\tilde\theta$ is a global minimum point of the Lagrangian dual. 
By Corollary~\ref{cor:mce_feasibility}, we know that
$\tilde\Gamma$ is a global maximum point iff there exist feasible reward functions that reconstruct the expert policies. 
Corollary~\ref{cor:mce_feasibility} also tells us that $\tilde{\theta}\in \arg\min f(\theta,\tilde\Gamma)$ are the reward parameters such that 
$\mu^{\gamma_k}_{\displaystyle\tilde{\pi}^{\tilde\theta}_k}=\mu^{\gamma_k}_{\displaystyle\tilde{\pi}^{*}_k}$. 

We now rewrite the Lagrangian dual function $f$ as follows:
\begin{equation*}
    f(\theta, \Gamma)= \sum_{k=1}^K\mathcal{H}_{\displaystyle\tilde{\pi}^\theta_k}^{\gamma_k}+\theta_k^\intercal(f_{\displaystyle\tilde{\pi}^\theta_k}^{\gamma_k}-f_{\displaystyle\tilde{\pi}^*_k}^{\gamma_k} ) =\sum_{k=1}^K \tilde{V}^{\theta,\gamma}_{\displaystyle\tilde{\pi}^\theta_k}-\tilde{V}^{\theta,\gamma}_{\displaystyle\tilde{\pi}^*_k}+\mathcal{H}_{\displaystyle\tilde{\pi}^*_k}^{\gamma_k}
\end{equation*}

When $\mu^{\gamma_k}_{\displaystyle\tilde{\pi}^{\tilde\theta}_k}=\mu^{\gamma_k}_{\displaystyle\tilde{\pi}^{*}_k}$, we have that $f(\tilde\theta,\tilde\Gamma)=\mathcal{H}_{\displaystyle\tilde{\pi}^*_k}^{\displaystyle\tilde\gamma_k}$. However, for such $\tilde\theta$, we can find another
$\gamma_k > \tilde{\gamma}_k$ 
for some $k\in[K]$ such that 
\begin{equation*}
f(\tilde\theta, \Gamma) =\sum_{k=1}^K \tilde{V}^{\displaystyle\tilde\theta,\gamma}_{\displaystyle\tilde{\pi}^{\displaystyle\tilde\theta}_k}-\tilde{V}^{\displaystyle\tilde\theta,\gamma}_{\displaystyle\tilde{\pi}^*_k}+\mathcal{H}_{\displaystyle\tilde{\pi}^*_k}^{\gamma_k}
\stackrel{(i)}{\geq} 0+\mathcal{H}_{\displaystyle\tilde{\pi}^*_k}^{\gamma_k}
\stackrel{(ii)}{>}\mathcal{H}_{\displaystyle\tilde{\pi}^*_k}^{\displaystyle\tilde\gamma_k}
\end{equation*}
where (i) follows from the fact that $\tilde{\pi}^{\displaystyle\tilde\theta}_k$ is optimal for reward parameter $\tilde\theta$ and discount fact $\gamma_k$ and (ii) follows from the fact that the entropy is monotonically increasing on $\gamma_k\in[0,1]$. Thus, strong duality does not hold.

Part II -- Proof of $\mathcal{H}^*\geq -\mathcal{L}^*$:

By negating the objective function in Eq.~\ref{apdx_eqn:ml_irl_multi_obj}, we have that
\begin{align*}
\min_{\Gamma\in[0,1]^K}\min_{\theta}  \quad -\mathcal{L}(\theta,\Gamma) 
&=\sum_{k=1}^K\sum_{(s,a)} \log\left(\frac{\mu^{\gamma_k}_{\displaystyle\tilde{\pi}^\theta_k}(s,a)}{\sum_a\mu^{\gamma_k}_{\displaystyle\tilde{\pi}^\theta_k}(s,a)}\right)\mu^\gamma_{\displaystyle\tilde{\pi}^*_k}(s,a)\\
\textup{subject to} \quad \displaystyle\tilde{\pi}^\theta_k &= \arg\max_{\pi} \tilde{Q}^{\theta,\gamma_k}_\pi(s,a)    \quad\forall  k\in[K]
\end{align*}
Because an optimal solution to the multi-planing horizon MCE-IRL problem in Eq.~\ref{apdx_eqn:mce_irl_multi} is a feasible solution to the multi-planing horizon ML-IRL problem in Eq.~\ref{apdx_eqn:ml_irl_multi}. 
Thus, $\mathcal{H}^*\geq \mathcal{H} \geq-\mathcal{L}^* $.
\end{proof}
\subsection{Proof of Proposition~\ref{prop:ml_irl}}\label{apdx_sec:prop_proof}
In this section, we show that an optimal solution to the naive multi-planning horizon ML-IRL problem may not reconstruct the expert policies. Thus, we cannot use this formulation to learn a feasible reward function.
\begin{proposition}(Restated)
Let $\Gamma$, $\theta$ be an optimal solution to the ML-IRL in Eq.~\ref{eqn:ml_irl_multi} and $\tilde{\pi}^\theta_k$ be the optimal policy for the reward parameters $\theta$ and the discount factor $\gamma_k$. Then $\tilde{Q}^{{\theta},{\gamma}_k}_{\tilde{\pi}^\theta_k}\geq \tilde{Q}^{{\theta},{\gamma}_k}_{\tilde{\pi}_k^*}$ (i.e., $\tilde{\pi}_k^*$ may not be optimal under the optimal solution $\Gamma,\theta$).
\end{proposition}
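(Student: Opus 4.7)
The inequality $\tilde{Q}^{\theta,\gamma_k}_{\tilde\pi^\theta_k}\ge\tilde{Q}^{\theta,\gamma_k}_{\tilde\pi^*_k}$ is automatic from the definition of $\tilde\pi^\theta_k$ as the soft-optimal policy under $(\theta,\gamma_k)$; the genuine content of the proposition is the parenthetical, namely that at the MPML-IRL maximizer one typically has $\tilde\pi^*_k\neq\tilde\pi^\theta_k$. My strategy is to show that no feasible point $(\hat\theta,\hat\Gamma)$---one for which $\tilde\pi^{\hat\theta}_k=\tilde\pi^*_k$ for every $k$---can be a stationary point of $\mathcal{L}$. Because $\mathcal{L}$ is differentiable in $(\theta,\Gamma)$ and $\Gamma$ ranges over the open interior of $[0,1]^K$, this forces the argmax to lie off the feasible set and hence to differ from at least one expert's policy.

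The key step is a KL-decomposition of the objective. Writing $\mu^{\gamma_k}_{\tilde\pi^*_k}(s,a)=\mu^{\gamma_k}_{\tilde\pi^*_k}(s)\,\tilde\pi^*_k(a\mid s)$ and rearranging,
\[
\mathcal{L}(\theta,\Gamma)=-\sum_{k=1}^K\sum_s \mu^{\gamma_k}_{\tilde\pi^*_k}(s)\Bigl[\mathcal{H}\bigl(\tilde\pi^*_k(\cdot\mid s)\bigr)+D_{\mathrm{KL}}\bigl(\tilde\pi^*_k(\cdot\mid s)\,\Vert\,\tilde\pi^\theta_k(\cdot\mid s)\bigr)\Bigr].
\]
At $(\hat\theta,\hat\Gamma)$ the KL term vanishes in every $(s,k)$. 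Its first variation in $\gamma_k$ also vanishes there: using $\tilde\pi^{\hat\theta}_k=\tilde\pi^*_k$ and the normalization $\sum_a\tilde\pi^\theta_k(a\mid s)=1$,
\[
\partial_{\gamma_k} D_{\mathrm{KL}}\big|_{(\hat\theta,\hat\Gamma)}=-\sum_a \tilde\pi^*_k(a\mid s)\,\frac{\partial_{\gamma_k}\tilde\pi^\theta_k(a\mid s)}{\tilde\pi^\theta_k(a\mid s)}=-\partial_{\gamma_k}\sum_a\tilde\pi^\theta_k(a\mid s)=0,
\]
and this collapse simultaneously disposes of all implicit $\gamma_k$-dependence routed through the soft-Bellman fixed point defining $\tilde\pi^\theta_k$.

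With the KL variation gone, and noting that $\mathcal{H}(\tilde\pi^*_k(\cdot\mid s))$ is independent of $\gamma_k$ (the experts are fixed data), the partial derivative reduces to
\[
\frac{\partial\mathcal{L}}{\partial\gamma_k}\bigg|_{(\hat\theta,\hat\Gamma)}=-\sum_s \frac{\partial\mu^{\gamma_k}_{\tilde\pi^*_k}(s)}{\partial\gamma_k}\,\mathcal{H}(\tilde\pi^*_k(\cdot\mid s)).
\]
Using the closed form $\mu^{\gamma_k}_{\tilde\pi^*_k}=(I-\gamma_k P^{\tilde\pi^*_k})^{-1}\rho_0$ with $P^{\tilde\pi^*_k}_{s',s}=\sum_a T(s'\mid s,a)\tilde\pi^*_k(a\mid s)$, differentiation yields $\partial_{\gamma_k}\mu^{\gamma_k}_{\tilde\pi^*_k}\ge 0$ componentwise, strictly positive on reachable states. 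Because every soft-optimal policy for $\lambda>0$ is strictly stochastic on states with two or more feasible actions, $\mathcal{H}(\tilde\pi^*_k(\cdot\mid s))>0$ on a set of positive visitation, so the right-hand side is strictly negative. Thus $(\hat\theta,\hat\Gamma)$ is not critical for $\mathcal{L}$ and cannot be the MPML-IRL argmax.

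The main obstacle I anticipate is that $\tilde\pi^\theta_k$ is defined only implicitly as a soft-Bellman fixed point, so a brute-force differentiation would require the implicit function theorem on the soft-Bellman operator. The KL decomposition bypasses this: every $\gamma_k$-derivative of $\tilde\pi^\theta_k$ enters $\mathcal{L}$ only through the KL term, whose first variation collapses by the normalization identity above, so $\partial_{\gamma_k}\tilde\pi^\theta_k$ never needs to be computed explicitly. This is the device that lets the plan go through with purely algebraic manipulation.
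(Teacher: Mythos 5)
Your proof is correct and reaches the same endpoint as the paper's: at any $(\hat\theta,\hat\Gamma)$ with $\tilde{\pi}^{\hat\theta}_k=\tilde{\pi}^*_k$, the partial derivative $\partial\mathcal{L}/\partial\gamma_k$ collapses to (a signed multiple of) the derivative of the discounted causal entropy of the expert, which is nonzero for stochastic soft-optimal experts, so no feasible point is a critical point of $\mathcal{L}$. The route is genuinely different, though. The paper substitutes the soft-Bellman identities $\log\tilde{\pi}^\theta=(\tilde{Q}-\tilde{V})/\lambda$, telescopes the value terms, reparametrizes $\gamma_k=\sigma(\delta_k)$, and then differentiates $\tilde{Q}^{\theta,\gamma_k}_{\tilde{\pi}^\theta_k}$ while tacitly ignoring the dependence of $\tilde{\pi}^\theta_k$ on $\gamma_k$ (an unstated envelope/Danskin step). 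Your cross-entropy decomposition $\mathcal{L}=-\sum_{k,s}\mu^{\gamma_k}_{\tilde{\pi}^*_k}(s)\bigl[\mathcal{H}(\tilde{\pi}^*_k(\cdot\mid s))+D_{\mathrm{KL}}(\tilde{\pi}^*_k\Vert\tilde{\pi}^\theta_k)\bigr]$ makes that step explicit and elementary: the implicit $\gamma_k$-dependence enters only through the KL term, whose value and first variation both vanish at $q=p$, so $\partial_{\gamma_k}\tilde{\pi}^\theta_k$ never needs to be computed. That is cleaner than the paper's telescoping and is the main thing your approach buys. Two remarks. First, your sign ($\partial_{\gamma_k}\mathcal{L}<0$ at the feasible point) is opposite to the paper's final display ($\geq 0$); a sanity check against the restriction $\mathcal{L}(\hat\theta,\cdot)=-\sum_k\mathcal{H}^{\gamma_k}_{\tilde{\pi}^*_k}$, which is decreasing in each $\gamma_k$ since the discounted entropy grows with $\gamma_k$, supports your sign --- the paper appears to drop a minus when differentiating the entropy term inside $\tilde{Q}$ --- but either way the gradient is nonvanishing and the conclusion stands. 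Second, to make strictness airtight you should note that $\sum_s\partial_{\gamma_k}\mu^{\gamma_k}_{\tilde{\pi}^*_k}(s)=\partial_{\gamma_k}(1-\gamma_k)^{-1}>0$, so at least one state with strictly positive visitation-derivative also carries strictly positive entropy (soft-optimal policies have full support for $\lambda>0$), which is all that is needed for the sum to be strictly negative.
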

\begin{proof}
Given any reward parameters $\theta$, the optimal value function and $Q$-value function satisfy,
\begin{gather}
\tilde{V}^{\theta,\gamma}_{\tilde{\pi}^\theta}( s)=\lambda\log\sum_{a}exp(\tilde{Q}^{\theta,\gamma}_{\tilde{\pi}^\theta}(s,a)/\lambda)\label{apdx_eqn:v_soft}\\
\pi^\theta(a|s) = exp\left(\frac { \tilde{Q}^{\theta,\gamma}_{\tilde{\pi}^\theta}(s,a)-\tilde{V}^{\theta,\gamma}_{\tilde{\pi}^\theta}( s))}{\lambda}\right)\label{apdx_eqn:pi_q_v}
\end{gather}
Given the above equations, we can rewrite the expectation of the discount likelihood of expert trajectories as,
\begin{align*}
    \mathcal{L}( \theta,\Gamma )&=\sum_{k=1}^K\mathbbm{E}_ {(S_t,A_t)\sim \tilde{\pi}^*_k} \left[ \sum_{t=0}^ {\infty } \gamma_k^{t}\log\tilde{\pi}^\theta_k ( A_ {t} |S_ {t} )\right] \\
    &\stackrel{(i)}{=}\frac{1}{\lambda} \sum_{k=1}^K\mathbbm{E}_ {(S_t,A_t)\sim \tilde{\pi}^*_k} \left[ \sum_{t=0}^ {\infty } \gamma_k^{t}\left(\tilde{Q}^{\theta,\gamma_k}_{\tilde{\pi}^\theta_k}(S_t,A_t)-\tilde{V}^{\theta,\gamma_k}_{\tilde{\pi}^\theta_k}( S_t))\right)\right] \\
    &=\frac{1}{\lambda} \sum_{k=1}^K\mathbbm{E}_ {(S_t,A_t)\sim \tilde{\pi}^*_k} \left[ \sum_{t=0}^ {\infty } \gamma_k^{t}\left(r(S_{t+1})+\gamma_k\tilde{V}^{\theta,\gamma_k}_{\tilde{\pi}^\theta_k}( S_{t+1})-\tilde{V}^{\theta,\gamma_k}_{\tilde{\pi}^\theta_k}( S_t)\right)\right] \\
    &=\frac{1}{\lambda} \sum_{k=1}^K\left(
     \mathbbm{E}_ {(S_t,A_t)\sim \tilde{\pi}^*_k} \left[ \sum_{t=0}^ {\infty } \gamma_k^{t}r(S_{t+1})\right]+
    \mathbbm{E}_ {(S_t,A_t)\sim \tilde{\pi}^*_k} \left[ \sum_{t=1}^ {\infty } \gamma_k^{t}\tilde{V}^{\theta,\gamma_k}_{\tilde{\pi}^\theta_k}( S_{t+1})\right]-
    \mathbbm{E}_ {(S_t,A_t)\sim \tilde{\pi}^*_k} \left[ \sum_{t=0}^ {\infty } \gamma_k^{t}\tilde{V}^{\theta,\gamma_k}_{\tilde{\pi}^\theta_k}( S_{t})\right]
    \right)\\
    &=\frac{1}{\lambda} \sum_{k=1}^K\left(
     \mathbbm{E}_ {(S_t,A_t)\sim \tilde{\pi}^*_k} \left[ \sum_{t=0}^ {\infty } \gamma_k^{t}r(S_{t+1})\right]-
    \mathbbm{E}_ {S_0\sim\rho} \left[ \sum_{t=0}^ {\infty } \gamma_k^{t}\tilde{V}^{\theta,\gamma_k}_{\tilde{\pi}^\theta_k}( S_0)\right]
    \right)\\
    &\stackrel{(ii)}{=}\sum_{k=1}^K\left(\frac{1}{\lambda} 
     \mathbbm{E}_ {(S_t,A_t)\sim \tilde{\pi}^*_k} \left[ \sum_{t=0}^ {\infty } \gamma_k^{t}r(S_{t+1})\right]-
    \mathbbm{E}_ {S_0\sim\rho} \left[ \log\sum_a exp\left(\frac{\tilde{Q}^{\theta,\gamma_k}_{\tilde{\pi}^\theta_k}( S_0,A_0)}{\lambda}\right)\right]\right),
\end{align*}
where (i) and (ii) follow from Eq.~\ref{apdx_eqn:v_soft} and ~\ref{apdx_eqn:pi_q_v}, respectively.
Let $\hat\theta$, $\hat\Gamma$ be any reward parameters and discount factors in the parameter space 
such that for all $k\in[K],\ \tilde{\pi}^{\hat\theta}_k = \tilde{\pi}^*_k $. 
We further map $\gamma_k$ to an unconstrained variable space: $\delta_k = \text{logit}(\gamma_k)$ and  $\Delta=\{\delta_k\}_{k=1}^K$. 

We now show that $ \frac{\partial\mathcal{L}}{\partial\delta_k}(\hat\delta_k)\leq 0$. 
Calculating the gradient of $\mathcal{L}(\theta, \Gamma)$ with respect to $\delta_k$ gives us the following:
\begin{align}
    \frac{\partial\mathcal{L}}{\partial\delta_k}=&\frac{1}{\lambda} \frac{\partial}{\partial \delta_k}
     \mathbbm{E}_ {(S_t,A_t)\sim \tilde{\pi}^*_k} \left[ \sum_{t=0}^ {\infty } \frac{\partial\gamma_k^{t}}{\partial\delta_k}r(S_{t+1})\right]\nonumber\\
     &-\frac{1}{\lambda}
    \mathbbm{E}_ {S_0\sim\rho}\left[\sum_a exp\left(\frac{\tilde{Q}^{\theta,\gamma_k}_{\tilde{\pi}^\theta_k}( S_0,A_0)}{\lambda}-\log\sum_a exp\left(\frac{\tilde{Q}^{\theta,\gamma_k}_{\tilde{\pi}^\theta_k}( S_0,A_0)}{\lambda}\right)\right)\frac{\partial\tilde{Q}^{\theta,\gamma_k}_{\tilde{\pi}^\theta_k}( S_0,A_0)}{\partial\delta_k}\right] \nonumber\\
    \stackrel{(i)}{=}&\frac{1}{\lambda} \left(
     \mathbbm{E}_ {(S_t,A_t)\sim \tilde{\pi}^*_k} \left[ \sum_{t=0}^ {\infty } \frac{\partial\gamma_k^{t}}{\partial\delta_k}r(S_{t+1})\right]-
    \mathbbm{E}_ {S_0\sim\rho}\left[\sum_a \tilde{\pi}^\theta_k(A_0|S_0)\frac{\partial\tilde{Q}^{\theta,\gamma_k}_{\tilde{\pi}^\theta_k}( S_0,A_0)}{\partial\delta_k}\right]\right),\label{apdx_eqn:l_grad}
\end{align}
where (i) follows from Eq.~\ref{apdx_eqn:v_soft} and ~\ref{apdx_eqn:pi_q_v}.

The gradient of $\tilde{Q}^{\theta,\gamma_k}_{\tilde{\pi}^\theta_k}( S_0,A_0)$ can be further expanded as,
\begin{equation*}
\frac{\partial\tilde{Q}^{\theta,\gamma_k}_{\tilde{\pi}^\theta_k}( S_0,A_0)}{\partial\delta_k}=\mathbbm{E}_ {(S_t,A_t)\sim \tilde{\pi}^\theta_k} \left[ \sum_{t=0}^ {\infty } \frac{\partial\gamma_k^{t}}{\partial\delta_k}r(S_{t+1})\right]+\lambda\frac{\partial}{\partial\delta_k}\mathbbm{E}_ {(S_t,A_t)\sim \tilde{\pi}^\theta_k}\left[\sum_{t=0}^ {\infty } \gamma^t \tilde{\pi}^\theta_k(\cdot|S_t)\log\tilde{\pi}^\theta_k(\cdot|S_t)\right]. 
\end{equation*}
Plugging the gradient of $\tilde{Q}^{\theta,\gamma_k}_{\tilde{\pi}^\theta_k}( S_0,A_0)$ into Eq.~\ref{apdx_eqn:l_grad} and evaluating the gradient at $\tilde{\pi}^{\theta}_k = \tilde{\pi}^*_k $ gives us,
\begin{align*}
    \frac{\partial\mathcal{L}}{\partial\delta_k}(\hat{\delta}_k)\propto&
     \cancel{\mathbbm{E}_ {(S_t,A_t)\sim \tilde{\pi}^*_k} \left[ \sum_{t=0}^ {\infty } \frac{\partial\gamma_k^{t}}{\partial\delta_k}r(S_{t+1})\right]}-
    \cancel{\mathbbm{E}_ {(S_t,A_t)\sim \tilde{\pi}^*_k} \left[ \sum_{t=0}^ {\infty } \frac{\partial\gamma_k^{t}}{\partial\delta_k}r(S_{t+1})\right]}\\
    &-\lambda\frac{\partial}{\partial\delta_k}\mathbbm{E}_ {(S_t,A_t)\sim \tilde{\pi}^\theta_k}\left[\sum_{t=0}^ {\infty } \gamma_k^t \tilde{\pi}^\theta_k(\cdot|S_t)\log\tilde{\pi}^\theta_k(\cdot|S_t)\right]\\
    \stackrel{(i)}{=}&  -\lambda\mathbbm{E}_ {(S_t,A_t)\sim \tilde{\pi}^*_k} \left[\sum_{t=0}^ {\infty } t\hat\gamma_k^{t-1}\hat\gamma_k(1-\hat\gamma_k)\tilde{\pi}^*_k(\cdot|S_t)\log\tilde{\pi}^*_k(\cdot|S_t)\right]\\
     =&\sum_{t=1}^\infty  -\lambda(1-\hat\gamma_k)\mathbbm{E}_ {(S_t,A_t)\sim \tilde{\pi}^*_k} \left[\sum_{t'=t}^ {\infty } \hat\gamma_k^{t}\tilde{\pi}^*_k(\cdot|S_t)\log\tilde{\pi}^*_k(\cdot|S_t)\right]\geq 0,
\end{align*}
where (i) follows from the fact $\frac{\partial \gamma_k}{\partial \delta_k} = \frac{\partial \sigma(\delta_k)}{\partial \delta_k}=\sigma(\delta_k)(1-\sigma(\delta_k))=\gamma_k(1-\gamma_k)$.
We see that when $\hat\gamma_k\in(0,1)$ 
$\frac{\partial\mathcal{L}}{\partial\delta_k}(\hat{\delta}_k)>0$ and $\tilde{Q}^{{\theta},{\gamma}_k}_{\tilde{\pi}_k^*}<\mathcal{L}^*$.
\end{proof}
\subsection{Proof of Corollary~\ref{cor:mce_feasibility}}
In this section, we provide
 a characterization of the feasible solution space of the reward function and discount
factors for the lower-level optimization problem in Eq.~\ref{eqn:mce_irl_multi}. 
\label{apdx:cor_mce}
\begin{corollary} (Restated)
Given a set of discount factors $\Gamma=\{\gamma_k\}_{k=1}^K$, the following optimization problem
\begin{subequations}
    \small{
    \label{apdx_eqn:mce_irl_multi_inner}
    \begin{align}
    \max_{\{\mu_{\displaystyle\tilde{\pi}_k}^{\gamma_k}(s,a)\}} &\quad  \sum_{k=1}^K \mathcal{H}^{\gamma_k}_{\displaystyle\tilde{\pi}_k}
    =\sum_{k=1}^K\sum_{(s,a)} -\log\left(\frac{\mu^{\gamma_k}_{\displaystyle\tilde{\pi}_k}(s,a)}{\sum_{a}\mu^{\gamma_k}_{\displaystyle\tilde{\pi}_k}(s,a)}\right)\mu^{\gamma_k}_{\displaystyle\tilde{\pi}_k}(s,a)\label{apdx_eqn:mce_irl_multi_inner_obj}\\
    \text{subject to} &\quad f_{\displaystyle\tilde{\pi}_k}^{\gamma_k} = f_{\displaystyle\tilde{\pi}_k^*}^{\gamma_k}\quad \forall k\in[K]\label{apdx_eqn:mce_irl_multi_inner_const1}\\
    &\quad \sum_a\mu_{\displaystyle\tilde{\pi}_k}^{\gamma_k}(s',a) = \rho_0(s')+{\gamma_k}\sum_s\sum_a T(s'|s,a)\mu_{\displaystyle\tilde{\pi}_k}^{\gamma_k}(s,a)\quad\forall s'\in\mathcal{S},\ k\in[K]\\
    &\quad\mu_{\displaystyle\tilde{\pi}_k}^{\gamma_k}(s,a)\geq 0 \quad\forall (s,a)\in(\mathcal{S}\times\mathcal{A}),\ k\in[K],
    \end{align}
    }
    \end{subequations}
    is feasible if and only if $rank(\Phi\vert b)=rank(\Phi)$ where $\Phi,\ b$ is defined in Proposition \ref{prop:id}. Additionally, if the optimization problem is feasible, the optimal solution is achieved at $\mu_{\displaystyle\tilde{\pi}_k}^{\gamma_k}(s,a)=\mu_{\displaystyle\tilde{\pi}^*_k}^{\gamma_k}(s,a)$.
\end{corollary}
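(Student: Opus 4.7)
The plan is to exploit the concavity of the inner problem in the occupancy variables $\{\mu^{\gamma_k}_{\tilde\pi_k}(s,a)\}$ for fixed $\Gamma$: the causal entropy in~(\ref{apdx_eqn:mce_irl_multi_inner_obj}) is concave and every constraint in~(\ref{apdx_eqn:mce_irl_multi_inner_const1}) together with the Bellman-flow and non-negativity constraints is affine in these variables, so Slater-type regularity gives strong duality and the KKT conditions fully characterize any optimal primal--dual pair. The key technical step, inherited almost verbatim from the proof of Theorem~\ref{thm:strong_duality}, is to form the Lagrangian with multipliers $\theta_k$ on the $K$ feature-matching constraints and $x_{s,k}$ on the Bellman-flow constraints, set $\partial\mathcal{G}/\partial\mu^{\gamma_k}_{\tilde\pi_k}(s,a)=0$, and read off the soft Bellman relation
\[
\log\frac{\mu^{\gamma_k}_{\tilde\pi_k}(s,a)}{\sum_{a'}\mu^{\gamma_k}_{\tilde\pi_k}(s,a')}=\theta_k^\intercal\phi^s+x_{s,k}-\gamma_k\sum_{s'}x_{s',k}\,T(s'\vert s,a),
\]
which identifies $\tilde\pi_k$ as the entropy-regularized optimal policy for the linear reward $r_{\theta_k}$ and discount $\gamma_k$, with $-x_{\cdot,k}$ playing the role of the soft value function. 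Stacking these relations across $k\in[K]$ and across $a\in\mathcal{A}$ produces exactly the linear system $\Phi x=b$ from Proposition~\ref{prop:id}.

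For the $\Leftarrow$ direction I would invoke Proposition~\ref{prop:id} directly: if $\mathrm{rank}(\Phi\vert b)=\mathrm{rank}(\Phi)$, then $\Phi x=b$ is consistent, so there exist a shared reward parameter $\theta$ and soft value vectors $\{-x_{\cdot,k}\}$ under which every $\tilde\pi_k^*$ is entropy-regularized optimal at discount $\gamma_k$. Taking $\mu^{\gamma_k}_{\tilde\pi_k}(s,a):=\mu^{\gamma_k}_{\tilde\pi_k^*}(s,a)$ satisfies the Bellman-flow and non-negativity constraints by the definition of an occupancy measure, and the feature-matching constraint is satisfied tautologically; hence the primal is feasible. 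By the standard maximum-entropy argument---the Boltzmann distribution uniquely maximizes entropy subject to its moment constraints---this choice of $\mu$ is also the unique optimizer, giving the ``additionally'' clause.

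For the $\Rightarrow$ direction I would argue via strong duality: if the primal is feasible, an optimal primal--dual pair exists, and the KKT stationarity condition derived above furnishes dual multipliers $(\theta,\{x_{\cdot,k}\})$ solving $\Phi x=b$, which by the Rouch\'e--Capelli theorem forces $\mathrm{rank}(\Phi\vert b)=\mathrm{rank}(\Phi)$. Combining the two directions closes the ``iff'' and simultaneously pins down the optimizer as $\mu^{\gamma_k}_{\tilde\pi_k}=\mu^{\gamma_k}_{\tilde\pi_k^*}$.

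The main obstacle I anticipate is the reduction from the apparently decoupled $K$ feature-matching constraints (each of which would admit its own multiplier $\theta_k$) to the single shared reward parameter $\theta$ that Proposition~\ref{prop:id} is phrased in terms of. Naively the inner problem looks to separate across $k$ and be trivially feasible by choosing $\mu^{\gamma_k}_{\tilde\pi_k}=\mu^{\gamma_k}_{\tilde\pi_k^*}$ for each $k$; the content of the corollary is precisely that \emph{feasibility of the shared-reward-consistent primal--dual structure}, rather than of each decoupled sub-problem in isolation, is equivalent to the rank condition. Once Proposition~\ref{prop:id} performs the linear-algebraic translation of the KKT equations into $\Phi x=b$, what remains is the bookkeeping that turns stationarity into the rank test and back.
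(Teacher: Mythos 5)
Your $\Leftarrow$ direction matches the paper's: consistency of $\Phi x=b$ gives a shared $\theta$ under which every $\tilde\pi^*_k$ is soft-optimal at $\gamma_k$, the expert occupancies then satisfy the KKT system, and strong duality (which holds for the inner problem by concavity plus affine constraints, as you note) yields feasibility with the optimum at $\mu^{\gamma_k}_{\tilde\pi^*_k}$. The gap is in the $\Rightarrow$ direction, and it is exactly the obstacle you flag but do not close. As literally written, the primal decouples over $k$ and is trivially feasible by taking $\mu_k=\mu^{\gamma_k}_{\tilde\pi^*_k}$, with KKT stationarity producing $K$ \emph{independent} multipliers $\theta_1,\dots,\theta_K$; nothing in your argument forces $\theta_1=\cdots=\theta_K$, so stationarity does not by itself produce a solution of $\Phi x=b$, whose first variable block is a single reward shared across all $K$ block-rows. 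Relabeling the claim as being about ``feasibility of the shared-reward-consistent primal--dual structure'' restates the intended reading of the corollary rather than proving the equivalence.

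There is a second, related omission. The right-hand side $b$ in Proposition~\ref{prop:id} is built from the \emph{expert} log-policies $\log\tilde\pi^*_k$, so to conclude that a feasible primal forces $\Phi x=b$ to be consistent you must first show that the dual-optimal soft policies $\hat\pi_k$ (induced by the shared multiplier $\hat\theta$ at discount $\gamma_k$) coincide with $\tilde\pi^*_k$ --- otherwise stationarity only gives you a consistent system with $\log\hat\pi_k$ on the right. Your appeal to ``the Boltzmann distribution uniquely maximizes entropy subject to its moment constraints'' does not supply this: it presupposes that the expert occupancy is itself the Boltzmann solution at the \emph{given} (possibly misspecified) discount $\gamma_k$. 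The paper closes this by contradiction: if $\hat\pi_k\neq\tilde\pi^*_k$, feature matching plus uniqueness of the soft-optimal policy gives $\mathcal{H}^{\gamma_k}_{\hat\pi_k}>\mathcal{H}^{\gamma_k}_{\tilde\pi^*_k}$, and since soft values decompose as $\theta^\intercal f+\mathcal{H}$ with matched $f$, this would mean $\tilde\pi^*_k$ cannot be soft-optimal at $\gamma_k$ for \emph{any} reward; but the explicit shift $r'=r^*+(\gamma^*_k-\gamma_k)\tilde{V}^{r^*,\gamma^*_k}_{\tilde\pi^*_k}$ (Theorem 5 of Cao et al.) exhibits a reward for which $\tilde\pi^*_k$ is soft-optimal at $\gamma_k$ --- a contradiction. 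This step is what simultaneously delivers the ``additionally'' clause and makes the stationarity equations read off as $\Phi x=b$ with the correct $b$; your proposal needs it (or an equivalent) to complete the only-if direction.
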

\begin{proof}
In~\cite{zeng2022maximum}, the authors show that when the optimization problem in Eq.~\ref{apdx_eqn:mce_irl_multi_inner} is feasible, strong duality holds.
%
We first show that when the optimization problem is feasible, the optimal solution to the optimization problem in Eq.~\ref{apdx_eqn:mce_irl_multi_inner} are the visitation variables induced by the expert policies (i.e. $\mu_{\tilde{\pi}_k}^{\gamma_k}(s,a)=\mu_{\tilde{\pi}^*_k}^{\gamma_k}(s,a)$). Thus, Theorem 2 in~\cite{syed2008apprenticeship}, 
$    \displaystyle\tilde\pi_k(a|s) = \displaystyle\tilde{\pi}^*_k(a|s).$

By strong duality, if the primal problem has an optimal solution, the Lagrangian dual problem also has an optimal solution, which we denote by
 $\hat{\theta},\{\mu_{\hat{\pi}_k}^{\gamma_k}(s,a)\}_{k=1}^K$. Previous work shows that, $\mu_{\hat{\pi}_k}^{\gamma_k}(s,a)$ are the visitation variables induced by the entropy-regularized optimal policy, $\hat{\pi}_k$, under reward parameters $\hat\theta$ and the discount factor $\gamma_k$~\citep{zhou2017infinite}. 

By proof by contradiction, assume that there exists a $k$ such that $\hat{\pi}_k\neq \tilde{\pi}^*_k$.
By strong duality, an optimal solution to the dual problem satisfies the constraints of the primal problem. Thus, $f_{\hat{\pi}_k}^{\gamma_k} = f_{\tilde{\pi}_k^*}^{\gamma_k}$. 
Because $\hat{\pi}_k$ is a unique optimal policy for reward parameters $\hat{\theta}$~\citep{haarnoja2017reinforcement}, we have that
\begin{align*}
\tilde{V}^{\hat{\theta},\gamma_k}_{\hat{\pi}_k}&>\tilde{V}^{\hat{\theta},\gamma_k}_{\tilde{\pi}^*_k}\\
{\hat{\theta}}^\intercal f_{\hat{\pi}_k}^{\gamma_k}+\mathcal{H}^{\gamma_k}_{\hat{\pi}_k}&>{\hat{\theta}}^\intercal f_{\tilde{\pi}_k}^{\gamma_k}+\mathcal{H}^{\gamma_k}_{\tilde{\pi}_k}\\
\mathcal{H}^{\gamma_k}_{\hat{\pi}_k}&>\mathcal{H}^{\gamma_k}_{\tilde{\pi}_k}.
\end{align*}
This implies that for any reward parameters $\theta$, $\tilde{\pi}^*_k$ cannot be optimal because
\begin{equation}
\label{apdx_eqn:always_more_optimal}
\tilde{V}^{{\theta},\gamma_k}_{\hat{\pi}_k}=\theta^\intercal f_{\hat{\pi}_k}^{\gamma_k}+\mathcal{H}^{\gamma_k}_{\hat{\pi}_k}=\theta^\intercal f_{\tilde{\pi}_k}^{\gamma_k}+\mathcal{H}^{\gamma_k}_{\hat{\pi}_k}>\tilde{V}^{{\theta},\gamma_k}_{\tilde{\pi}^*_k}.
\end{equation}
 
However, we can construct a reward function as $$r'=r^*+(\gamma^*_k-\gamma_k) \tilde{V}^{r^*,\gamma_k^*}_{\tilde{\pi}_k}=\left(\theta +\frac{(\gamma^*_k-\gamma_k) \tilde{V}^{r^*,\gamma_k^*}_{\tilde{\pi}_k}}{\sum_{s}\phi^s(s)}\mathbf{1}\right)^\intercal\phi^s.$$ By Theorem 5 in~\cite{cao2021identifiability}, we have that  
\begin{align*}
    T^a r'  &= T^a (r^*+(\gamma^*_k-\gamma_k) \tilde{V}^{r^*,\gamma_k^*}_{\tilde{\pi}_k})\\
    & = T^a (r^*+\gamma^*_k \tilde{V}^{r^*,\gamma_k^*}_{\tilde{\pi}_k})- \gamma_k T^a \tilde{V}^{r^*,\gamma_k^*}_{\tilde{\pi}_k}\\
    & = \lambda \log \tilde{\pi}^*_k(a|\cdot) - \gamma_k T^a \tilde{V}^{r^*,\gamma_k^*}_{\tilde{\pi}_k}+ \tilde{V}^{r^*,\gamma_k^*}_{\tilde{\pi}_k}
\end{align*}
Thus, policy $\tilde{\pi}_k$ is optimal for reward function $r'$ and the discount factor $\gamma_k$, which conflicts Eq.~\ref{apdx_eqn:always_more_optimal}.

Thus,  if the optimization problem is feasible,
$\mu_{\hat{\pi}_k}^{\gamma_k}(s,a)=\mu_{\displaystyle\tilde{\pi}^*_k}^{\gamma_k}(s,a)$ and $\hat{\pi}_k = \tilde{\pi}^*_k$. 
By Proposition~\ref{prop:id}, there exists Lagrangian multipliers $\theta$ such that $\hat{\pi}_k=\Tilde{\pi}^*_k$ if $rank(\Phi\vert b)=rank(\Phi)$.

On the other hand, if $rank(\Phi\vert b)=rank(\Phi)$, then there exist reward parameters $\theta$ such that $\tilde{\pi}^*_k$ is optimal for $\theta$ and $\gamma_k$.
We can see that the solution
$\mu_{\tilde{\pi}^*_k}^{\gamma_k}(s,a)$  satisfy the KKT conditions. By strong duality, the primal problem is feasible and has an optimal solution, $\mu_{\tilde{\pi}^*_k}^{\gamma_k}(s,a)$.
\end{proof}

\section{Algorithm}
\label{apdx_sec:alg}
In Algorithm~\ref{alg:lp} and~\ref{alg:mceirl}, we give the pseudocode of MPLP-IRL and MPMCE-IRL algorithms. 
\begin{algorithm}
\caption{Mutli-planning horizon LP-IRL (MPLP-IRL)}
\begin{algorithmic}[1]
  \STATE Given a set of $K$ observed policies $\Pi^*=\{\pi^*_k\}_{k=1}^K$, the transition dynamics $T$.
  \STATE Place a Gaussian process prior on $g^*(\Gamma)$ (Eq.~\ref{eqn:upper_level_obj}). 
  \STATE Observe $g^*$ at a set of $m$ points, $\{\Gamma^{(i)}\}_{i=1}^m$, with $\Gamma^{(i)}\sim \texttt{Unif}(0,1)^K$
  \WHILE{$m \leq \text{MaxIter}$}
  \STATE Update the posterior distribution of $g^*$ given all observed points.
  \STATE Query a new point $\Gamma^{(m)}$ based on the acquisition function.
  \STATE Observe $g^*_m = \max_r g(\Gamma^{(m)},r)$ with constraints in Eq.~\ref{eqn:lp_irl_obj_const1},~\ref{eqn:lp_irl_obj_const2}.
  \STATE increment $m$
  \ENDWHILE
  \STATE Return the reward function $r$ and the set of discount factors $\Gamma$ with the largest observed $g^*$.
\end{algorithmic}
\label{alg:lp}
\end{algorithm}
\begin{algorithm}
\caption{Mutli-planning horizon MCE-IRL (MCELP-IRL)}
\begin{algorithmic}[1]
  \STATE Given a set of $K$ observed policies $\tilde{\Pi}^*=\{\tilde{\pi}^*_k\}_{k=1}^K$ and the transition dynamics $T$.
  \STATE Place a Gaussian process prior on $g^*(\Gamma)$ (Eq.~\ref{eqn:mce_inner}).
  \STATE Observe $g^*$ at a set of $m$ points, $\{\Gamma^{(i)}\}_{i=1}^m$, with $\Gamma^{(i)}\sim \texttt{Unif}(0,1)^K$
  \WHILE{$m \leq \text{MaxIter}$}
  \STATE Update the posterior distribution of $g^*$ given all observed points.
  \STATE Query a new point $\Gamma^{(m)}$ based on the acquisition function.
  \STATE Solve the Lagrangian dual of the constrained optimization problem in Eq.~\ref{eqn:mce_inner} using the ML-IRL algorithm in~\cite{zeng2022maximum}.
  \STATE Denote the optimal solution to the Lagrangian dual problem as $\tilde\theta$. Calculate the duality gap as $l=\sum_{k=1}^K \tilde\theta^\intercal(f^{\gamma_k}_{\displaystyle\tilde{\pi}^{\displaystyle{\tilde\theta}}_k}-f^{\gamma_k}_{\displaystyle\tilde{\pi}^*_k})$
  \IF{$l\leq\epsilon$}
  \STATE $g_m^*(\Gamma)=\sum_{k=1}^K \mathcal{H}^{\gamma_k}_{\displaystyle\tilde{\pi}^{\displaystyle{\tilde\theta}}_k}$
  \ELSE
   \STATE $g_m^*(\Gamma)=-|l|$
  \ENDIF
  \STATE increment $m$
  \ENDWHILE
  \STATE Return the reward parameters $\theta$ and the set of discount factors $\Gamma$ with the largest observed $g^*$.
\end{algorithmic}
\label{alg:mceirl}
\end{algorithm}
\section{Domains}
\label{apdx_sec:domains}
\subsection{Toy Domain with a Discrete MDP}\label{apdx_sec:toy_domain}
The toy domain is designed such that the optimal policy when solving standard MDPs is a $3$-step piecewise function with respect to the discount factor $\gamma\in[0,1]$. 

\begin{figure}[htbp]
    \centering
\begin{subfigure}[t]{0.4\linewidth}
  \resizebox{\textwidth}{!}{
  \begin{tikzpicture}
    [->,align=center,node distance=3cm] 
    \node[state, label=$+0$] (s0) {$s_0$};
    \node[state, right of=s0, label={\color{orange}$+6$}] (s1) {$s_1$};
    \node[state, above of=s1, label={\color{purple}$+7$}] (s2) {$s_2$};
    \node[state, accepting, right of=s1,, label={$+10$}] (s3) {$s_3$};
    \draw (s0) edge[bend right,below] node{\color{blue}$a_0$ w.p. $0.95$} (s3)
    (s0) edge[above] node{\color{orange}$a_1$ w.p. $0.9$} (s1)
    (s0) edge[above] node{\color{purple}$a_2$ w.p. $0.6$} (s2)
    (s1) edge[above] (s3)
    (s2) edge[above] (s3)
    ;
    \end{tikzpicture}
  }
  \caption{Discrete MDP with the true reward function.}
  \label{fig:mdp_true_r}
\end{subfigure}
~
\begin{subfigure}[t]{0.4\linewidth}
  \resizebox{\textwidth}{!}{
  \begin{tikzpicture}
    [->,align=center,node distance=3cm] 
    \node[state, label=$+0$] (s0) {$s_0$};
    \node[state, right of=s0, label={\color{blue}$+8.83$}] (s1) {$s_1$};
    \node[state, above of=s1, label={\color{blue}$+10$}] (s2) {$s_2$};
    \node[state, accepting, right of=s1,, label={$+10$}] (s3) {$s_3$};
    \draw (s0) edge[bend right,below] node{$a_0$ w.p. $0.95$} (s3)
    (s0) edge[above] node{$a_1$ w.p. $0.9$} (s1)
    (s0) edge[above] node{$a_2$ w.p. $0.6$} (s2)
    (s1) edge[above] (s3)
    (s2) edge[above] (s3)
    ;
    \end{tikzpicture}
  }
  \caption{Discrete MDP with the learned function of MPLP-IRL.}
  \label{fig:lp_learned_r}
  \end{subfigure}
  ~
\begin{subfigure}[t]{0.4\linewidth}
  \resizebox{\textwidth}{!}{
  \begin{tikzpicture}
    [->,align=center,node distance=3cm] 
    \node[state, label={\color{blue}$+3.4$}] (s0) {$s_0$};
    \node[state, right of=s0, label={\color{blue}$+4.2$}] (s1) {$s_1$};
    \node[state, above of=s1, label={\color{blue}$+3.1$}] (s2) {$s_2$};
    \node[state, accepting, right of=s1,, label={$+9.7$}] (s3) {$s_3$};
    \draw (s0) edge[bend right,below] node{$a_0$ w.p. $0.95$} (s3)
    (s0) edge[above] node{$a_1$ w.p. $0.9$} (s1)
    (s0) edge[above] node{$a_2$ w.p. $0.6$} (s2)
    (s1) edge[above] (s3)
    (s2) edge[above] (s3)
    ;
    \end{tikzpicture}
  }
  \caption{Discrete MDP with the learned function of MPMCE-IRL.}
  \label{fig:mce_learned_r}
  \end{subfigure}
  ~
  \begin{subfigure}[t]{0.4\linewidth}
  \resizebox{\textwidth}{!}{
  \begin{tikzpicture}
    [->,align=center,node distance=3cm] 
    \node[state, label={\color{blue}$+0$}] (s0) {$s_0$};
    \node[state, right of=s0, label={\color{blue}$+0$}] (s1) {$s_1$};
    \node[state, above of=s1, label={\color{blue}$+0$}] (s2) {$s_2$};
    \node[state, accepting, right of=s1,, label={$+10$}] (s3) {$s_3$};
    \draw (s0) edge[bend right,below] node{$a_0$ w.p. $0.95$} (s3)
    (s0) edge[above] node{$a_1$ w.p. $0.9$} (s1)
    (s0) edge[above] node{$a_2$ w.p. $0.6$} (s2)
    (s1) edge[above] (s3)
    (s2) edge[above] (s3)
    ;
    \end{tikzpicture}
  }
  \caption{Discrete MDP with the learned function of naive LP-IRL.}
  \label{fig:naive_lp_learned_r}
  \end{subfigure}
  \caption{Toy Domain}
  \label{fig:toy}
  \end{figure}
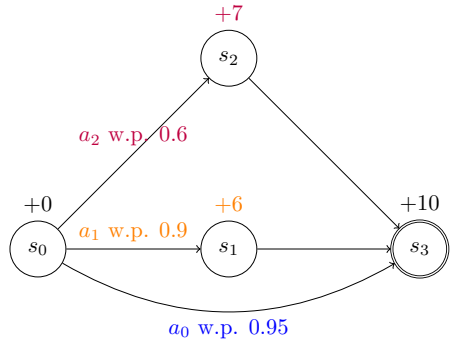
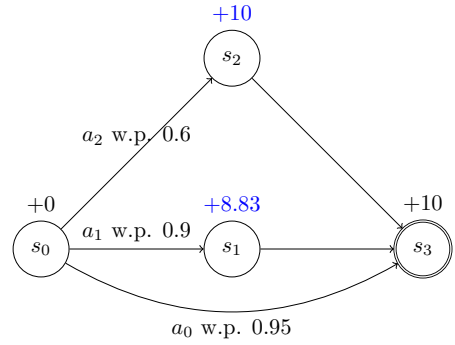
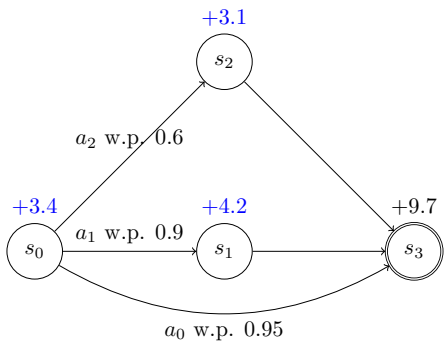
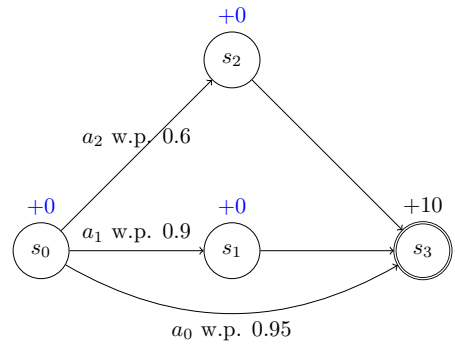

The MDP is described in Fig.~\ref{fig:mdp_true_r}.
Specifically, $s_0$ is the initial state, $s_3$ is the absorbing state. The agent gets a large positive reward when getting to the absorbing state ($r^*(s_3)=10$). The agent gets some small rewards when getting to $s_1$, $s_2$ ($r^*(s_1) = 6,\ r^*(s_2) = 7$). 
With action $a_0$, the agent moves to the absorbing state $s_3$ with probability (w.p.) $0.95$. With action $a_1$, the agent moves to $s_1$ w.p. $0.9$. With action $a_2$, the agent moves to $s_2$ w.p. $0.6$. The agent stays otherwise. Note that although $s_2$ has a larger reward, the agent faces more stochasticity. Thus, there is a trade-off when the discount factor $\gamma$ varies.

\paragraph{Expert policies for MPLP-IRL.}
Let $\pi^*_0, \pi^*_1, \pi^*_2$ denote the optimal policies where the optimal actions are $a_0, a_1, a_2$ ($\pi^*_i = \mathbbm{1}_{a_i}$), respectively. We note that for this toy example, in state $s_1$, $s_2$, $s_3$, $\pi^*_0, \pi^*_1, \pi^*_2$ are equally optimal. Thus, we focus on $s_0$ in the later analysis. 

\begin{align*}
\begin{cases}
    V^{\pi_0}_\gamma (s_0)&= \frac{0.95\cdot 10}{1-0.05 \gamma}\\
    V^{\pi_1}_\gamma(s_0) &= \frac{0.9(6+10\gamma)}{1-0.1\gamma}\\
    V^{\pi_2}_\gamma (s_0)&= \frac{0.6(7+10\gamma)}{1-0.4\gamma}
\end{cases}
\end{align*}
Solve pairwise difference between value functions and let $\gamma_0\approx 0.432,\ \gamma_1\approx 0.876$. When $\gamma<\gamma_0$, $\pi^*=\pi_0$. When $\gamma_0<\gamma<\gamma_1$, $\pi^*=\pi_1$. When $\gamma>\gamma_1$, $\pi^*=\pi_2$. The value functions of these $3$ expert policies, $\Pi^*$, under different discount values are shown in~\ref{fig:toy_opt_policy}.

\paragraph{Satisfaction of Assumption~\ref{assump_1}.} 
Figure~\ref{apdx_fig:toy_opt_polic} plots the value function of all three expert policies of $s_0$ under the true reward function $r^*$ with respect to discount factors $\gamma\in[0,1]$. Assumption~\ref{assump_1} is violated ($Q_{\pi_i^*}^{r^*,\gamma^*_i}(s,\pi_i^*(s)) = Q_{\pi_i^*}^{r^*,\gamma^*_i}(s,\pi_j^*(s))$) if and only if any of the two lines intersect. We see that across the entire domain ($[0,1]^3$), there are only three sets of discount factors that violate Assumption~\ref{assump_1}. Thus in practice, the likelihood of violating Assumption~\ref{assump_1} is low.

\paragraph{Expert policies for MPMCE-IRL.}
For MPMCE-IRL, we observe expert demonstrations from $3$ expert policies under discount factors $\Gamma^* =\{0.3,0.5,0.95\}$.

\subsection{Big-Small Domain}
In the big-small domain, there are two absorbing states: one at the left bottom cell with a small reward ($+2$) and one at the right bottom cell with a large reward ($+20$). Each step costs $-2$ if the agent does not reach the absorbing state. The true reward function is plotted in~\ref{fig:big_small_true_r}. The agent can start from anywhere in the grid world except for the absorbing states. The agent can choose to move \{\texttt{right, down, left, up}\} at each state. If the agent comes across the wall by taking an action, the agent stays where it is and moves to the corresponding state otherwise. The transition dynamics of the grid domain are deterministic.

\paragraph{Expert policies for MPLP-IRL.}
We observe expert demonstrations from $3$ distinct expert policies whose optimal actions are visualized in Fig.~\ref{fig:big_small_policy}. We see that 
when the discount factor is small, the agent will go for the closest reward regardless of its magnitude (Fig.~\ref{fig:big_small_policy_0}). When the discount factor is large, the agent prefers the large reward regardless of how far the reward is (Fig.~\ref{fig:big_small_policy_2}).

\paragraph{Expert policies for MPMCE-IRL.}
For MPMCE-IRL, we observe expert demonstrations from $3$ expert policies under discount factors $\Gamma^* =\{0.1,0.45,0.9\}$

\begin{figure}[htbp]
\centering
\makebox[\textwidth]{
\begin{subfigure}[t]{0.32\paperwidth}
\includegraphics[width=\textwidth]{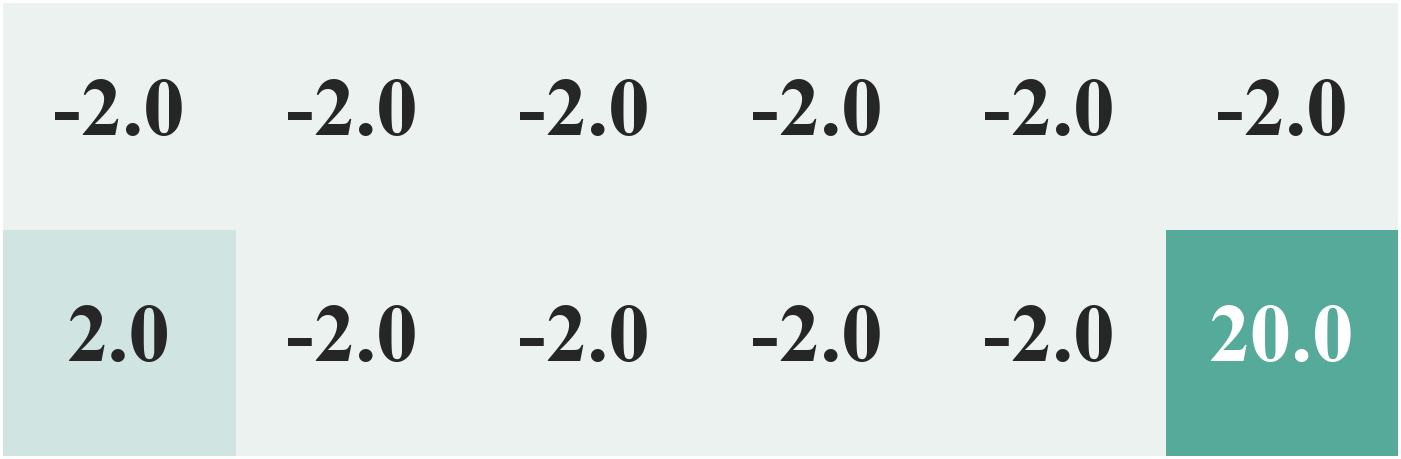}
\caption{The true reward function $r^*$.}
\label{fig:big_small_true_r}
\end{subfigure}
~
\begin{subfigure}[t]{0.32\paperwidth}
    \includegraphics[width=\textwidth]{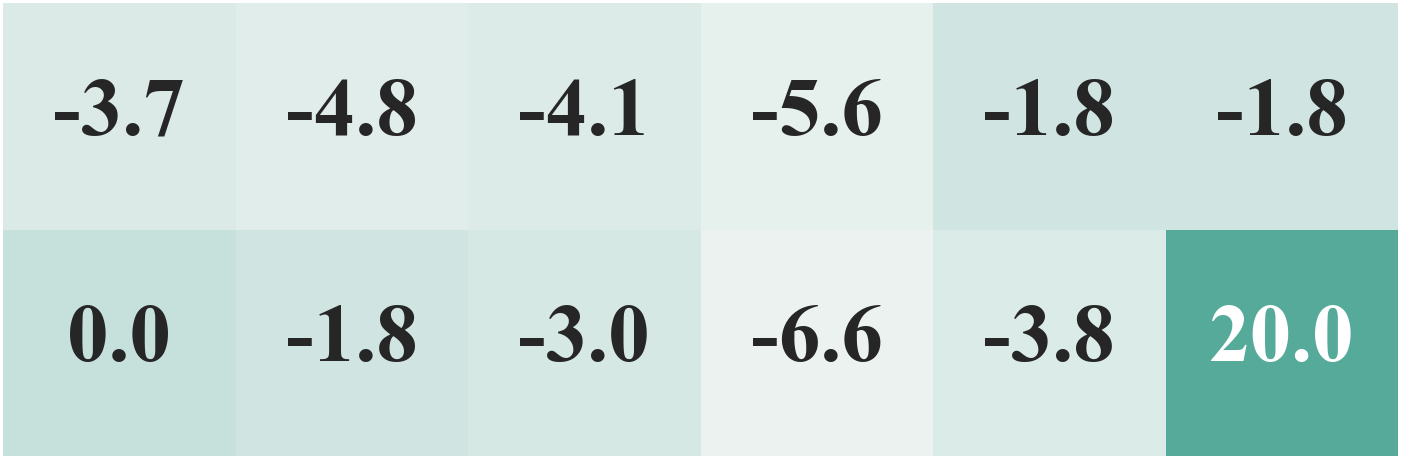}
    \caption{The learned reward function of MPLP-IRL.}
    \label{fig:big_small_lp_r}
\end{subfigure}
~
\begin{subfigure}[t]{0.32\paperwidth}
    \includegraphics[width=\textwidth]{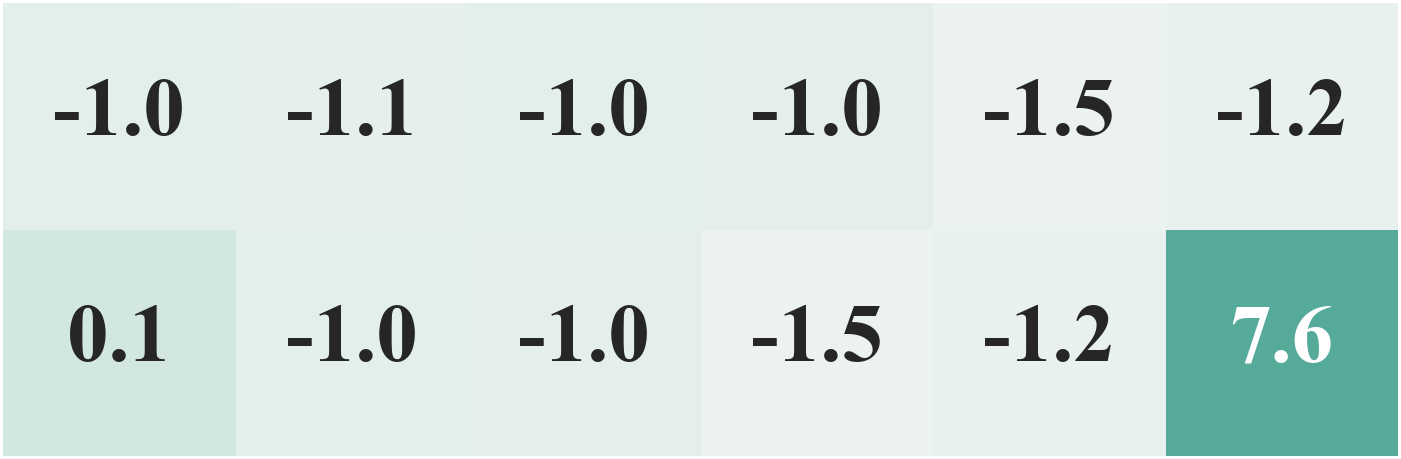}
    \caption{The learned reward function of MPMCE-IRL.}
    \label{fig:big_small_mce_r}
\end{subfigure}
}
\caption{The reward function of the big-small domain of each state.}
\label{fig:big_small_rs}
\end{figure}
\begin{figure}[htbp]
\centering
\makebox[\textwidth]{
    \begin{subfigure}[t]{0.32\paperwidth}
    \includegraphics[width=\textwidth]{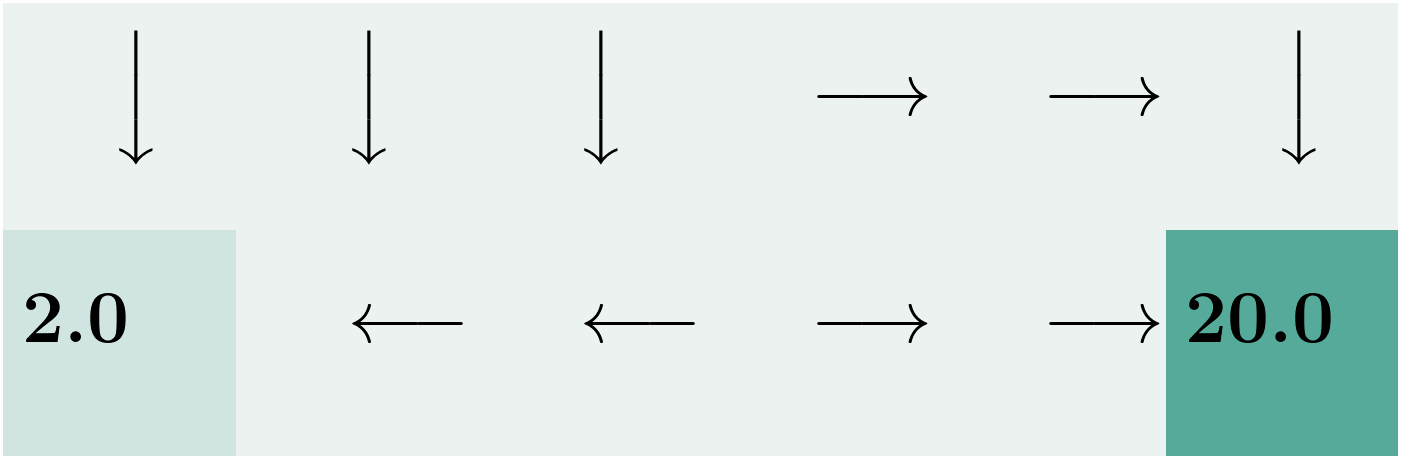}
    \caption{The optimal actions of expert policy $\pi^*_0,\ (\gamma^*_0\in[0,0.2])$}
    \label{fig:big_small_policy_0}
    \end{subfigure}
    ~
    \begin{subfigure}[t]{0.32\paperwidth}
    \includegraphics[width=\textwidth]{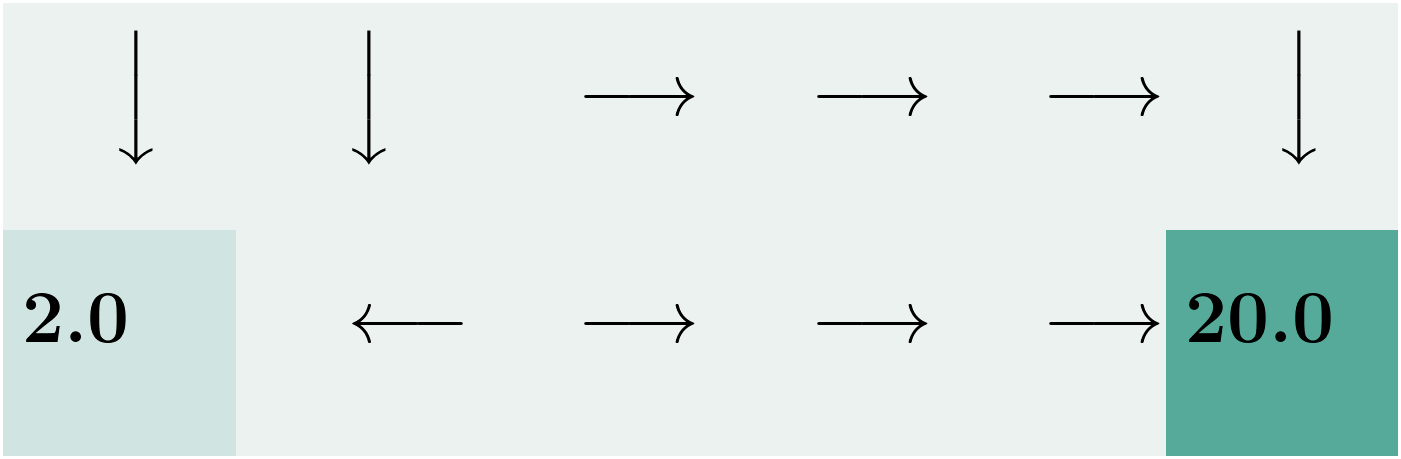}
    \caption{The optimal actions of expert policy $\pi^*_1,\ (\gamma^*_1\in(0.2,0.68])$}
    \label{fig:big_small_policy_1}
    \end{subfigure}
    ~
    \begin{subfigure}[t]{0.32\paperwidth}
    \includegraphics[width=\textwidth]{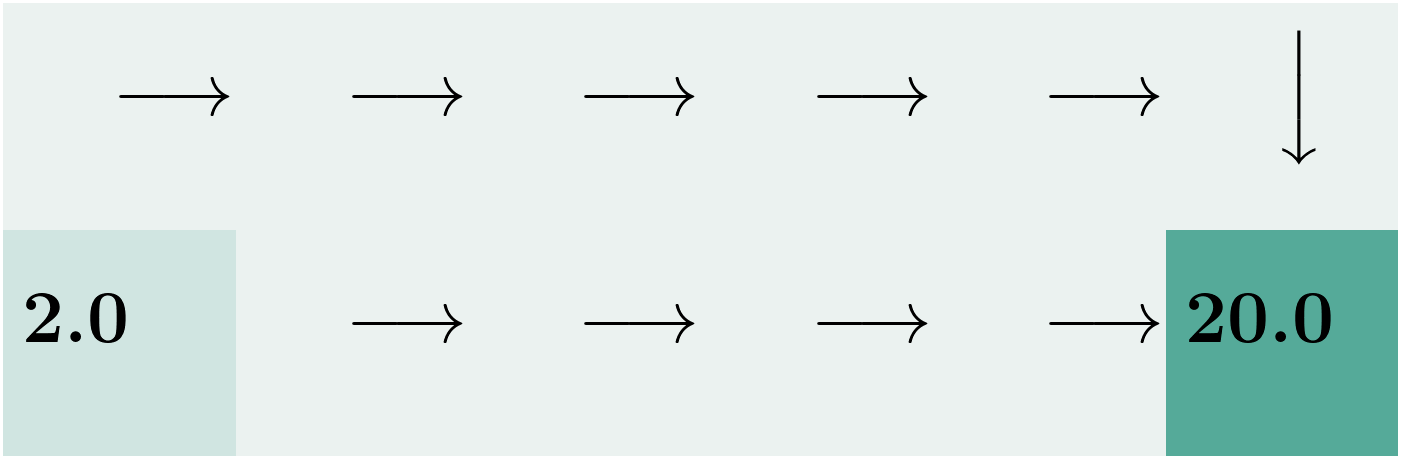}
    \caption{The optimal actions of expert policy $\pi^*_2,\ (\gamma^*_2\in[0.87,1])$}
    \label{fig:big_small_policy_2}
    \end{subfigure}
}
\caption{Visualization of the optimal policies for standard MDPs of the big-small domain.}
\label{fig:big_small_policy}
\end{figure}

\subsection{Cliff Domain}
In the cliff domain, the trajectory ends whenever the agent falls off the cliff (the top row in Fig~\ref{fig:cliff_true_r}. There is one rewarding state at the upper right ($+20$). The agent gets a large penalty whenever it falls off the cliff ($-10$). There is a small cost for each step ($-2$ when the agent is close to the cliff and $-1$ when the agent is far away from the cliff). For each action,
the agent moves in the intended direction with probability $0.9$, and moves in a random other direction with probability $0.1$.

\paragraph{Expert policies for MPLP-IRL.}
We observe $3$ distinct expert policies whose optimal actions are visualized in Fig.~\ref{fig:cliff_policy}. We see that 
when the discount factor is small, the agent is not willing to risk and tries to avoid walking along the cliff (Fig.~\ref{fig:cliff_policy_0}).
When the discount factor is large, the agent risks falling off the cliff for good returns (Fig.~\ref{fig:cliff_policy_2}).

\paragraph{Expert policies for MPMCE-IRL.} For MPMCE-IRL, we observe expert demonstrations from $3$ expert policies under discount factors $\Gamma^* =\{0,0.2,0.52\}$.

\begin{figure}[htbp]
\centering
\begin{subfigure}[t]{0.3\linewidth}
\includegraphics[width=\textwidth]{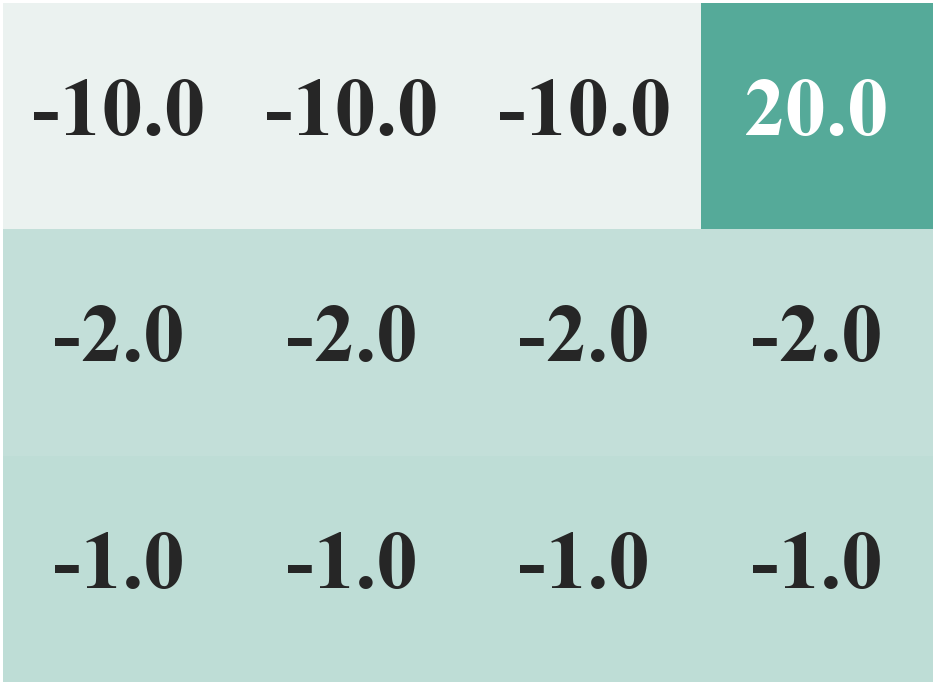}
\caption{The true reward function $r^*$.}
\label{fig:cliff_true_r}
\end{subfigure}
~
\begin{subfigure}[t]{0.3\linewidth}
    \includegraphics[width=\textwidth]{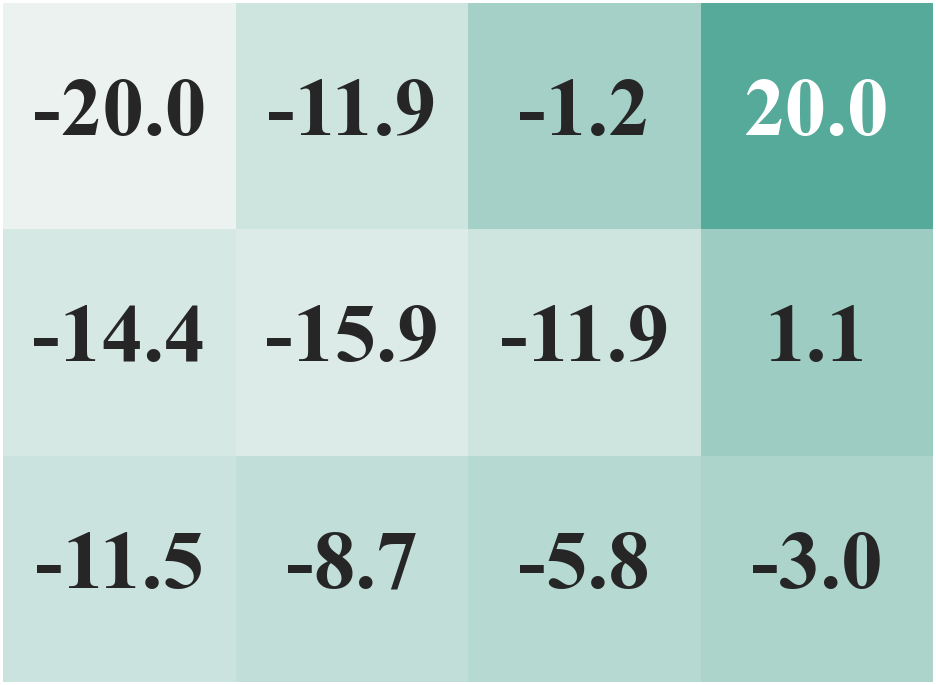}
    \caption{The learned reward function of MPLP-IRL.}
    \label{fig:cliff_lp_r}
\end{subfigure}
~
\begin{subfigure}[t]{0.3\linewidth}
    \includegraphics[width=\textwidth]{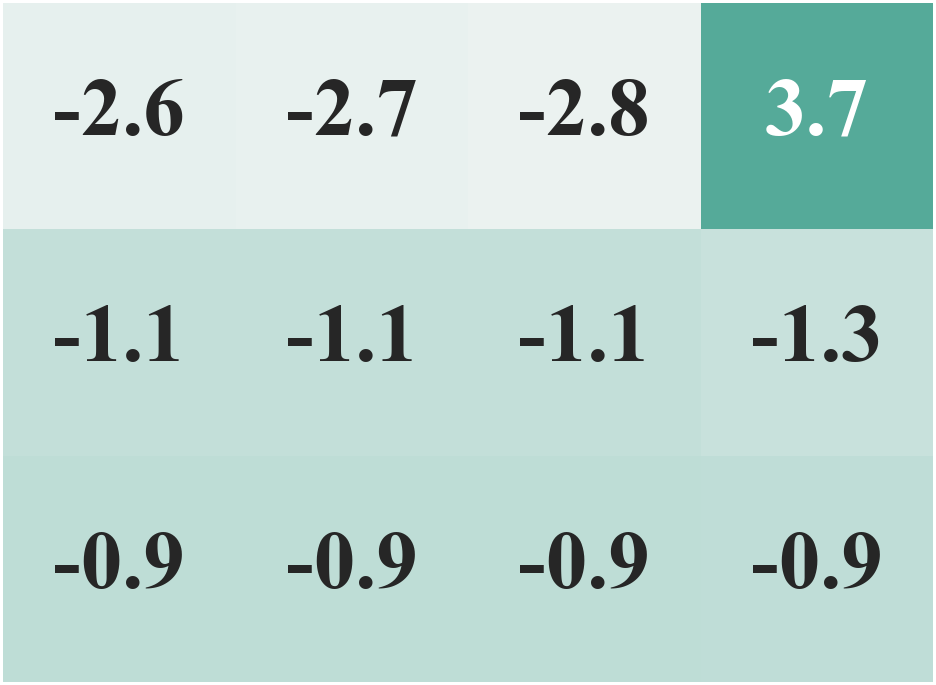}
    \caption{The learned reward function of MPMCE-IRL.}
    \label{fig:cliff_mce_r}
\end{subfigure}
\caption{The reward function of the cliff domain of each state.}
\label{fig:cliff_rs}
\end{figure}
\begin{figure}[htbp]
\centering
    \begin{subfigure}[t]{0.3\linewidth}
    \includegraphics[width=\textwidth]{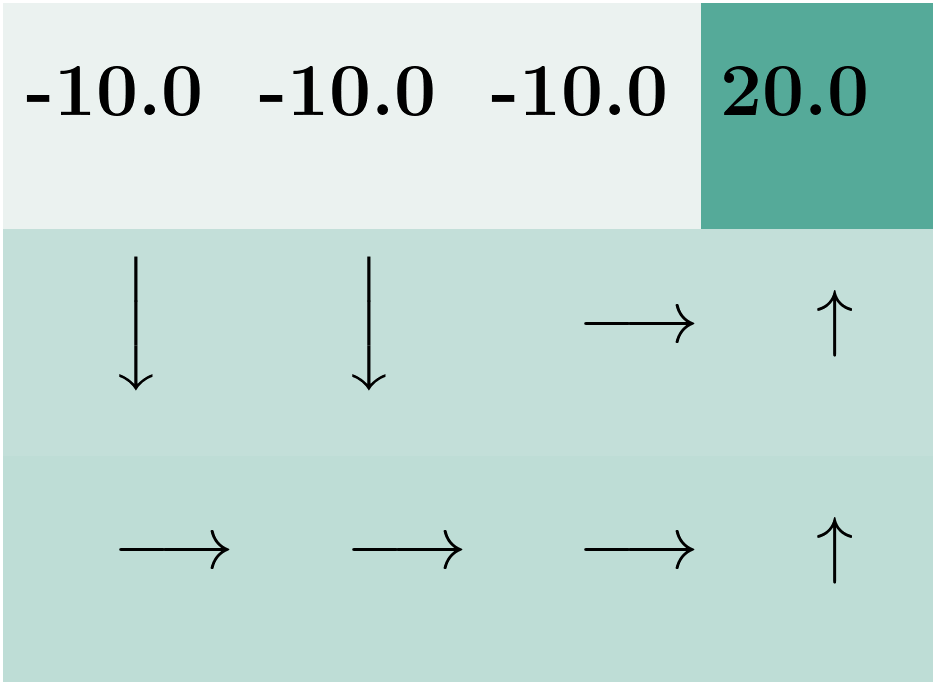}
    \caption{The optimal actions of expert policy $\pi^*_0,\ (\gamma^*_0\in[0.06,0.28])$}
    \label{fig:cliff_policy_0}
    \end{subfigure}
    ~
    \begin{subfigure}[t]{0.3\linewidth}
    \includegraphics[width=\textwidth]{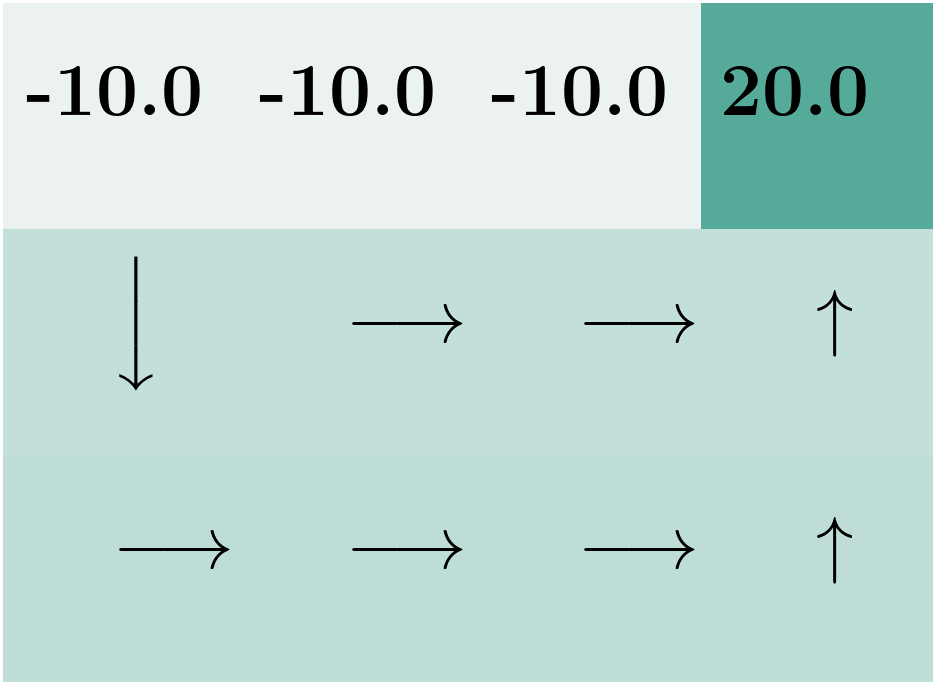}
    \caption{The optimal actions of expert policy $\pi^*_1,\ (\gamma^*_1\in(0.28,0.52])$}
    \label{fig:cliff_policy_1}
    \end{subfigure}
    ~
    \begin{subfigure}[t]{0.3\linewidth}
    \includegraphics[width=\textwidth]{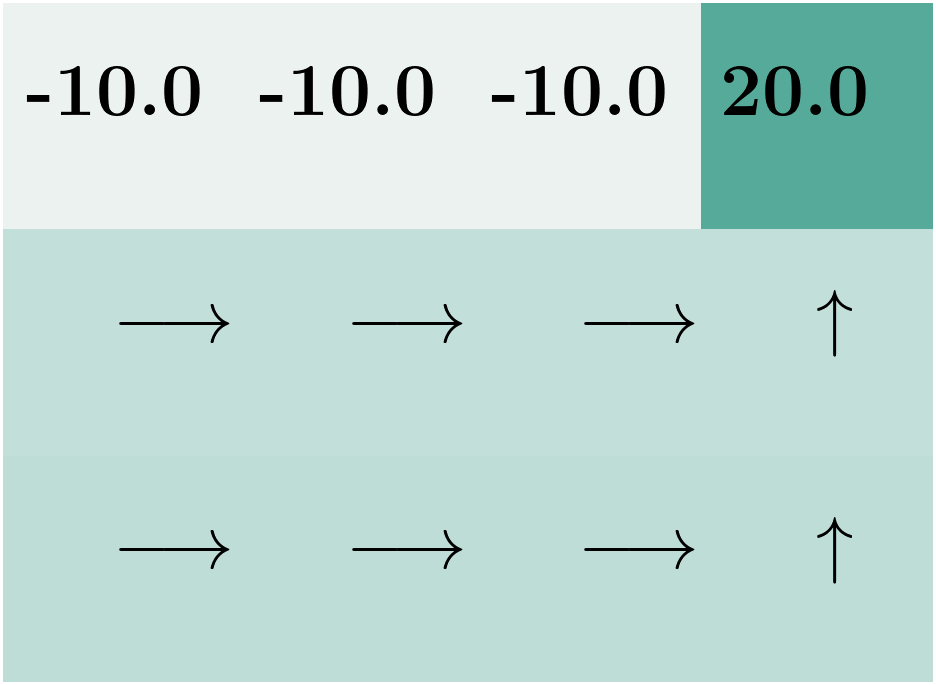}
    \caption{The optimal actions of expert policy $\pi^*_2,\ (\gamma^*_2\in(0.52,0.95])$}
    \label{fig:cliff_policy_2}
    \end{subfigure}
\caption{Visualization of the optimal policies for standard MDPs of the cliff domain.}
\label{fig:cliff_policy}
\end{figure}
\section{Generalization Error}
\label{sec:gen_error}
\subsection{Generation of random environments}\label{sec:gen_random_env}
 To test the generalizability of the learned reward function of our algorithms, for each domain environment, we generate $N=100$ random transition dynamics $\{T^{(n)}\}_{n=1}^{N}$ and discount factors $\{\gamma^{(n)}\}_{n=1}^{N}$. 
 For the toy domain, we randomize the probability of the agent' moving to the next state by taking actions from state $s_0$. From state $s_1,\ s_2$, the agent still moves to the absorbing state with probability $1$ regardless of its actions.
For the big-small and cliff domains, we generate random environments by adding noise to the agent's intended direction. For each action,
the agent moves in the intended direction with probability $\epsilon\sim\textup{Unif}(0,1)$, and moves in a random other direction with probability $1-\epsilon$.
\subsection{Evaluation Metrics}
For each randomized transition dynamics and discount factor generated from Appendix~\ref{sec:gen_random_env}, 
we first solve the optimal policies under the true reward function $r^*$ and the learned reward function $\hat{r}$, denoted as $\pi^{*(n)},\hat\pi^{(n)}$ respectively. We evaluate the generalizability of the learned reward function $\hat{r}$ by computing the normalized difference of the value function under the true $r^*$ and randomly generated environments,
\begin{equation}
\triangle V_{\hat{r}}=\frac{1}{N}\sum_{n=1}^{N }\left(V_{\pi^{*(n)}}^{r^*,T^{(n)},\gamma^{(n)}} - V_{\hat\pi^{(n)}}^{r^*,T^{(n)},\gamma^{(n)}}\right)/V_{\pi^{*(n)}}^{r^*,T^{(n)},\gamma^{(n)}}
\label{eqn:eval_gen}
\end{equation}
\section{Additional Results}
\subsection{Undesirable Global Maxima of Naive Extensions of LP-IRL}\label{sec:naive_lp}
In this Section, we demonstrate that naive extensions of LP-IRL in Eq.~\ref{eqn:naive_lp_irl_multi_gamma} give an undesirable global optimum. In Fig.~\ref{apdx_fig:toy_opt_polic}, we see that under the true reward function, $r^*$, each expert policy is optimal (when the corresponding line is on the top) for a continuous interval. However, in Fig.~\ref{apdx_fig:naive_lp_policy}, we see that naive LP-IRL algorithm identifies a set of discount factors, $\tilde\Gamma=\{0,1,1\}$ (the orange and the purple dashed lines intersect at $\gamma=1$). Under the learned discount factors, expert policies $\pi^*_1$ and $\pi^*_2$ are not distinguishable from each other under any reward function, which violates assumption~\ref{assump_1}.

Furthermore, the naive LP-IRL cannot recover the structure of the true reward function In Fig.~\ref{fig:naive_lp_learned_r}, we see that the naive LP-IRL assigns a reward only to the absorbing state with $r(s_1)=r(s_2)=0$. This reward function gives a generalization error (Eq.~\ref{eqn:eval_gen}) of $0.213\pm 0.218$. In Fig.~\ref{apdx_fig:naive_lp_policy},
we see that under this learned reward function,  when $\gamma<1$, the expert policy $\pi^*_0$ performs substantially better than  $\pi^*_1,\pi^*_2$ while expert policies $\pi^*_1$, $\pi^*_2$ perform similarly to each other. In contrast, under the true reward function (Fig.~\ref{apdx_fig:toy_opt_polic}), we can find a set of discount factors such that the expert policies are distinguishable from each other.

\begin{figure}[htbp]
\centering
\makebox[\textwidth]{
 \begin{subfigure}[t]{0.45\linewidth}
     \includegraphics[width=0.8\textwidth]{imgs/toy/toy_opt_policy.png}
     \caption{The value function $V^{\gamma,r*}_{\pi^*_k}(s_0)$ of expert policies $\pi^*_0,\ \pi^*_1,\ \pi^*_2$ under the true reward $r^*$.}
     \label{apdx_fig:toy_opt_polic}
 \end{subfigure}
 ~
  \begin{subfigure}[t]{0.45\linewidth}
     \includegraphics[width=0.8\textwidth]{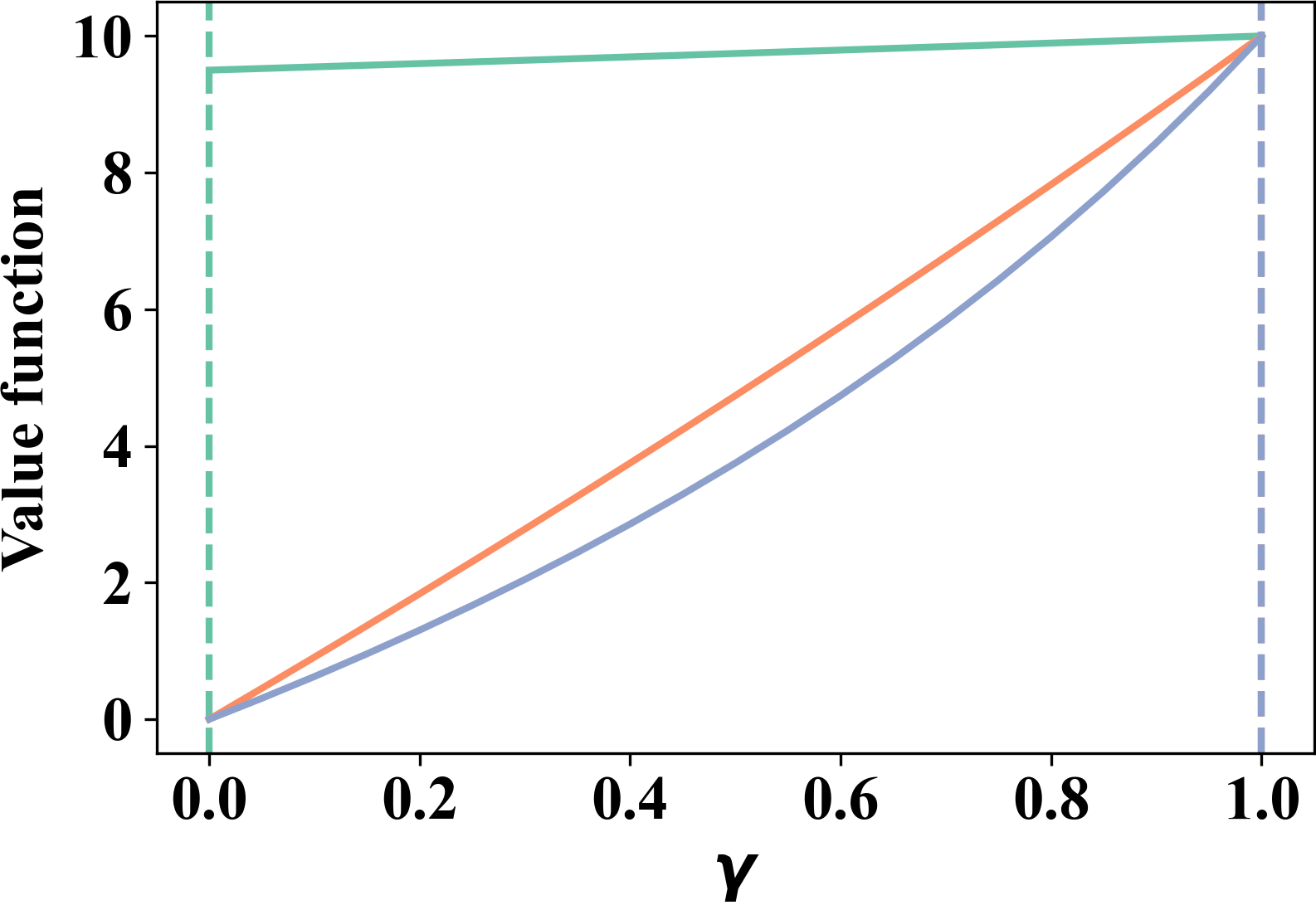}
     \caption{The value function $V^{\displaystyle{\tilde\gamma},\displaystyle{\tilde{r}}}_{\pi_k}(s_0)$ of reconstructed optimal policies $\pi_0,\ \pi_1,\ \pi_2$ under the learned reward function of the naive extension of LP-IRL (Eq.~\ref{eqn:naive_lp_irl_multi_gamma}), $\tilde{r}$.}
     \label{apdx_fig:naive_lp_policy}
 \end{subfigure}
 ~
 \begin{subfigure}{0.1\linewidth}
     \raisebox{3em}{
     \includegraphics[width=\textwidth]{imgs/toy/value_function_legend.png}}
 \end{subfigure}
 }
 \caption{ 
 Plots of the value function of the initial state under (a) the true reward function $r^*$, (b) the learned reward function of the naive extensions of LP-IRL, $\tilde{r}$: $x,\ y$-axes represent the discount factor $\gamma\in[0,1]$ and the value function of expert policies or reconstructed optimal policies, respectively. Each color represents a different policy. The dashed lines in (b) represent the learned discount factors, $\tilde{\Gamma}$. Each policy is optimal when the corresponding line is on the top. Policies are equally optimal when the lines interest. } 
 \label{apdx_fig:naive_lp}
 \end{figure}


 \subsection{Comparison of the order of true and learned discount factors} \label{sec:results_discount_factors}
 In Table~\ref{table:learned_gamma}, we provide a full comparison of the true discount factors $\Gamma^*$ and the learned discount factors $\tilde\Gamma$. Both MPLP-IRL and MPMCE-IRL recover the order of the true discount factors. 
\begin{table}[htbp]
\centering
\begin{tabular}{|l|l|l|l|l|}
\hline
\multirow{2}{*}{}          & True $\Gamma^*$  & Learned $\tilde\Gamma$ & True $\Gamma^*$ & Learned $\tilde\Gamma$ \\
                           & of MPLP-IRL      & of MPLP-IRL            & of MPMCE-IRL    & of MPMCE-IRL           \\ \hline
\multirow{3}{*}{Toy}       & $\pi^*_1$:{[}0,0.43{]}     & $\pi_1: 0$            & $\tilde\pi^*_1:0.3$  &$\tilde\pi_1:0.53$   \\
                           & $\pi^*_2$:(0.43, 0.87{]}   & $\pi_2: 0.35$         & $\tilde\pi^*_2:0.5$  &$\tilde\pi_2:0.82$   \\
                           & $\pi^*_3$:{[}0.87,1)       & $\pi_3: 1$      & $\tilde\pi^*_3:0.95$  &$\tilde\pi_3:0.98$   \\ \hline
\multirow{3}{*}{Big-Small} & $\pi^*_1$:{[}0,0.2{]}      & $\pi_1: 0.38$          & $\tilde\pi^*_1:0$  &$\tilde\pi_1:0.30$ \\
                           & $\pi^*_2$:(0.2,0.68{]}     & $\pi_3: 0.74$    & $\tilde\pi^*_2:0.45$  &$\tilde\pi_2:0.62$ \\
                           & $\pi^*_3$:{[}0.87,1{]}     & $\pi_3: 1$         & $\tilde\pi^*_3:0.9$  &$\tilde\pi_3:0.98$ \\ \hline
\multirow{3}{*}{Cliff}     & $\pi^*_1$:{[}0.06, 0.28{]} & $\pi_1: 0.04$            & $\tilde\pi^*_1:0$  &$\tilde\pi_1:0$  \\
                           & $\pi^*_2$:(0.28,0.52{]}    & $\pi_2: 0.44$          & $\tilde\pi^*_2:0.2$  &$\tilde\pi_2:0.31$ \\
                           & $\pi^*_3$:(0.52,0.95{]}    &  $\pi_3: 0.84$         & $\tilde\pi^*_3:0.52$  &$\tilde\pi_3:0.55$ \\ \hline
\end{tabular}
\caption{Table of the  true discount factors $\Gamma^*$ and the learned discount factors $\tilde\Gamma$. Both MPLP-IRL and MPMCE-IRL recover the order of the true discount factors. }
\label{table:learned_gamma}
\end{table}

\subsection{Convergence of MPLP-IRL and MPMCE-IRL}\label{sec:result_convergence}
For MPLP-IRL, we approximate the global optimum by performing a grid search on the range $[0,1]^K$. For MPMCE-IRL, it is computationally heavy to solve the optimization problem in Eq.~\ref{eqn:mce_inner} for $10^K$ times. We instead compare the best current objective value of BO to the objective value evaluated under the true reward function and discount factors. 
From  Fig.~\ref{fig:bo_trace}, we see across all domains, MPLP-IRL converges to the global maximum within $100$ iterations while MPMCE-IRL converges within $50$ iterations,

\begin{figure}[htbp]
\centering
\begin{subfigure}[t]{0.31\linewidth}
\includegraphics[width=\textwidth]{imgs/toy/lp_bo_trace.png}
\caption{MPLP-IRL: the toy domain}
\label{apdx_fig:toy_lp_bo_trace}
\end{subfigure}
~
\begin{subfigure}[t]{0.31\linewidth}
    \includegraphics[width=\textwidth]{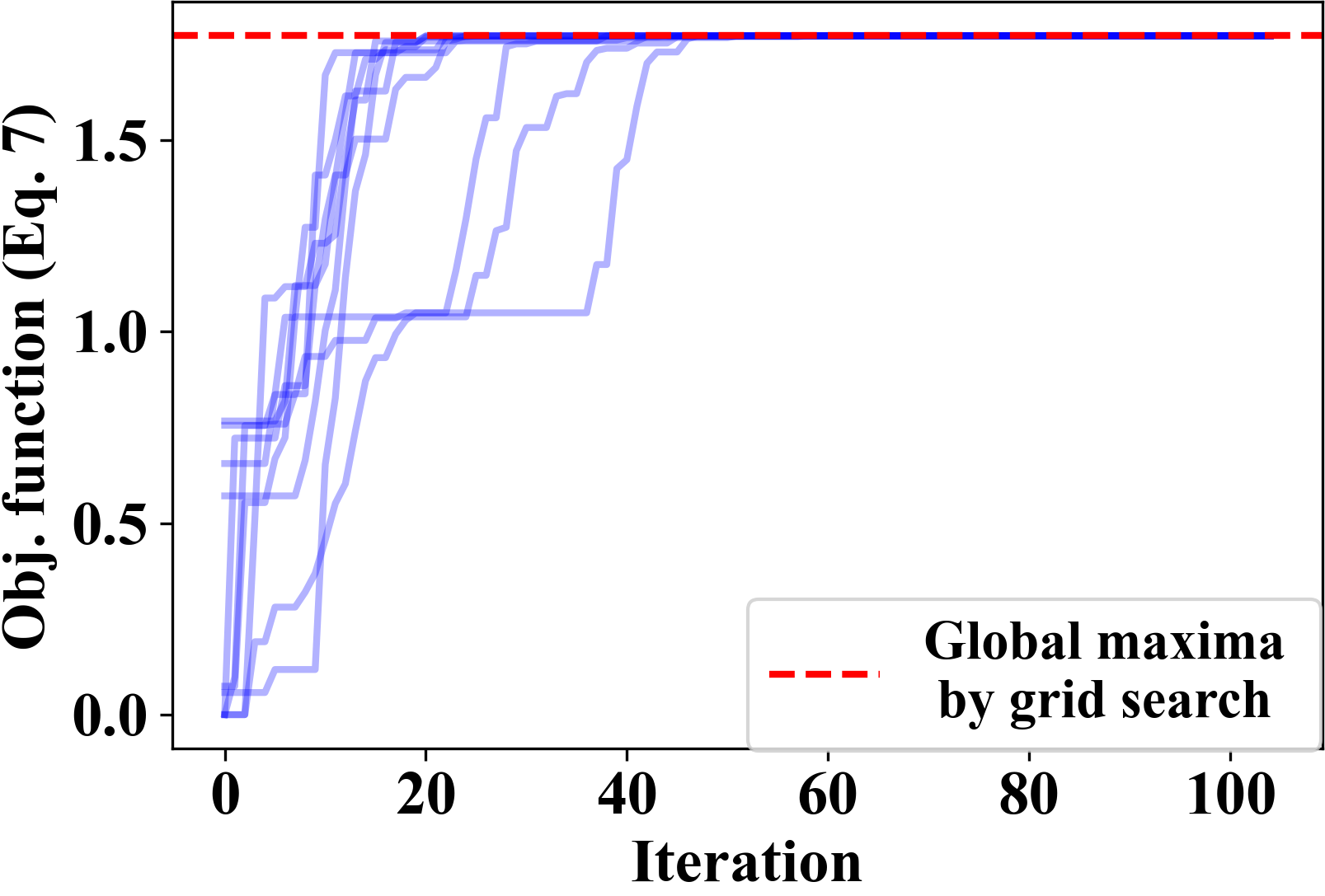}
    \caption{MPLP-IRL: the big-small domain}
    \label{apdx_fig:big_small_lp_bo_trace}
\end{subfigure}
~
\begin{subfigure}[t]{0.31\linewidth}
    \includegraphics[width=\textwidth]{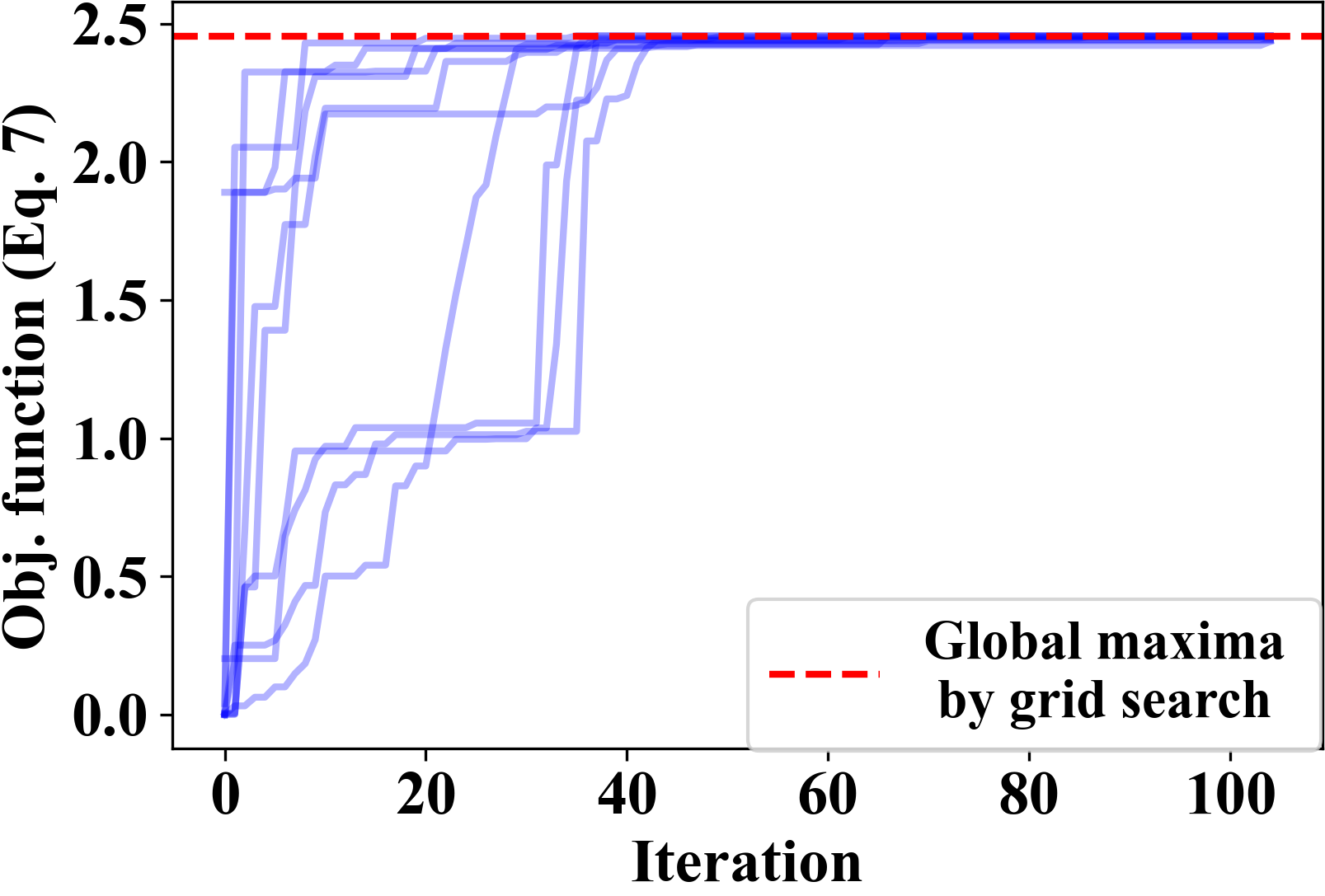}
    \caption{MPLP-IRL: the cliff domain}
    \label{apdx_fig:cliff_lp_bo_trace}
\end{subfigure}
~
\begin{subfigure}[t]{0.31\linewidth}
\includegraphics[width=\textwidth]{imgs/toy/mce_bo_trace.png}
\caption{MPMCE-IRL: the toy domain}
\label{apdx_fig:toy_mce_bo_trace}
\end{subfigure}
~
\begin{subfigure}[t]{0.31\linewidth}
    \includegraphics[width=\textwidth]{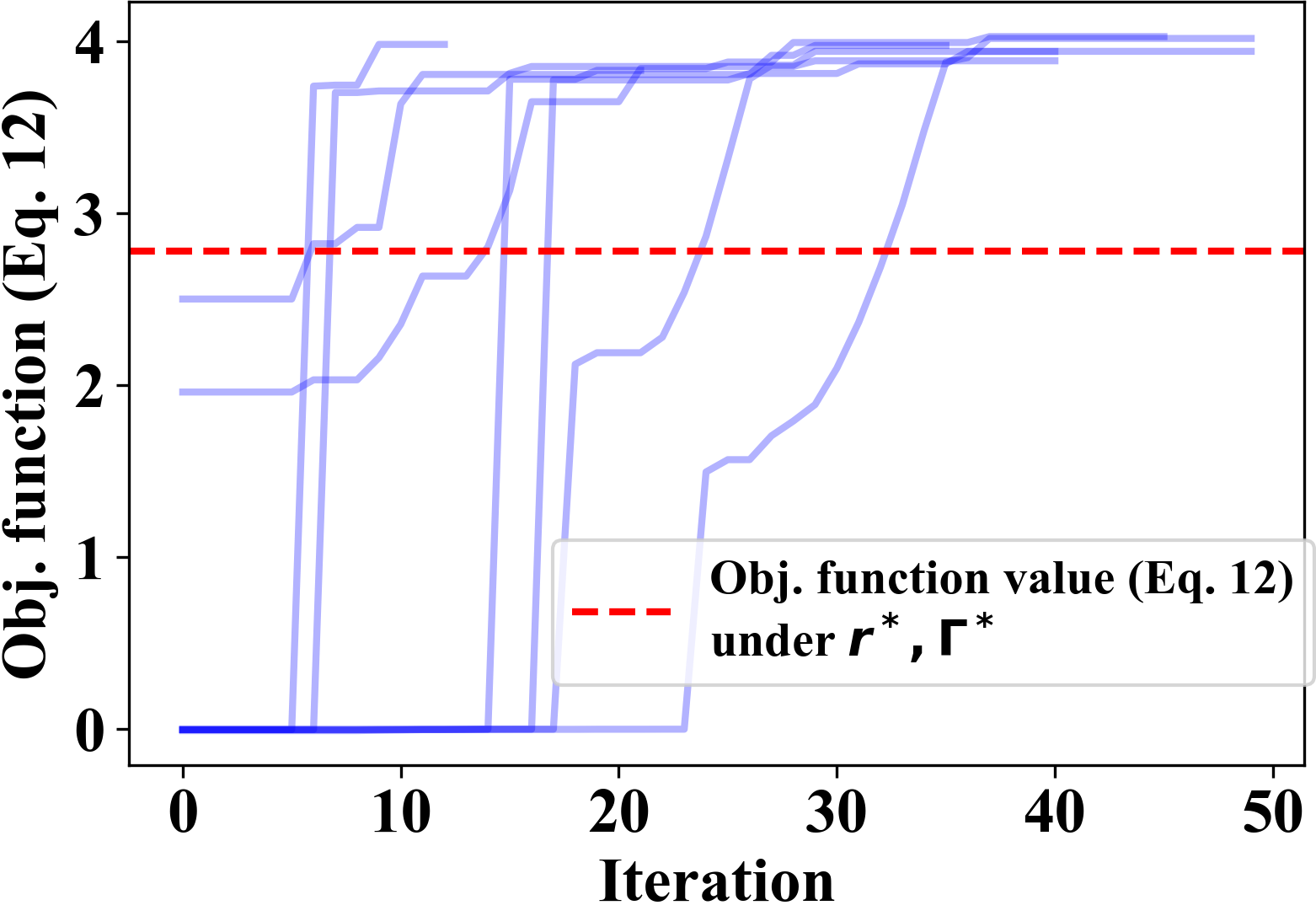}
    \caption{MPMCE-IRL:  the big-small domain}
    \label{apdx_fig:big_small_mce_bo_trace}
\end{subfigure}
~
\begin{subfigure}[t]{0.31\linewidth}
    \includegraphics[width=\textwidth]{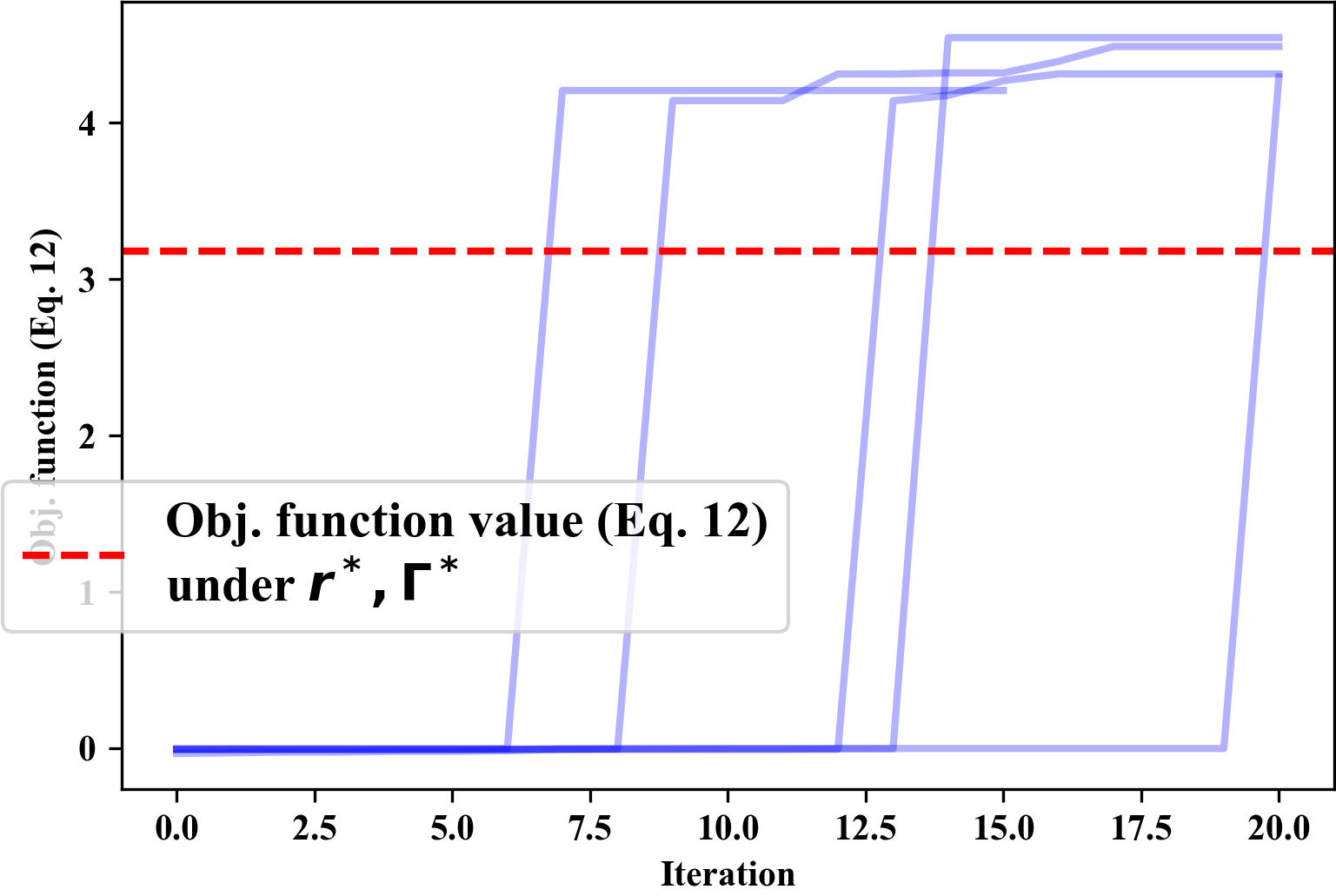}
    \caption{MPMCE-IRL: the cliff domain}
    \label{apdx_fig:cliff_mce_bo_trace}
\end{subfigure}
\caption{Trace plots of the best observed objective value of BO: $x,\ y$-axis represent the iteration and the best observed objective value, respectively. The red dashed line represents an approximate
global maximum from performing a grid search over $[0,1]^K$ or the objective value under the ground truth.}
\label{apdx_fig:bo_trace}
\end{figure}

\subsection{Identifiability and Generalizability Analysis for the Toy Domain}
In this section, we study the identifiability of the reward function for the toy domain, when the discount factors are misspecified. We only provide expert demonstrations from $2$ expert policies. We obtain $\Gamma$ on a grid space of $\Gamma\in[0,1]^K$ with an interval of $0.01$. For each set of assigned discount factors, 
we calculate $rank(\Phi|b)$ and $rank(\Phi)$. We find out that for all the given discount factors, $rank(\Phi|b)=rank(\Phi)=3|\mathcal{S}|-1$. By Proposition~\ref{prop:id}, this result implies that there exists reward functions that reconstruct expert policies for a large set of discount factors. 

We further study the generalizability of these reward functions. In Fig.~\ref{fig:id_and_gen}, we see that $\sim 40\%$ of these feasible reward functions do not generalize well to new tasks with generalization errors (Eq.~\ref{eqn:eval_gen}) larger than $0.5$, which emphasizes the importance of learning the discount factors correctly.

\begin{figure}[htbp]
\centering
\begin{subfigure}[t]{0.47\linewidth}
    \includegraphics[scale=0.5]{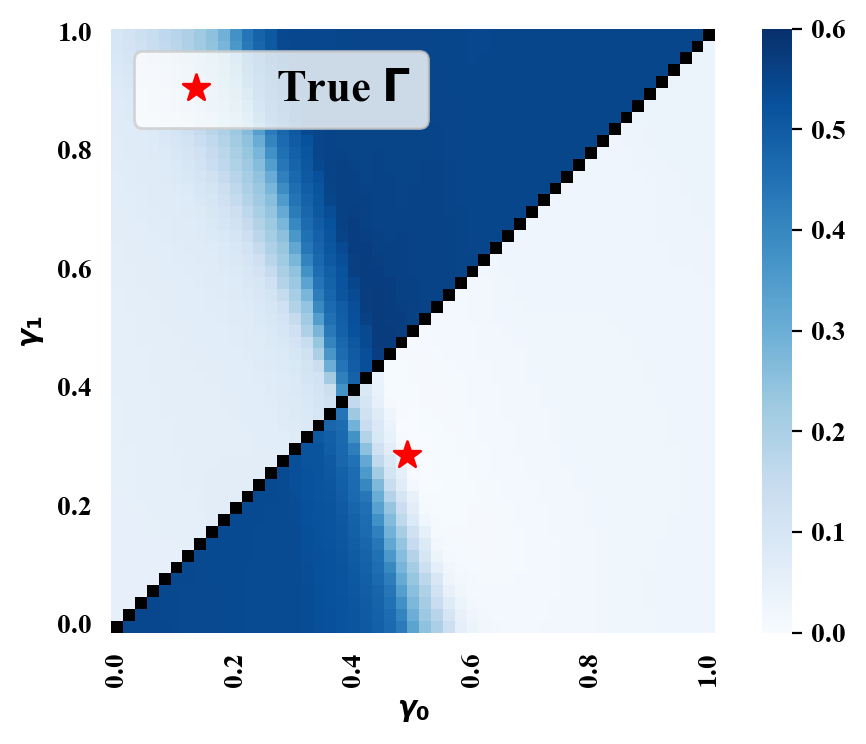}
    \caption{Heatmap of the generalization error of the reward functions (Eq.~\ref{eqn:eval_gen}) given (mis)specified discount factors.}
    \label{fig:gen}
\end{subfigure}
~
\begin{subfigure}[t]{0.47\linewidth}
    \includegraphics[width=\textwidth]{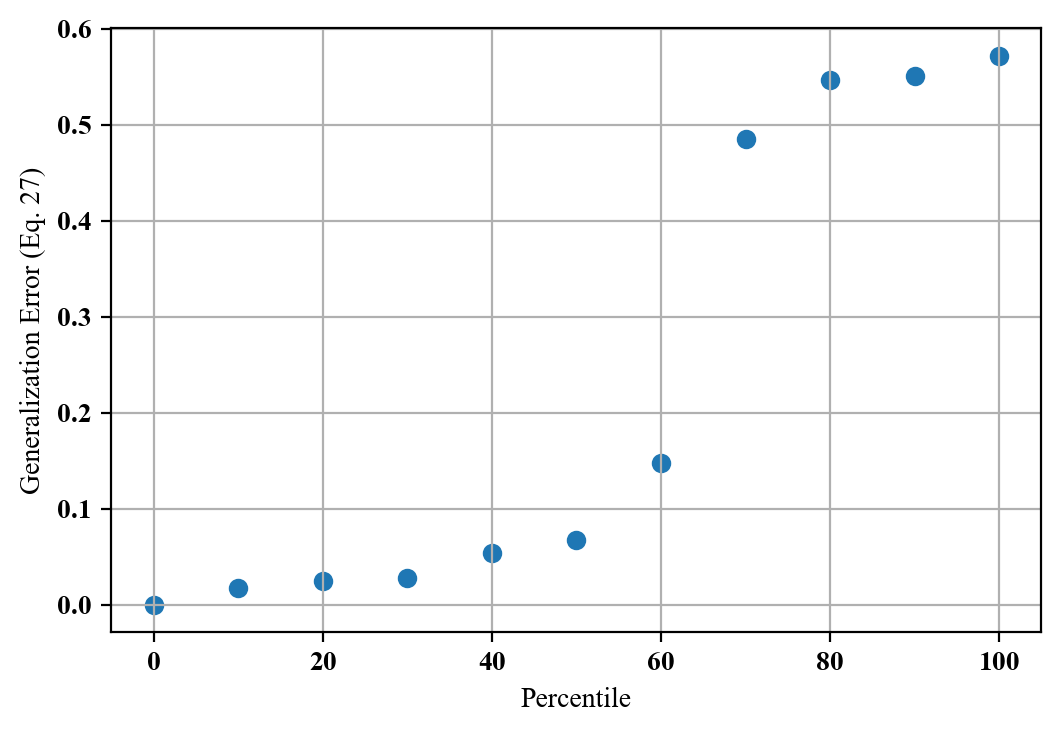}
    \caption{Percentile plot of the generalization error of the reward functions (Eq.~\ref{eqn:eval_gen}) given (mis)specified discount factors.)}
    \label{fig:id}
\end{subfigure}
\caption{The (a) heatmap and (2) percentile plot of the generalization error of the reward functions (Eq.~\ref{eqn:eval_gen}) given (mis)specified discount factors: In (a) $x,\ y$-axis represents the given $\gamma_0,\gamma_1$ respectively. 
The red star represents the true discount factors $\gamma^*_0,\gamma^*_1$. There are no feasible solutions along the diagonal. Lighter blue indicates a smaller error and vice versa. In (b) $x,\ y$-axis represents the percentile and the generalization error, respectively. }
\label{fig:id_and_gen}
\end{figure}

\end{document}